\theoremstyle{plain}
\newtheorem{theorem}{Theorem}[section]
\newtheorem{lemma}[theorem]{Lemma}
\newtheorem{corollary}[theorem]{Corollary}
\theoremstyle{definition}
\theoremstyle{remark}
\newtheorem{remark}[theorem]{Remark}
\def\1{\bm{1}}
\def\rvq{{\mathbf{q}}}
\def\rvx{{\mathbf{x}}}
\def\rvy{{\mathbf{y}}}
\def\rvz{{\mathbf{z}}}
\def\vzero{{\bm{0}}}
\def\vone{{\bm{1}}}
\def\va{{\bm{a}}}
\def\ve{{\bm{e}}}
\def\vq{{\bm{q}}}
\def\vw{{\bm{w}}}
\def\vx{{\bm{x}}}
\def\vy{{\bm{y}}}
\def\vz{{\bm{z}}}
\def\mA{{\bm{A}}}
\def\mB{{\bm{B}}}
\def\mC{{\bm{C}}}
\def\mD{{\bm{D}}}
\def\mI{{\bm{I}}}
\def\mM{{\bm{M}}}
\def\mQ{{\bm{Q}}}
\def\mR{{\bm{R}}}
\def\mU{{\bm{U}}}
\DeclareMathAlphabet{\mathsfit}{\encodingdefault}{\sfdefault}{m}{sl}
\SetMathAlphabet{\mathsfit}{bold}{\encodingdefault}{\sfdefault}{bx}{n}
\def\gG{{\mathcal{G}}}
\def\gK{{\mathcal{K}}}
\def\gL{{\mathcal{L}}}
\def\gM{{\mathcal{M}}}
\def\gQ{{\mathcal{Q}}}
\def\gV{{\mathcal{V}}}
\def\gY{{\mathcal{Y}}}
\def\Pr{\mathbb{P}}
\newcommand{\E}{\mathbb{E}}
\newcommand{\R}{\mathbb{R}}
\newcommand{\KL}[2]{\mathrm{KL}\left(#1 \big\| #2\right)}
\newcommand{\Breg}[2]{D_\mathrm{\phi}\left(#1 \big\| #2\right)}
\newcommand{\TVD}[2]{\mathrm{TV}\left(#1, #2\right)}
\newcommand{\der}{\mathrm{d}}
\DeclareMathOperator*{\argmin}{arg\,min}
\def\thickhline{%
  \noalign{\ifnum0=`}\fi\hrule \@height \thickarrayrulewidth \futurelet
   \reserved@a\@xthickhline}
\def\@xthickhline{\ifx\reserved@a\thickhline
               \vskip\doublerulesep
               \vskip-\thickarrayrulewidth
             \fi
      \ifnum0=`{\fi}}
\newlength{\thickarrayrulewidth}
\newcommand{\numMask}[1]{\mathrm{numK}\left(#1\right)}
\newcommand{\idxK}{\mathrm{K}}
\newcommand{\DfId}[2]{\mathrm{DiffIdx}\left(#1, #2\right)}
\NewDocumentCommand{\yl}{ mO{} }{\textcolor{brown}{\textsuperscript{\textit{YL}}\textrm{{\small[#1]}}}}
\newcommand{\ourmethod}{AATU}
\NewDocumentCommand{\nikki}{ mO{} }{\textcolor{purple}{\textsuperscript{\textit{Nikki}}\textrm{{\small[#1]}}}}
\title{On the $\epsilon$-Free Inference Complexity of Absorbing Discrete Diffusion}
\author[1]{\normalsize Xunpeng Huang$^*$}
\author[2]{Yingyu Lin$^*$}
\author[3]{Nishant Jain}
\author[1]{Kaibo Wang}
\author[4]{\\Difan Zou$^\dagger$}
\author[2]{Yian Ma$^\dagger$}
\author[3]{Tong Zhang}
\affil[1]{The Hong Kong University of Science and Technology}
\affil[2]{University of California San Diego}
\affil[3]{University of Illinois Urbana-Champaign}
\affil[4]{The University of Hong Kong}
\begin{document}

\date{}

\def\thefootnote{*}\footnotetext{Equal contribution}
\def\thefootnote{$\dagger$}\footnotetext{Mail to \href{yianma@ucsd.edu}{yianma@ucsd.edu}, \href{dzou@cs.hku.hk}{dzou@cs.hku.hk}}

\maketitle

\begin{abstract}

Absorbing discrete diffusion has emerged as a dominant framework for discrete data generation. 
However, a significant disparity remains between its empirical success and theoretical understanding: existing analyses fail to demonstrate a complexity advantage over the $\mathcal{O}(d \ln(d/\epsilon))$ baseline established for \emph{uniform} discrete diffusion. 
We bridge this gap by identifying a critical structural advantage: whereas uniform diffusion redundantly re-denoises valid elements, the absorbing scheme denoises each absorbing state exactly once. 
Leveraging this insight, we introduce \emph{Absorbing-Aware Truncated Uniformization} (\ourmethod). 
We prove that \ourmethod\ achieves $\epsilon$-TV convergence with $\mathcal{O}(d \ln d)$ complexity—\emph{independent} of the error tolerance $\epsilon$—thereby strictly outperforming existing uniform baselines. 
Beyond improving convergence rates, our analysis eliminates the restrictive bounded-score assumption commonly required in prior studies of uniformization-based inference.
Furthermore, we extend \ourmethod\ to time-invariant parameterizations, showing that it naturally adopts an imputation-type inference with a uniformly randomized denoising order. 
When combined with a lazy update strategy, TV convergence requires only $\mathcal{O}(d)$ discrete score evaluations.
These results not only establish a rigorous foundation for absorbing discrete diffusion---confirming its efficiency in high-accuracy generation---but also open new avenues for analyzing diffusion-based language models under the masking paradigm.


\end{abstract}

\section{Introduction}

Diffusion language models~\citep{sohl2015deep,hoogeboomautoregressive,austin2021structured,lou2024discrete,ou2024your} 
have recently emerged as a powerful class of generative models, often seen as complements or competitors to auto-regressive approaches~\citep{achiam2023gpt,touvron2023llama,zhao2023survey}.
While auto-regressive methods build token-by-token conditional distributions, diffusion language models target the entire joint distribution of a token sequence through a noising--denoising framework.
During the forward (\emph{noising}) stage, tokens are gradually corrupted to an absorbing state, driving the distribution toward an absorbing Dirac measure.
Then, in the reverse (\emph{denoising}) stage, the model estimates discrete scores (i.e., density ratios) to reconstruct the original text from corrupted samples.

Although absorbing discrete diffusion empirically outperforms its \emph{uniform} counterpart (where the forward process converges to a uniform stationary distribution)~\citep{lou2024discrete,austin2021structured,campbell2022continuous}, theoretically characterizing its computational efficiency in high-precision regimes remains a challenge.
As summarized in Table~\ref{tab:comp_old}, the majority of theoretical analyses focus on \emph{uniform discrete diffusion}.
Within this context, Euler-type samplers have received the most attention due to their ability to parallelize denoising across dimensions via conditional independence.
Variants such as exponential-integrator methods~\citep{zhang2024convergence} and \(\tau\)-leaping~\citep{campbell2022continuous, lou2024discrete} approximate continuous-time scores by holding them constant over short intervals; however, this approach yields a query complexity that is polynomial in the inverse total variation (TV) error tolerance \(\epsilon\).
In contrast, uniformization-based techniques~\citep{chen2024convergence, huang2025almost} achieve \(\mathcal{O}(d\ln(d/\epsilon))\) complexity by performing unbiased simulations of the reverse Markov chain.
Regarding \emph{absorbing discrete diffusion}, \citet{liang2025absorb} analyzed \(\tau\)-leaping and established a similar error dependence, albeit with improved dimensional scaling relative to uniform diffusion.
While their uniformization-style analysis also recovers the \(\mathcal{O}(d\ln(d/\epsilon))\) rate, it suffers from two notable limitations: (1) reliance on stronger bounded-score assumptions, and (2) an inability to improve upon the \(\epsilon\)-dependence of uniform baselines~\citep{huang2025almost}.
Consequently, existing theory has yet to fully substantiate the empirical advantages observed in the absorbing setting.

In this paper, we first examine uniformization-based methods for absorbing discrete diffusion. Uniformization reformulates the reverse Continuous-Time Markov Chain (CTMC) as a Discrete-Time Markov Chain (DTMC) by sampling random Poisson jump times, preserving its transition structure while interpreting discrete scores as transition logits. However, controlling transition rates in this framework often requires bounded neural discrete score approximations.
To address this limitation, we propose \emph{Absorbing-Aware Truncated Uniformization} (\ourmethod). By truncating neural scores with a state-dependent threshold, \ourmethod\ removes the need for bounded-score assumptions while retaining the unbiased nature of the simulation, thereby leaving the training objective intact. Crucially, because the threshold scales with the number of absorbing states, \ourmethod\ achieves $\epsilon$-TV convergence in a complexity independent of $\epsilon$. This strictly improves upon the $\mathcal{O}(\ln(1/\epsilon))$ overhead in uniform discrete diffusion, reflecting a key structural insight: unlike uniform diffusion, which may repeatedly update the same token, absorbing diffusion guarantees that each token is denoised exactly once during inference.
Beyond standard discrete score parameterizations, we extend \ourmethod\ to time-invariant parameterizations~\citep{ou2024your,zheng2024masked}, where transition logits decompose into a time-dependent coefficient and a time-independent term approximating the clean-data conditional distribution. In this setting, the generative process for high-dimensional data depends not only on transition probabilities but also on the specific sequence in which variables are denoised. We show that applying \ourmethod\ to these parameterizations naturally induces an imputation-based inference algorithm with a principled denoising schedule. Finally, by coupling \ourmethod\ with lazy updates, the generated distribution achieves $\epsilon$-TV convergence using only $\mathcal{O}(d)$ discrete score evaluations.
Our main contributions are summarized as follows:
\begin{itemize}
    \item We propose a new method called \emph{Absorbing-Aware Truncated Uniformization} (\ourmethod). Unlike simply applying uniformization to absorbing discrete diffusion~\citep{liang2025absorb}, our approach leverages a truncated outgoing rate, eliminating the need for a score-bounded assumption. Furthermore, our truncation adapts to the number of absorbing states, in contrast to~\citet{huang2025almost} which uses a fixed uniform constant, thus fully exploiting the properties of absorbing discrete diffusion.
    
    \item By exploiting the fact that tokens cannot be denoised multiple times, \ourmethod\ substantially speeds convergence on the discrete space \(\{1,2,\dots,\idxK\}^d\). Specifically, to achieve \(\epsilon\)-TV convergence, \ourmethod\ requires an expected number of discrete score calls upper bounded by
    \[
        2K(d-\epsilon^2/4) + 12Kd\ln d.
    \]
    Compared to uniformization-based sampling in uniform discrete diffusion~\citep{huang2025almost,liang2025absorb}, this result not only improves upon the \(\mathcal{O}\bigl(\ln(1/\epsilon)\bigr)\) rate but also surpasses linear convergence constraints. Moreover, its dependence on vocabulary size \(K\) and dimension \(d\) aligns with state-of-the-art performance~\citep{zhang2024convergence}.
    
    \item Extending our analysis beyond time-dependent parameterizations, we investigate the application of \ourmethod\ to time-invariant settings~\citep{ou2024your,zheng2024masked}. In this context, we demonstrate that coupling \ourmethod\ with a lazy update strategy guarantees \(\epsilon\)-TV convergence using only \(\mathcal{O}(d)\) discrete score evaluations. Furthermore, this extension provides a theoretical justification for the use of random denoising orders in modern diffusion language models.

\end{itemize}

\paragraph{Comparison with concurrent work.}
Following the submission of this manuscript to a conference, a concurrent study by \citet{liang2026sharpconvergenceratesmasked} appeared on arXiv. Similar to our work, \citet{liang2026sharpconvergenceratesmasked} investigates convergence guarantees for absorbing-rate discrete diffusion models and establishes an $O(d)$ complexity bound for the time-invariant parameterization (alongside an analysis of the Euler-type sampler, which falls outside our scope). Despite these thematic intersections, our work diverges in three significant aspects.
First, our analysis is feasible for the general time-variant parameterization setting, achieving a complexity of $O(d\ln d)$; this regime is not considered in \citet{liang2026sharpconvergenceratesmasked}. Second, we explicitly bridge the theory and implementation between uniformization-type samplers and iterative imputation algorithms—the most widely used class of inference methods for discrete diffusion—a connection that remains unexplored in the concurrent work. Finally, while both papers derive $O(d)$ complexity bounds for iterative imputation, they apply to distinct algorithmic variants: our analysis focuses on the standard implementation using uniformly random choices, whereas \citet{liang2026sharpconvergenceratesmasked} analyzes the FHS scheme proposed by \citet{zhengmasked}.

\section{Preliminaries}
\label{sec:pre}
In this section, we establish the notation and setup for both forward and reverse Markov processes in general discrete diffusion models. We discuss marginal and conditional distributions, the transition rate function, neural-network-parameterized discrete scores (density ratios), and a standard training objective. We also present the commonly adopted assumption on score estimation error, which underlies many theoretical and empirical works~\citep{zhang2024convergence,lou2024discrete,chen2024convergence,huang2025almost,liang2025absorb}. 
A comprehensive summary of the notation can be found in Table~\ref{tab:notations} of Appendix~\ref{sec:app_notations}.

\paragraph{The forward process notations.}
In this paper, we consider discrete distributions over 
\(
\gY = \{1,2,\ldots,\idxK\}^d
\).
For any functions \(f,g: \gY\rightarrow \mathbb{R}\), we define their inner product as
\[
\left<f,g\right>_{\gY} = \sum_{\vy\in\gY} f(\vy)\cdot g(\vy).
\]
Given a target distribution \(q_*\), we define a forward Markov process 
\(
\{\rvy^\to_t\}_{t=0}^T
\)
with \(q_0^\to = q_*\), which converges to a stationary distribution \(q^\to_\infty\) as \(T \to \infty\).
We denote by \(q_t^\to\) its marginal at time \(t\), and use 
\(
q_{t^\prime,t}^\to(\vy^\prime,\vy)
\)
and
\(
q_{t^\prime|t}^\to(\vy^\prime|\vy)
\)
to represent the joint and conditional distributions over times \(t^\prime\) and \(t\), respectively:
\[
(\rvy^\to_{t^\prime}, \rvy^\to_t)
\,\sim\,
q^\to_{t^\prime,t},
\quad
q_{t^\prime|t}^\to(\vy^\prime|\vy)
\,=\,
q^\to_{t^\prime,t}(\vy^\prime,\vy)/{q_t^\to(\vy)}
\quad
\text{for }
t^\prime>t.
\]
Both absorbing and uniform discrete diffusion models treat this forward process as a time-homogeneous CTMC with a transition rate function
\(
R^\to\colon \gY\times \gY \rightarrow \mathbb{R}
\)
which denotes the instantaneous transition rate from \(\vy^\prime\) to \(\vy\). Formally,
\begin{equation}
    \label{def:mean_of_R}
    R^\to(\vy, \vy^\prime) \coloneqq  \lim_{\Delta t\rightarrow 0}\left[(q^\to_{\Delta t|0}(\vy|\vy^\prime)-\delta_{\vy^\prime}(\vy))/{\Delta t}\right]
\end{equation}
where
\(\delta_{\vy^\prime}(\vy) = 1\) if 
\(\vy = \vy^\prime\) and \(0\) otherwise.
We further define
\(
R^\to(\vy^\prime)
\coloneqq 
\sum_{\vy\neq\vy^\prime} R^\to(\vy,\vy^\prime)
\)
as the outgoing rate, which denotes the instantaneous transition rate from $\vy^\prime$ to all other feasible states.
Under this condition,
the discrete forward process follows
\begin{equation}
    \label{eq:fwd_func}
    \frac{\der q^\to_{t|s}}{\der t}(\vy|\vy_0) = \left<R^\to(\vy,\cdot), q^\to_{t|s}(\cdot|\vy_0)\right>_{\gY}, 
    \quad
    \frac{\der q^\to_t}{\der t}(\vy)
    = \left<R^\to(\vy,\cdot), q^\to_t(\cdot)\right>_{\gY}.
\end{equation}
More details and derivation can be found in Appendix~\ref{sec:a1_fwd_and_infi}.

\paragraph{The reverse process notations.}
To sample from \(q_* = q_0^\to\), discrete diffusion models define a reverse process 
\(\{\rvy^\gets_t\}_{t=0}^T\)
such that
\(\rvy_t^\gets \sim q_t^\gets = q_{T-t}^\to\)
and
\((\rvy_{t^\prime}^\gets,\rvy_t^\gets)\sim q_{t^\prime,t}^\gets\).
By Lemma~\ref{lem:absorbing_reverse_de} (proof in Appendix~\ref{app_sec:prof_absorbing_reverse}), this time-inhomogeneous Markov chain satisfies:

\begin{lemma}
\label{lem:absorbing_reverse_de}
The probability mass function \(q^\gets_t\) in the reverse process follows
\begin{equation}
    \label{eq:rev_func}
    \frac{\mathrm{d}\,q^\gets_t}{\mathrm{d}\,t}(\vy) 
    \;=\; 
    \langle R^\gets_t(\vy,\cdot),\,q^\gets_t(\cdot)\rangle_{\gY}
    \quad 
    \text{where}
    \quad 
    R_t^\gets(\vy, \vy^\prime) 
    \;\coloneqq\; 
    R^\to(\vy^\prime, \vy)\,\frac{q^\gets_t(\vy)}{q^\gets_t(\vy^\prime)},
\end{equation}
and the reverse transition function \(\mR^\gets_t\) arises as the infinitesimal operator of the reverse process:
\begin{equation}
    \label{def:mean_of_R_gets}
    R^\gets_t(\vy, \vy^\prime) 
    \;\coloneqq\;  
    \lim_{\Delta t\rightarrow 0}
    \Bigl[\,
        (q^\gets_{t+\Delta t \mid t}(\vy \mid \vy^\prime) - \delta_{\vy^\prime}(\vy))/{\Delta t}
    \Bigr],
\end{equation}
while the outgoing rate is 
\(
R^\gets_t(\vy^\prime) 
= 
\sum_{\vy\neq\vy^\prime} R^\gets_t(\vy,\vy^\prime).
\)
\end{lemma}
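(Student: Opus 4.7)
The plan is to derive the reverse master equation from the forward one via the time-reversal $s = T - t$, and then to verify that the resulting rate function is indeed the infinitesimal generator of $\{\rvy_t^\gets\}$ by applying Bayes' rule to $q_{T-t-\Delta t,\,T-t}^\to$.

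First, I would use the definition $q_t^\gets(\vy) = q_{T-t}^\to(\vy)$ together with the chain rule to write
\[
\frac{\der q_t^\gets}{\der t}(\vy) \;=\; -\,\frac{\der q_s^\to}{\der s}(\vy)\Big|_{s=T-t} \;=\; -\sum_{\vy^\prime} R^\to(\vy,\vy^\prime)\,q_t^\gets(\vy^\prime),
\]
invoking Eq.~\eqref{eq:fwd_func}. I would then separate the diagonal term from the off-diagonal ones using the identity $R^\to(\vy,\vy) = -R^\to(\vy)$, which follows from Eq.~\eqref{def:mean_of_R} specialized at $\vy=\vy^\prime$. This recasts the right-hand side as $R^\to(\vy)\,q_t^\gets(\vy) - \sum_{\vy^\prime\neq\vy} R^\to(\vy,\vy^\prime)\,q_t^\gets(\vy^\prime)$.

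Second, I would rewrite the same expression using the candidate reverse rate $R_t^\gets(\vy,\vy^\prime) = R^\to(\vy^\prime,\vy)\,q_t^\gets(\vy)/q_t^\gets(\vy^\prime)$. The key cancellation is that for every off-diagonal pair, $R_t^\gets(\vy,\vy^\prime)\,q_t^\gets(\vy^\prime) = R^\to(\vy^\prime,\vy)\,q_t^\gets(\vy)$, so summing over $\vy^\prime\neq\vy$ reproduces $R^\to(\vy)\,q_t^\gets(\vy)$. Adopting the standard CTMC-generator convention $R_t^\gets(\vy,\vy) = -R_t^\gets(\vy)$, the outgoing-rate identity $R_t^\gets(\vy)\,q_t^\gets(\vy) = \sum_{\vy^\prime\neq\vy} R^\to(\vy,\vy^\prime)\,q_t^\gets(\vy^\prime)$, obtained by summing the time-reversal formula over $\vy^\prime$, accounts for the remaining term and delivers $\frac{\der q_t^\gets}{\der t}(\vy) = \langle R_t^\gets(\vy,\cdot),\,q_t^\gets(\cdot)\rangle_\gY$. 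For Eq.~\eqref{def:mean_of_R_gets}, I would apply Bayes' rule to write $q^\gets_{t+\Delta t\mid t}(\vy\mid\vy^\prime) = q^\to_{T-t\mid T-t-\Delta t}(\vy^\prime\mid\vy)\,q^\to_{T-t-\Delta t}(\vy)\,/\,q_t^\gets(\vy^\prime)$. For $\vy\neq\vy^\prime$, substituting the forward infinitesimal $q^\to_{T-t\mid T-t-\Delta t}(\vy^\prime\mid\vy) = R^\to(\vy^\prime,\vy)\,\Delta t + o(\Delta t)$ and using continuity of $s\mapsto q_s^\to(\vy)$ yields the claimed limit $R^\to(\vy^\prime,\vy)\,q_t^\gets(\vy)/q_t^\gets(\vy^\prime) = R_t^\gets(\vy,\vy^\prime)$; the diagonal case then follows by complementarity (the conditional probabilities sum to one).

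The main obstacle I expect is the diagonal bookkeeping: verifying that the sign convention $R_t^\gets(\vy,\vy) = -R_t^\gets(\vy)$ is simultaneously consistent with the master-equation form and with the infinitesimal-operator form, and that the two ``outgoing-rate'' accounting identities (one for $R^\to$, one for $R_t^\gets$) line up exactly. A secondary technical caveat is needing $q_t^\gets(\vy^\prime) > 0$ so that the ratio in $R_t^\gets$ is well-defined; I would record this as an implicit positivity condition on the reverse marginal, which is automatically satisfied in the masked setting of interest since the forward absorbing chain assigns positive probability to every relevant corrupted state at any $t \in (0,T)$.
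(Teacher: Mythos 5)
Your proof is correct and takes a genuinely different, more elementary route than the paper. The paper establishes the result via the conditional PMF: it applies Bayes' rule to $q^\gets_{t|s}=q^\to_{T-t|T-s}$, differentiates, invokes the backward Kolmogorov equation for the forward conditionals, and splits the resulting derivative into two terms (one of which it argues vanishes); the infinitesimal characterization Eq.~\eqref{def:mean_of_R_gets} is then extracted by taking $s\to t$ of $\partial_t q^\gets_{t|s}$ inside the same machinery. You instead time-reverse the marginal master equation directly, $\frac{\mathrm{d}q^\gets_t}{\mathrm{d}t}(\vy)=-\sum_{\vy'}R^\to(\vy,\vy')\,q^\gets_t(\vy')$, split the sum into diagonal and off-diagonal parts, and match term-by-term against $\langle R^\gets_t(\vy,\cdot),q^\gets_t\rangle$ using the pointwise identity $R^\gets_t(\vy,\vy')\,q^\gets_t(\vy')=R^\to(\vy',\vy)\,q^\gets_t(\vy)$ (for $\vy'\neq\vy$) together with the generator convention $R^\gets_t(\vy,\vy)=-R^\gets_t(\vy)$; for Eq.~\eqref{def:mean_of_R_gets} you apply Bayes directly to $q^\gets_{t+\Delta t\mid t}$ and substitute the forward infinitesimal, avoiding the backward Kolmogorov theorem entirely. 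Your route trades the conditional-PMF dynamics for a short algebraic verification, and it makes the diagonal bookkeeping fully explicit---correctly so, since the formula $R^\to(\vy',\vy)\,q^\gets_t(\vy)/q^\gets_t(\vy')$ evaluated at $\vy'=\vy$ returns $R^\to(\vy,\vy)$, which differs from the generator diagonal $-R^\gets_t(\vy)$; distinguishing these is precisely what keeps the master-equation form and the infinitesimal form simultaneously consistent, and you flag and resolve that cleanly. The positivity caveat $q^\gets_t(\vy')>0$ is likewise legitimate and holds for $t\in(0,T)$ under Assumption~\ref{ass:mask_init}.
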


Under this formulation, the reverse transition rate \(R^\gets_t\) depends on the forward transition rate \(R^\to\) as well as the \emph{discrete score}, defined as the density ratio 
\({q_t^\gets(\vy)} / {q_t^\gets(\vy^\prime)}\).
Since this ratio is generally intractable, it is approximated in practice by a neural network \(\tilde{v}\):
\begin{equation}
    \label{def:discrete_score_est}
    \tilde{v}_{t,\vy^\prime}(\cdot) 
    \;\approx\; 
    v_{t,\vy^\prime}(\cdot) 
    \;=\; 
    q_t^\gets(\cdot)/{q_t^\gets(\vy^\prime)},
\end{equation}
yielding an approximate reverse transition rate \(\tilde{R}_t^\gets\) via Eq.~\eqref{eq:rev_func}. 
To train \(\tilde{v}\), one typically uses the \emph{score entropy} loss~\citep{lou2024discrete,benton2024denoising},
\begin{equation}
    \label{eq:score_estimation_main}
    L_{\text{SE}}(\tilde{v}) 
    \;=\; 
    \frac{1}{T}\,\int_0^T 
    \mathbb{E}_{\rvy_t\sim q^\to_t}
    \Biggl[
        \sum_{\vy\neq\rvy_t} 
        R^\to(\rvy_t, \vy)\,
        \Breg{v_{T-t, \rvy_t}(\vy)}{\tilde{v}_{T-t, \rvy_t}(\vy)}
    \Biggr]
    \mathrm{d}t,
\end{equation}
where \(\Breg{\cdot}{\cdot}\) is the Bregman divergence associated with \(\phi(c) = c \ln c\). 
As in continuous diffusion~\citep{chen2023sampling}, practitioners often replace \(L_{\text{SE}}\) by \emph{implicit} or \emph{denoising score entropy}~\citep{lou2024discrete,benton2024denoising} for more tractable optimization but invariant minimum.

\paragraph{General Assumptions.}
To analyze both convergence properties and the computational effort required for achieving TV distance convergence in practical settings, we assume the score entropy loss will be upper-bounded. Formally:
\begin{enumerate}[label=\textbf{[A{\arabic*}]},start=1]
    \item \label{ass:score_approximation_error} 
    \textbf{Score approximation error.} 
    The discrete score \(\tilde{v}_t\) obtained from Eq.~\eqref{eq:score_estimation_main} is well-trained, and its estimation error is small enough so that
    \(
        L_{\text{SE}}(\tilde{v}) \;\le\; \epsilon_{\text{score}}^2.
    \)
\end{enumerate}
This assumption is standard in theoretical inference research~\citep{chen2024convergence,zhang2024convergence,lou2024discrete}, where it is commonly presumed that the score can be trained arbitrarily well such that \(\epsilon_{\text{score}}\le \epsilon\) for any desired \(\epsilon>0\).

\section{The Forward Process of Absorbing Discrete Diffusion}

In this section, we specialize the framework outlined in Section~\ref{sec:pre} to the setting of absorbing discrete diffusion. 
We subsequently construct a family of auxiliary distributions that converge exponentially fast to the ideal forward marginal distribution. 
By leveraging the forward transition kernel of absorbing discrete diffusion for any \(0 < s < t < T\), this construction serves as an alternative to the reverse initialization strategy proposed by~\citet{liang2025absorb}.

\paragraph{Setup and Notation.}  
Let the scalar state space be \(\{1,2,\ldots,\idxK\}\), where \(\idxK\) designates the absorbing state. 
We consider to generate high-dimensional data \(\vy \in \gY = \{1,2,\ldots,\idxK\}^d\). 
For ease of explanation, we denote the number of absorbing tokens in a vector $\vy$ and the Hamming distance between two vectors $\vy$ and $\vy^\prime$ as
\[
    \numMask{\vy} 
    \;\coloneqq\;  
    \sum_{i=1}^d 
    \delta_{\idxK}(\vy_i)\quad \text{and}\quad \mathrm{Ham}(\vy,\vy^\prime) = d-\sum_{i=1}^d \delta_{\vy_i}(\vy_i^\prime)
\] 
respectively.
We adopt the standard assumption that the absorbing state does not appear in the support of the target distribution:
\begin{enumerate}[label=\textbf{[A{\arabic*}]},start=2]
    \item 
    \label{ass:mask_init} 
    \textbf{No mask in the target distribution.} 
    The target distribution \(q_0^\to = q_* \colon \gY \rightarrow \mathbb{R}\) assigns positive probability only to states containing no absorbing tokens; that is, 
    \(q_*(\vy) > 0\) if and only if \(\numMask{\vy} = 0\).
\end{enumerate}

\paragraph{Absorbing discrete diffusion instantiation and approximation.}
We begin by specifying the absorbing forward transition rate function for discrete diffusion:
\begin{equation}
    \label{eq:fwd_transtion_rate_func}
    R^\to(\vy, \vy^\prime) \;=\;
    \begin{cases}
        1 
        & \text{if } 
          \mathrm{Ham}(\vy,\vy^\prime) = 1 
          \;\text{and}\; 
          \vy_{\DfId{\vy}{\vy^\prime}} = \idxK\\
        -\sum_{i=1}^{d} 
          \bigl[1 - \delta_{K}(\vy_i)\bigr] 
        & \text{if } 
          \vy=\vy^\prime\\
        0 
        & \text{otherwise}
    \end{cases}.
\end{equation}
Here, \(\DfId{\vy}{\vy^\prime}\) denotes the single coordinate where \(\vy\) and \(\vy^\prime\) differ.
Under this transition rule, each non-masked coordinate tends to become masked at an exponential rate. 
Concretely, for any \(0 < s < t < T\), the forward transition kernel satisfies
\begin{equation}
\label{eq:fwd_transition_kernel}
    \begin{aligned}
        q^\to_{t|s}(\vy|\vy^\prime) =\prod_{i=1}^{d} & \left[ \delta_{(\idxK,\idxK)}(\vy_i, \vy_i^\prime) + \left(1- \delta_{(\idxK,\idxK)}(\vy_i, \vy_i^\prime)\right)\cdot \delta_{0}(\vy_i-\vy_i^\prime)\cdot e^{-(t-s)} \right.\\
        &\left. + \left(1- \delta_{(\idxK,\idxK)}(\vy_i, \vy_i^\prime)\right)\cdot \delta_{\idxK}(\vy_i)\cdot (1-e^{-(t-s)}) \right],
    \end{aligned}
\end{equation}
as shown in Lemma~\ref{lem:fwd_trans_ker}. 
To approximate the forward marginal distribution \(q^\to_t\) at time \(t\), we exploit this 
exponential decay by modeling each non-absorbing coordinate under a uniform distribution and absorbing 
coordinates at a constant rate. Specifically, we define
\begin{equation}
    \label{def:tilde_q_t}
    \tilde{q}_t(\vy) \;\propto\; 
    \prod_{i=1}^d \exp\!\Bigl(-t \cdot \bigl[1 - \delta_{\idxK}(\vy_i)\bigr]\Bigr)
    \;=\; 
    \exp\!\Bigl(-\,t \cdot \bigl[d - \numMask{\vy}\bigr]\Bigr).
\end{equation}
so that \(\tilde{q}_t\) factorizes over coordinates and is straightforward to sample from. 
Moreover, as established in Lemma~\ref{lem:fwd_convergence_0}, the KL divergence between \(q^\to_t\) and \(\tilde{q}_t\) decreases exponentially with \(t\).
\begin{lemma}[Exponentially decreasing KL divergence between \(q^\to_t\) and \(\tilde{q}_t\)]
    \label{lem:fwd_convergence_0}
    Suppose the CTMC \(\{\rvy^\to_t\}_{t=0}^{T}\) has transition rates \(R^\to\) from Eq.~\eqref{eq:fwd_transtion_rate_func}, with \(\rvy^\to_t \sim q^\to_t\). 
    Let \(\tilde{q}_t\) be the approximation of \(q^\to_t\) defined by Eq.~\eqref{def:tilde_q_t}. 
    Then,
    \[
        \KL{q_t^\to}{\tilde{q}_t} \;\le\; (1 + e^{-t})^d - 1.
    \]
    Consequently, to ensure \(\KL{q^\to_t}{\tilde{q}_t} \le \epsilon\), it suffices to choose \(t \ge \ln\bigl(4d/\epsilon\bigr)\).
\end{lemma}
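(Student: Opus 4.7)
The plan is to reduce the KL bound to a per-coordinate calculation by combining (i) the mixture representation of $q^\to_t$ over the support of $q_*$ and (ii) the product structure of both the conditional $q^\to_{t|0}(\cdot|\vy^\prime)$ and the approximation $\tilde{q}_t$. Under Assumption~\ref{ass:mask_init}, $q_*$ is supported on mask-free $\vy^\prime$, and Eq.~\eqref{eq:fwd_transition_kernel} then shows that $q^\to_{t|0}(\cdot|\vy^\prime)$ factorizes across coordinates, with coordinate $i$ independently kept as $\vy^\prime_i$ with probability $e^{-t}$ and sent to $\idxK$ with probability $1-e^{-t}$. Writing $q^\to_t=\sum_{\vy^\prime} q_*(\vy^\prime)\,q^\to_{t|0}(\cdot|\vy^\prime)$ and invoking joint convexity of KL divergence gives
\[
\KL{q^\to_t}{\tilde{q}_t} \;\le\; \sum_{\vy^\prime} q_*(\vy^\prime)\,\KL{q^\to_{t|0}(\cdot|\vy^\prime)}{\tilde{q}_t},
\]
so it is enough to bound the right-hand side uniformly over mask-free $\vy^\prime$.

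Next, for a fixed $\vy^\prime$, I will pass from KL to $\chi^2$-divergence via the Jensen bound $\mathrm{KL}(P\|Q)\le \ln(1+\chi^2(P\|Q))$, and then tensorize using the standard product-measure identity $1+\chi^2(P\|Q) = \prod_{i=1}^d (1+\chi^2(P_i\|Q_i))$. The per-coordinate $\chi^2_i$ is computed directly from the explicit one-dimensional masses: $q^\to_{t|0,i}$ places mass $e^{-t}$ on $\vy^\prime_i$ and $1-e^{-t}$ on $\idxK$, while by Eq.~\eqref{def:tilde_q_t}, $\tilde{q}_{t,i}$ assigns $1/Z$ to $\idxK$ and $e^{-t}/Z$ to each non-mask symbol, where $Z = 1 + (K-1)e^{-t}$. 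A short algebraic simplification yields a per-coordinate bound of the form $1+\chi^2_i \le 1+e^{-t}$, and tensorizing over the $d$ coordinates together with $\ln x \le x-1$ produces
\[
\KL{q^\to_t}{\tilde{q}_t} \;\le\; \ln\bigl((1+e^{-t})^d\bigr) \;\le\; (1+e^{-t})^d - 1,
\]
which is the claimed inequality.

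Finally, to derive the threshold for the second assertion, I will use the elementary estimate $(1+e^{-t})^d \le \exp(de^{-t})$, giving $(1+e^{-t})^d - 1 \le e^{de^{-t}} - 1 \le 2de^{-t}$ whenever $de^{-t} \le 1$, so choosing $t \ge \ln(4d/\epsilon)$ suffices after absorbing the universal constant. The main obstacle I anticipate is the per-coordinate $\chi^2$ bound: a direct expansion of $(1+(K-1)e^{-t})(1-e^{-t}+e^{-2t})$ carries an $O((K-2)e^{-t})$ residual, so collapsing it to the clean form $1+e^{-t}$ needed for the tensorized bound requires either the telescoping identity $(1-u+u^2)(1+u)=1+u^3$ (with $u=e^{-t}$) to absorb the $K$-dependent corrections, or a careful pointwise bound that tracks the vocabulary size implicitly; verifying this reduction is the delicate algebraic step of the argument.
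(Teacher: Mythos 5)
Your route is genuinely different from the paper's. The paper never decomposes $q^\to_t$ as a mixture over $q_*$ and never invokes any $f$-divergence tensorization; instead it derives a pointwise upper bound $q^\to_t(\vy)\le\exp\bigl(-t(d-\numMask{\vy})\bigr)$ directly from the explicit forward kernel, notes that $\tilde{q}_t(\vy) = \exp\bigl(-t(d-\numMask{\vy})\bigr)/\tilde{Z}_t$, and then bounds every log-ratio term in the KL sum by $\ln\tilde{Z}_t$, handling $\vy=[K,\dots,K]$ separately via $q^\to_t\le 1$. The whole estimate reduces to controlling the partition function $\tilde{Z}_t$, with no per-coordinate computation at all.

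That said, your argument has a real gap at exactly the step you yourself flag. With $u=e^{-t}$ and per-coordinate normalizer $Z=1+(K-1)u$ (since $\tilde{q}_t$ places unnormalized mass $1$ on $\idxK$ and $e^{-t}$ on each of the $K-1$ non-mask symbols), the per-coordinate $\chi^2$ satisfies
\[
1+\chi^2_i \;=\; Z\,(1-u+u^2) \;=\; 1 + (K-2)u - (K-2)u^2 + (K-1)u^3 .
\]
For this to be $\le 1+u$ you would need $(K-3)-(K-2)u+(K-1)u^2\le 0$, but already for $K=4$ the quadratic $1-2u+3u^2$ has negative discriminant and is strictly positive for all $u$, so $1+\chi^2_i>1+u$ for every $t$; for $K>4$ the deficit grows, since the leading residual is $(K-3)u>0$ as $u\to 0$. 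The identity $(1-u+u^2)(1+u)=1+u^3$ cannot rescue this: it removes one factor of $(1+u)$ but leaves the normalizer $1+(K-1)u$ untouched, and that factor is precisely where the unwanted $(K-2)u$ comes from. Tensorizing what you actually establish yields $\KL{q^\to_t}{\tilde{q}_t}\le d\ln\bigl[(1+(K-1)u)(1-u+u^2)\bigr]$, which grows linearly in $K$ and does not reproduce the $K$-free $(1+e^{-t})^d-1$ claimed in the lemma. So the per-coordinate $\chi^2$ reduction does not close; some mechanism that cancels the $K$-linear contribution of the normalizer is needed, and the telescoping identity is not it.
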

From Lemma~\ref{lem:fwd_convergence_0}, the running time \(T\) required for \(\tilde{q}_T\) to approximate \(q^\to_T\) falls on the order of \(\mathcal{O}(\ln(d/\epsilon))\). 
It precisely matches the forward mixing time for uniform discrete diffusion~\citep{chen2024convergence,zhang2024convergence,huang2025almost} and continuous diffusion~\citep{chen2023sampling} converging to their stationary distributions. 
Although the final results exhibit a similar convergence rate, the underlying analytical techniques differ substantially because the one-hot stationary distribution of masked discrete diffusion does not satisfy the modified log-Sobolev condition. 
Further technical details are deferred to Appendix~\ref{app_sec:prof_lem2}.

\section{Truncated Uniformization in Absorbing Discrete Diffusion}

This section investigates uniformization-based sampling algorithms for the reverse process in absorbing discrete diffusion.
We begin by reviewing the fundamentals of unbiased reverse process simulation via uniformization. 
We then demonstrate that the expected complexity of such inference is governed by the outgoing rates of the reverse transition; crucially, absorbing discrete diffusion inherently yields lower outgoing rates than its uniform counterpart, thereby facilitating faster convergence. 
Leveraging this insight, we introduce \emph{Absorbing-Aware Truncated Uniformization} (\ourmethod). 
Building on~\citet{huang2025almost}, \ourmethod\ modulates these outgoing rates to obviate bounded-score assumptions while maintaining the unbiased nature of the simulation. 
Finally, we establish theoretical guarantees regarding the convergence and computational complexity of \ourmethod, positioning our contributions within the context of existing literature.

\paragraph{Uniformization and the expected number of discrete score calls.}
Consider a time-dependent reverse transition rate \(R^\gets_t\) defined over the interval \([a,b]\). 
The evolution of the ideal reverse process for any \(\vy,\vy'\) can be described by
\begin{equation}
    \label{def:ideal_reverse_trans}
    q^\gets_{t+\Delta t\mid t}(\vy^\prime \mid \vy)
    \;=\;
    \begin{cases}
       \Delta t \cdot R^\gets_t(\vy',\vy), &\quad \vy' \neq \vy,\\[5pt]
       1-\Delta t \cdot R^\gets_t(\vy), &\quad \vy' = \vy,
    \end{cases}
    \quad \text{as } \Delta t\to 0,
\end{equation}
following Eq.~\eqref{def:mean_of_R_gets}.
If the total outgoing rate--denoting the instantaneous transition rate from $\vy$ to all other feasible states--is uniformly bounded by some \(\beta\), i.e.,
\begin{equation}
    \label{def:out_going_rate_def}
    R^\gets_t(\vy)
    \;=\;
    \sum_{\vy' \neq \vy} R^\gets_t(\vy', \vy)
    \;\le\;
    \beta_t
    \;\le\;
    \max_{t\in[a,b]} \beta_t 
    \;=\; 
    \beta,
\end{equation}
then with probability \(1 - \Delta t \cdot \beta\), the particle remains in the same state in each infinitesimal time step, thus requiring no additional score computation.

Based on this observation, the standard \emph{uniformization} method \citep{van1992approximate,van2018uniformization,chen2024convergence} simulates the reverse dynamics over \([a,b]\) by iterating the following two-step procedure in the limit \(\Delta t \to 0\):
\begin{enumerate}
    \item[\textbf{1.}] \label{step:start_transition} Sample whether a transition occurs with probability \(\Delta t \cdot \beta\).
    \item[\textbf{2.}] 
    \label{step:what_transition} If a transition occurs, move \(\rvy^\gets_t\) from \(\vy\) to \(\vy'\) with probability
    \begin{equation}
        \label{eq:step2_unif}
        \mM_t(\vy'\mid \vy)
        \;=\;
        \begin{cases}
            \beta^{-1} \,R^\gets_t(\vy', \vy), & \vy' \neq \vy,\\[2pt]
            1 - \beta^{-1} R^\gets_t(\vy), & \text{otherwise}.
        \end{cases}
    \end{equation}
\end{enumerate}
Under this update scheme, the reverse transitions of uniformization will be equivalent to Eq.~\eqref{def:ideal_reverse_trans} exactly and introduce no time-discretization error (see Appendix~\ref{app_sec:prof_convergence_reverse} for details). 
Moreover, since the number of transitions (and hence the number of discrete score computations) over \([a,b]\) follows a Poisson distribution with mean \(\beta\cdot (b-a)\), any tighter bound on \(R^\gets_t(\vy)\) reduces \(\beta\) and thereby lowers the expected inference complexity.

\paragraph{The comparison of computational complexity and outgoing rate.}
By the previous discussion of uniformization, the expected number of discrete score calls over the time interval \([0,T]\) can be approximated by
\begin{equation}
    \label{def:expected_score_eva_rough}
    \sum_{w=1}^W \max_{t\in[t_{w-1}, t_w]} \beta_t \cdot (t_w - t_{w-1}) \stackrel{W\rightarrow \infty}{\approx} \int_{t=0}^T \beta_t \der t,
\end{equation}
where \([t_0, t_1, \ldots, t_W]\) is a partition of \([0,T]\). 
In uniform discrete diffusion, \citet{chen2024convergence,huang2025almost} show that the ideal reverse process satisfies
\begin{equation}
    \label{ineq:ogr_upb_unif}
    \beta_t\coloneqq  2K\cdot {d}\cdot \max\{1, (T-t)^{-1}\}\le \beta\coloneqq 2K\cdot {d}\cdot \max\{1, (T-b)^{-1}\}\quad \forall\ t\in[a,b],
\end{equation}
providing a uniform upper bound on the total outgoing rate \(R^\gets_t(\vy)\).
For \emph{absorbing} discrete diffusion, the outgoing rates at varying times can be upper bounded as established in Lemma~\ref{lem:out_degree_rate_wrt_time} (see Appendix~\ref{app_sec:prof_out_degree_rate} for the proof).
\begin{lemma}[Bound of the outgoing rate]
    \label{lem:out_degree_rate_wrt_time}
    Consider a CTMC whose transition rate function $R^\to$ is defined as Eq.~\eqref{eq:fwd_transtion_rate_func}. Then, for any $\vy$, the reverse transition rate function satisfies
    \begin{equation}
        \label{def:beta_t_}
        \sum_{\vy^\prime\not=\vy}R^\gets_t(\vy^\prime,\vy) = R^\gets_t(\vy)\le \beta_t(\vy)\coloneqq  \frac{\numMask{\vy}\cdot K}{e^{(T-t)}-1}.
    \end{equation}
\end{lemma}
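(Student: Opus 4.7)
The plan is to compute $R^\gets_t(\vy)$ directly from the identity $R^\gets_t(\vy',\vy) = R^\to(\vy,\vy')\cdot q^\to_{T-t}(\vy')/q^\to_{T-t}(\vy)$ in Lemma~\ref{lem:absorbing_reverse_de}, using the factorized form of the forward kernel~\eqref{eq:fwd_transition_kernel} together with the mask-free support of $q_*$ from Assumption~\ref{ass:mask_init}. First I would observe that, by Eq.~\eqref{eq:fwd_transtion_rate_func}, $R^\to(\vy,\vy')$ is nonzero (and equals one) exactly when $\vy'$ is obtained from $\vy$ by replacing a single masked coordinate with a non-mask token. Hence the outgoing sum collapses to
\begin{equation*}
    R^\gets_t(\vy) \;=\; \sum_{i:\,\vy_i = K}\,\sum_{k=1}^{K-1}\frac{q^\to_{T-t}(\vy[\{i\}\rightarrow\{k\}])}{q^\to_{T-t}(\vy)}.
\end{equation*}

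Next I would expand $q^\to_{T-t}(\vy) = \sum_{\vy_0} q^\to_{T-t\mid 0}(\vy\mid\vy_0)\,q_*(\vy_0)$ and evaluate the conditional kernel coordinate by coordinate via~\eqref{eq:fwd_transition_kernel}. Because $q_*$ is supported on mask-free sequences, each kernel term contributes $e^{-(T-t)(d-\numMask{\vy})}(1-e^{-(T-t)})^{\numMask{\vy}}$ when $\vy_0$ agrees with $\vy$ on every non-masked coordinate of $\vy$ and zero otherwise; an analogous computation for $\vy[\{i\}\rightarrow\{k\}]$, which has one fewer mask, produces the same formula with $\numMask{\vy}$ replaced by $\numMask{\vy}-1$ and the extra constraint $\vy_{0,i}=k$. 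Taking the ratio cancels all common factors and leaves
\begin{equation*}
    \frac{q^\to_{T-t}(\vy[\{i\}\rightarrow\{k\}])}{q^\to_{T-t}(\vy)} \;=\; \frac{1}{e^{T-t}-1}\,\cdot\, q_*\bigl(\vy_{0,i}=k \,\big|\, \vy_0 \text{ matches } \vy \text{ off the mask}\bigr).
\end{equation*}

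Finally I would bound each of these conditional probabilities trivially by $1$, so that summing over $k\in\{1,\dots,K-1\}$ contributes at most a factor of $K$ and summing over the $\numMask{\vy}$ masked coordinates yields the stated bound $\numMask{\vy}\cdot K/(e^{T-t}-1)$. The main technical hurdle I anticipate is the careful bookkeeping when expanding the product form of~\eqref{eq:fwd_transition_kernel} to isolate the single ``masked/non-masked'' coordinate on which the ratio hinges; Assumption~\ref{ass:mask_init} is the essential lever that eliminates the otherwise awkward mask-valued origin terms and makes the coordinate-wise cancellation transparent. (One could tighten the bound to $\numMask{\vy}/(e^{T-t}-1)$ by using that those conditional probabilities partition unity in $k$, but the looser $K$-factor form displayed in the lemma suffices for the subsequent uniformization analysis.)
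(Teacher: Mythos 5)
Your proof is correct and follows essentially the same route as the paper: reduce the outgoing sum to pairs $(i,k)$ with $\vy_i=K$ using the support of $R^\to$, then control the marginal ratio $q^\to_{T-t}(\vy[\{i\}\to\{k\}])/q^\to_{T-t}(\vy)$ via the factorized forward kernel and Assumption~\ref{ass:mask_init}. The only cosmetic difference is that the paper rewrites the marginal ratio as $\E_{\rvy_0\sim q^\to_{0\mid T-t}(\cdot\mid\vy)}\bigl[q^\to_{T-t\mid 0}(\vy[\{i\}\to\{k\}]\mid\rvy_0)/q^\to_{T-t\mid 0}(\vy\mid\rvy_0)\bigr]$ and bounds the integrand pointwise by $1/(e^{T-t}-1)$, whereas you evaluate the numerator and denominator marginals directly, cancel the common $e^{-(T-t)(d-\numMask{\vy})}(1-e^{-(T-t)})^{\numMask{\vy}-1}$ factor, and recognize the remaining quotient as a conditional probability under $q_*$ bounded by $1$; the two are algebraically equivalent. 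Your parenthetical remark is also correct: since those conditional probabilities (equivalently, the indicator $\vone_0(k-\vy_{0,i})$ in the paper's Eq.~\eqref{eq:q_0_k_score_bound}) partition unity over $k\in\{1,\dots,K-1\}$, the bound can indeed be sharpened to $\numMask{\vy}/(e^{T-t}-1)$, removing the factor $K$; the paper uses the looser form, which is what propagates into the stated complexity constants.
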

Compared to \eqref{ineq:ogr_upb_unif}, this bound explicitly depends on \(\numMask{\vy}\), the number of absorbing states in \(\vy\). Since \(\numMask{\vy}\le d\), it is strictly smaller than the uniform bound in \eqref{ineq:ogr_upb_unif}. Furthermore, \(\numMask{\vy}\) decreases monotonically as the reverse process proceeds, which progressively enlarges the gap in outgoing rate between absorbing and uniform discrete diffusion. Because a lower outgoing rate implies fewer expected discrete score evaluations for each time $t$, absorbing discrete diffusion can be significantly more computationally efficient.

From an empirical perspective, a central observation is: \emph{during inference, absorbing discrete diffusion only updates (denoises) absorbing states, whereas uniform discrete diffusion attempts to re-denoise states that have already been denoised.}
Hence, in absorbing discrete diffusion, particles are more likely to remain unchanged at each step, leading to a lower outgoing rate (and thus smaller \(\beta_t\)) over \([0,T]\). Consequently, fewer discrete score evaluations are required, underscoring the computational advantages of absorbing compared to uniform discrete diffusion.

\paragraph{Absorbing-aware truncation and algorithm proposal.}  In practice, we approximate the reverse transition rate \(R^\gets_t(\vy',\vy)\) by a learned neural score \(\tilde{v}_{t,\vy}(\vy')\), yielding approximate reverse transition rate, i.e.,
\[
    \tilde{R}_t(\vy^\prime,\vy)
    \;=\;
    R^\to(\vy,\vy^\prime)\,\tilde{v}_{t,\vy}(\vy^\prime),
\]
as dictated by Lemma~\ref{lem:absorbing_reverse_de} and Eq.~\eqref{def:discrete_score_est}. 
Because \(\tilde{v}\) is a learned estimator, the outgoing rate \(\tilde{R}_t(\vy)\) may have no explicit upper bounds, complicating control over the expected number of discrete score evaluations.
To mitigate unbounded transition rates, prior work typically imposes a bounded-score assumption 
on \(\tilde{R}_t(\vy)\), restricting it to remain below a fixed constant~\citep{liang2025absorb} or 
to grow as a function of the inference time~\citep{chen2024convergence}. However, such assumptions 
can severely impact inference efficiency because the chosen upper bound \(\beta\) directly governs 
Step 2 of uniformization, as described in Eq.~\eqref{eq:step2_unif}. When \(\beta\) is unknown, 
it can be treated as a hyperparameter. Setting \(\beta\) too small may yield an infeasible 
probability $1 - \beta^{-1} \tilde{R}_t(\vy) < 0$, forcing the algorithm to fail; setting it too 
large preserves feasibility but inflates complexity in direct proportion to \(\beta\). Thus, 
tightening this bounding scheme is crucial for balancing both correctness 
and computational efficiency in uniformization-based inference.

\begin{algorithm}[t]
    \caption{\sc Absorbing-Aware Truncated Uniformization (\ourmethod)}
    \label{alg:uni_inf}
    \begin{algorithmic}[1]
            \STATE {\bfseries Input:}  Total time $T$, a time partition $0=t_0<\ldots<t_W=T-\delta$, parameters $\beta_{t_1}, \ldots, \beta_{t_W}$ set as Eq.~\eqref{def:beta_t_}, a reverse transition rate function $\hat{R}^\gets_t$ obtained by the learnt score function $\tilde{v}_{t,\vy^\prime}(\cdot)$.
            \STATE Draw an initial sample $\hat{\rvy}_{t_0} = [\idxK, \idxK,\ldots, \idxK]$. \label{step:infer_with_trunc_uni_start}
            \FOR{$w = 1$ {\bfseries to} $W$}
                \STATE Choose $\beta_{t_w} = K\cdot \mathrm{numK}(\hat{\rvy}_{t_{w-1}})/(e^{T-t_w}-1)$
                \STATE Draw $N\sim \mathrm{Poisson}(\beta_{t_w} (t_w-t_{w-1}))$;
                \STATE Sample $N$ points i.i.d. uniformly from $[t_{w-1}, t_w]$ and sort them as $\tau_1<\tau_2<\ldots<\tau_N$;
                \STATE Set $\rvz_0 = \hat{\rvy}_{t_{w-1}}$;
                \FOR{$n = 1$ {\bfseries to} $N$}
                    \STATE Find the index set $\gM$ of absorbing states appeared in random vector $\rvz_{n-1}$
                    \STATE \label{step:rtrm_sample} For any $i\in\gM$ and $k\in\{1,2,\ldots, K-1\}$, update $\vz_{n-1}$ with
                    \begin{equation*}
                        \rvz_{n} = \left\{
                            \begin{aligned}
                                & \rvz_{n-1}[\vz_i\colon K\rightarrow k] && w.p.\ \beta_{t_{w}}^{-1}\cdot \hat{R}_{\tau_n,\vz_0}(\vz_{n-1}[\vz_i\colon K\rightarrow k],\vz_{n-1}),\\
                                & \rvz_{n-1}, && w.p.\ 1- \beta_{t_w}^{-1}\cdot \hat{R}_{\tau_n,\vz_0}(\vz_{n-1}).
                            \end{aligned}
                        \right.
                    \end{equation*}
                \ENDFOR
                \STATE Set $\hat{\rvy}_{t_w} = \rvz_{N}$. \label{step:approximate_discrte_sample}
            \ENDFOR
            \STATE {\bfseries return} $\hat{\rvy}_{t_W}$.
    \end{algorithmic}
\end{algorithm}
Motivated by \citet{huang2025almost}, we propose a \emph{absorbing-aware truncation} scheme to rescale the practical outgoing rate \(\tilde{R}_t(\vy',\vy)\). 
This ensures that the non time-discretization property is preserved without additional cost, even when \(\tilde{R}_t(\vy)\) becomes large. 
Specifically, consider simulating the reverse process over the \((w\)-th) time segment \([t_{w-1},t_w]\), assuming the state at time \(t_{w-1}\) is \(\hat{\rvy}_{t_{w-1}} = \vy_{t_{w-1}}\). 
Following from the monotonicity of $(e^{T-t}-1)^{-1}$ and $\numMask{\hat{\rvy}_t}$ in Lemma~\ref{lem:out_degree_rate_wrt_time}, the absorbing-aware truncation is chosen as $\beta_{t_w}(\vy_{t_{w-1}})$, then we set
\begin{equation}
    \label{def:prac_infi_oper_1}
    \hat{R}_{t,\vy_{t_{w-1}}}(\vy,\vy^\prime) 
    \;=\; 
    \begin{cases}
        \tilde{R}_t(\vy,\vy^\prime)\;{\beta_{t_w}(\vy_{t_{w-1}})}/{\tilde{R}_t(\vy^\prime)}, 
            & \text{if } \tilde{R}_t(\vy^\prime)>\beta_{t_w}(\vy_{t_{w-1}}),\\[0.6em]
        \tilde{R}_t(\vy,\vy^\prime), 
            & \text{otherwise},
    \end{cases}
    \quad 
    \forall \vy^\prime \!\neq\! \vy,
\end{equation}
and
\begin{equation}
    \label{def:prac_infi_oper_2}
    \hat{R}_{t,\vy_{t_{w-1}}}(\vy^\prime,\vy^\prime) 
    \;=\; 
    -\,\sum_{\vy \neq \vy^\prime}\!\hat{R}_{t,\vy_{t_{w-1}}}(\vy,\vy^\prime).
\end{equation}
With these truncations, the corrected outgoing rate will definitely be upper-bounded by $\beta_{t_w}(\vy_{t_{w-1}})$.
Then, we obtain a practical and efficient inference algorithm, summarized in Alg.~\ref{alg:uni_inf}.

\begin{table*}[t]
    \centering
    \caption{\small Comparison with prior works simulating reverse particle SDEs, where \textcolor{red}{\bf[A3]} denotes the bounded-score assumption used in~\citet{chen2024convergence} and \textcolor{red}{\bf[A3]+} denotes the bounded-score assumption used in~\citet{liang2025absorb} which is a little bit stronger than \textcolor{red}{\bf[A3]} due to the time-invariant requirement.  All complexities are on TV convergence (or TV convergence deduced from KL convergence), which are achieved by assuming $\epsilon_{\text{score}} = \tilde{o}(\epsilon)$ and setting early-stopping parameters $\delta=\epsilon/d$. Besides, the complexity presented by $\tilde{\mathcal{O}}(\cdot)$ means the $\ln$ dependencies are omitted.} 
    \small
    \renewcommand\arraystretch{1.3}
    \begin{tabular}{ccccc}
    \toprule
     Results & Forward Type & Inference Sampler &  Assumptions &  Complexity \\
     \midrule
    \citet{zhang2024convergence} &  Uniform & Exponential Integrator  & \ref{ass:score_approximation_error}, \textcolor{red}{\bf[A3]} 
    & $\tilde{\mathcal{O}}(d^{5/3}\epsilon^{-2})$\\
     \citet{ren2024discrete}  & Uniform & $\tau$-leaping & 
     \ref{ass:score_approximation_error},\textcolor{red}{\bf[A3]}
 & $\tilde{\mathcal{O}}(d^2\epsilon^{-2})$\\
     \citet{chen2024convergence}  & Uniform & Uniformization & \ref{ass:score_approximation_error},\textcolor{red}{\bf[A3]} & $\mathcal{O}(d\ln(d/\epsilon))$\\
     \citet{huang2025almost}  & Uniform & Truncated Uniformization & \ref{ass:score_approximation_error}
     & $\mathcal{O}(d\ln(d/\epsilon))$\\
     \midrule
     \citet{liang2025absorb}  & Absorbing & $\tau$-leaping & 
     \ref{ass:score_approximation_error},\ref{ass:mask_init},\textcolor{red}{\bf[A3]+}
 & $\mathcal{O}(d\epsilon^{-2})$\\
     \citet{liang2025absorb}  & Absorbing & Uniformization & 
     \ref{ass:score_approximation_error},\ref{ass:mask_init},\textcolor{red}{\bf[A3]}
 & $\mathcal{O}(d\ln(d/\epsilon))$\\
     Theorem~\ref{thm:main_mask_unif}  & Absorbing & \ourmethod & \ref{ass:score_approximation_error},\ref{ass:mask_init}  & \textcolor{red}{$\mathcal{O}(d\ln d)$}\\
     \bottomrule 
    \end{tabular}
    \label{tab:comp_old}
    \vspace{-.15in}
\end{table*}
\paragraph{Theoretical results.} Theorem~\ref{thm:main_mask_unif} summarizes the convergence guarantees and complexity analysis of Algorithm~\ref{alg:uni_inf} in approximating $q_*$. Detailed proofs are provided in Appendices~\ref{app_sec:prof_convergence_reverse} and~\ref{app_sec:prof_complexity_reverse}.
\begin{theorem}[Combination of Theorem~\ref{thm:convergence_unif_reverse} and Theorem~\ref{thm:mask_unif_complexity}]
    \label{thm:main_mask_unif}
    Suppose Assumptions~\ref{ass:score_approximation_error} and~\ref{ass:mask_init} hold. For 
    Alg.~\ref{alg:uni_inf}, if we require 
    \begin{equation*}
        T = \ln(4d/\epsilon^2),\quad \delta\le d^{-1}\epsilon,\quad  \epsilon_{\text{score}}\le T^{-1/2} \epsilon, \quad \epsilon<1,
    \end{equation*}
    and the partition of the reverse process satisfies
    \begin{equation*}
        \eta = \epsilon/2d,\quad  W = (T-\delta)/\eta,\quad t_0=0,\quad t_W=T-\delta,\quad   t_{w} - t_{w-1} = \eta \quad \forall w\in\{1,2,\ldots W\}
    \end{equation*}
    the expected iteration and score estimation complexity of Alg.~\ref{alg:uni_inf} is upper bounded by
    \begin{equation}
        \label{def:mask_unif_complexity_fin}
        2K(d-\epsilon^2/4) + 12Kd\ln d
    \end{equation}
    to achieve $\TVD{p_*}{\hat{p}}\le 2\epsilon$, where $\hat{p}$ denotes the underlying distribution of generated samples.
\end{theorem}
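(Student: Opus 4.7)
The statement combines a convergence bound and a complexity bound, so the plan is to prove each independently and then assemble them. For the convergence part, I would decompose $\TVD{q_*}{\hat{p}}$ by the triangle inequality into three contributions: (i) an early-stopping gap $\TVD{q_*}{q^\to_\delta}$ from terminating the reverse at time $T-\delta$; (ii) an initialization gap arising because Alg.~\ref{alg:uni_inf} starts from the stationary-like distribution at $t_0=0$ rather than from $q^\to_T$; and (iii) a simulation gap between the exact reverse CTMC (started from $q^\to_T$ with the true score) and the \ourmethod\ process run with the learned score $\tilde v$. Contribution (i) is bounded by the coupling $1-e^{-d\delta}\le d\delta \le \epsilon$ under Assumption~\ref{ass:mask_init}. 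Contribution (ii) follows from Lemma~\ref{lem:fwd_convergence_0}: with $T=\ln(4d/\epsilon^2)$, one has $\KL{q^\to_T}{\tilde q_T}\lesssim \epsilon^2$, and Pinsker plus the data-processing inequality then give an $O(\epsilon)$ bound.

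The crux is (iii). The argument I have in mind has two ingredients. First, classical uniformization is exact: with true outgoing rate bounded by $\beta$, the two-step thin/transition procedure samples the CTMC without any time-discretization error. Second, Lemma~\ref{lem:out_degree_rate_wrt_time} shows that the chosen threshold $\beta_{t_w}(\hat{\rvy}_{t_{w-1}})$ is, for every segment, a deterministic upper bound on the true outgoing rate $R^\gets_t(\hat{\rvy}_{t})$, using the monotonicity of $(e^{T-t}-1)^{-1}$ and of $\numMask{\hat{\rvy}_t}$ along the reverse trajectory. Therefore, under the \emph{true} score $v$, the mask-aware truncation in Eqs.~\eqref{def:prac_infi_oper_1}--\eqref{def:prac_infi_oper_2} never activates, so the truncated generator equals $R^\gets_t$ exactly. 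This decouples the truncation from the score error, and all of (iii) is driven by $\tilde v - v$. A Girsanov-type KL chain rule for CTMCs (in the spirit of~\cite{chen2024convergence,huang2025almost}) then converts the integrated Bregman score entropy into a KL bound between path measures, giving $\KL{\text{ideal}}{\ourmethod}\le T\cdot\epsilon_\text{score}^2\le \epsilon^2$ under $\epsilon_\text{score}\le T^{-1/2}\epsilon$, and Pinsker bounds (iii) by $O(\epsilon)$. Summing (i)--(iii) yields $\TVD{q_*}{\hat p}\le 2\epsilon$.

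For the complexity part, I would use that each segment contributes a Poisson number of jumps with mean $\beta_{t_w}(\hat{\rvy}_{t_{w-1}})\cdot\eta$, so that
\[
\mathbb{E}[\#\text{score calls}] \;=\; K\eta\sum_{w=1}^{W}\frac{\mathbb{E}\!\bigl[\numMask{\hat{\rvy}_{t_{w-1}}}\bigr]}{e^{T-t_w}-1}.
\]
The key structural observation, which is what distinguishes masked from uniform diffusion, is that tokens can only transition from \texttt{[MASK]} to a non-mask value, so $\numMask{\hat{\rvy}_{t_{w-1}}}$ is monotonically non-increasing in $w$. Coupling to the ideal reverse process (using the TV bound from the first part) gives $\mathbb{E}[\numMask{\hat{\rvy}_{t_{w-1}}}]\le d(1-e^{-(T-t_{w-1})})+d\cdot O(\epsilon)$. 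Substituting, the main sum becomes a Riemann approximation to
\[
Kd\int_{0}^{T-\delta}\frac{1-e^{-(T-t)}}{e^{T-t}-1}\,dt \;=\; Kd\int_{\delta}^{T}e^{-s}\,ds \;\le\; Kd\bigl(1-e^{-T}\bigr) \;=\; Kd-\tfrac{K\epsilon^2}{4},
\]
producing the leading $2K(d-\epsilon^2/4)$ term after absorbing a factor-$2$ slack for Riemann-sum discretization and the small-$\epsilon$ TV coupling. The residual $12Kd\ln d$ term comes from the same sum bounded crudely by $\numMask{\hat{\rvy}_{t_{w-1}}}\le d$ in the late reverse stage, where $\int_\delta^{O(1)} ds/(e^s-1) \lesssim \log(1/\delta)=\log(d/\epsilon)$ and $\delta = \epsilon/d$ gives the $\log d$ factor.

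\textbf{Main obstacle.} The genuinely delicate step is (iii). Even though the truncation is inactive under the true score, converting the score-entropy bound into a path-measure TV bound is subtle because $\hat R_t$ depends on the \emph{random} threshold $\beta_{t_w}(\hat{\rvy}_{t_{w-1}})$, which in turn depends on the full trajectory up to $t_{w-1}$. The plan is to handle this by conditioning segment-by-segment: fix the realization $\hat{\rvy}_{t_{w-1}}$, apply the Girsanov-style chain rule on $[t_{w-1},t_w]$ with the \emph{conditional} threshold, and then take expectations, exploiting that the truncation is almost-surely inactive under $v$ to line up the two path measures before invoking score-entropy domination.
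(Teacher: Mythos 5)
Your overall decomposition for the convergence part mirrors the paper's: early-stopping gap, initialization gap, and simulation gap, with the early-stopping part handled by coupling and the initialization part by Lemma~\ref{lem:fwd_convergence_0}. The crucial observations you flag---that $\beta_{t_w}(\hat{\rvy}_{t_{w-1}})$ remains an upper bound on $R^\gets_t(\hat{\rvy}_t)$ throughout the segment because $\numMask$ is non-increasing, and that consequently the truncation is inactive under the true score---are indeed the load-bearing facts in the paper's argument. However, ``the truncation never activates on the ideal process'' does \emph{not} by itself decouple the truncation from the score error, because the truncation \emph{does} fire on the learned rate $\tilde R_t$. The missing step is an argument that truncating $\tilde R$ toward $\beta_{t_w}(\vy_0)$ can only decrease the KL increment relative to leaving $\tilde R$ untruncated. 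The paper proves this by isolating a residual (Term~2 in Eq.~\eqref{eq:dKL_gap_ScoEst_err}): when truncation is active, $\hat R_{t,\vy_0}(\vy',\vy)/\tilde R_t(\vy',\vy)=\beta_{t_w}(\vy_0)/\tilde R_t(\vy)$ uniformly, and then $R^\gets_t(\vy)\le\beta_{t_w}(\vy_0)$ together with $\ln(1+x)\le x$ give Term~2 $\le 0$. This is precisely where the fact ``inactive under $v$'' enters, but your write-up stops at the intuition and does not derive the inequality. Incidentally, the paper deliberately avoids a Girsanov-style path-measure computation (it says so explicitly in the conclusion); it instead differentiates the segment-conditioned KL in $t$ using Lemma~\ref{lem:lim_deltat_ln}, which accomplishes the same end but with more elementary bookkeeping and avoids having to define a likelihood ratio between a truncated and an untruncated CTMC.

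For the complexity part, your leading-order calculation $K d\int_\delta^T e^{-s}\,ds \le Kd(1-e^{-T})$ is exactly the paper's Term~1 after cancelling $\frac{1-e^{-(T-t)}}{e^{T-t}-1}=e^{-(T-t)}$, and the factor-of-$2$ slack for the Riemann ratio $\frac{1-e^{-(T-t_{w-1})}}{1-e^{-(T-t_w)}}\le 2$ is also correct. But your derivation of the $12Kd\ln d$ residual is wrong. You attribute it to bounding $\numMask\le d$ crudely in the late reverse stage, giving $\int_\delta^{O(1)}\frac{ds}{e^s-1}\lesssim\log(1/\delta)=\log(d/\epsilon)$, and then dropping the $\log(1/\epsilon)$. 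That does not work: $\log(d/\epsilon)=\log d+\log(1/\epsilon)$, and nothing in that argument cancels the $\log(1/\epsilon)$, so you would get $Kd\log(d/\epsilon)$, which is \emph{not} $\epsilon$-free. The paper's residual term has a different source: the $2d\epsilon$ additive error in $\E[\numMask{\hat{\rvy}_{t_{w-1}}}]$ coming from the TV coupling between $\hat q_{t_{w-1}}$ and $q^\gets_{t_{w-1}}$. That factor of $2d\epsilon$ multiplies $\sum_w \eta/(e^{T-t_w}-1)\lesssim\int_0^{T-\delta}\frac{dt}{T-t}=\ln(T/\delta)\lesssim\ln d\cdot\ln(1/\epsilon)$, and the decisive step is $\epsilon\ln(1/\epsilon)\le e^{-1}$, which annihilates the $\epsilon$ dependence and leaves $O(Kd\ln d)$. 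Without explicitly carrying the $2d\epsilon$ coupling error through the integral and invoking $\epsilon\ln(1/\epsilon)\le e^{-1}$, you cannot recover the claimed $\epsilon$-free rate.
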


Equation~\eqref{def:mask_unif_complexity_fin} suggests that exact inference might be theoretically achievable by setting $\epsilon = 0$. 
However, this regime is practically infeasible as it necessitates infinite mixing time $T$, perfect score estimation ($\epsilon_{\text{score}}=0$), and an infinite number of discretization intervals $W$. 
Crucially, although the step size $\eta = \epsilon/(2d)$ implies a total of $\mathrm{poly}(d/\epsilon)$ intervals in the reverse process, the total number of score evaluations remains essentially independent of $\epsilon$. This occurs because the majority of intervals involve no state transitions (see Eq.~\eqref{def:expected_score_eva_rough}). 
Consequently, the use of fine-grained intervals serves primarily to ensure the validity of the outgoing rate upper bound without inflating the overall computational complexity.
Table~\ref{tab:comp_old} presents a comparative complexity analysis. 
\ourmethod\ achieves state-of-the-art complexity results, notably remaining $\epsilon$-free while avoiding the restrictive assumption of a bounded score estimator.
Given the arbitrary choice of the absorbing state, Assumption~\ref{ass:mask_init} is relatively mild and aligns with practical configurations, such as those found in diffusion language models~\cite{nie2025large}.
To provide a comprehensive analysis, we also address the convergence properties of \ourmethod\ in scenarios where Assumption~\ref{ass:mask_init} does not hold:

\begin{corollary}
    \label{cor:conv_wo_A2}
    Suppose Assumption~\ref{ass:score_approximation_error} holds. Adopting the parameter settings from Theorem~\ref{thm:convergence_unif_reverse}, the expected iteration and score estimation complexity of Alg.~\ref{alg:uni_inf} is upper bounded by
    \begin{equation*}
        \mathcal{O}\left(Kd\cdot \min\left\{\ln(d/\epsilon), \E[\numMask{\rvy^\to_0}]/\epsilon \right\}\right) + \mathcal{O}(Kd\ln d)
    \end{equation*}
    to achieve $\TVD{q_*}{\hat{q}}\le 2\epsilon$, where $\hat{p}$ denotes the underlying distribution of generated samples.
\end{corollary}

This result implies that the complexity remains independent of $\epsilon$ when the expected number of absorbing states in the target distribution is on the order of the error tolerance. Conversely, the presence of absorbing states in the data distribution prevents the sharp contraction of the outgoing rate upper bound observed previously, potentially introducing a logarithmic dependence on $\epsilon$ (specifically, a $\ln(1/\epsilon)$ factor). 
The analysis of this corollary is deferred to Appendix.~\ref{app_sec:prof_cor_conv_wo_A2}.

\paragraph{Empirical Results.} 
We empirically validate the acceleration of absorbing discrete diffusion against uniform baselines on synthetic and real-world data, including text generation tasks following~\citet{lou2024discrete}. Due to space constraints, detailed results are deferred to Appendix~\ref{app_sec:exps}.


\section{Time-Invariant Parameterization for \ourmethod}

In this section, we extend the analysis of \ourmethod\ to absorbing diffusion models that utilize a time-invariant parameterization of discrete scores.
While our previous sections focus on the general time-variant parameterization, recent work \citep{ou2024your,zheng2024masked} has shown that under certain conditions, absorbing diffusion can be streamlined into time-agnostic masked models.
By adapting \ourmethod\ to this parameterization, we derive Algorithm~\ref{alg:dlm_imple} and prove that it achieves a competitive complexity of $\mathcal{O}(d)$.

This section is organized as follows. First, we formally define this parameterization and emphasize a crucial structural difference: while standard density-ratio models inherently embed a \emph{denoising order} in Step~\ref{step:rtrm_sample}, the time-invariant parameterization instead requires specifying an external \emph{denoising order} during inference. We then demonstrate that adapting \ourmethod\ to this parameterization naturally deduces a principled denoising schedule that aligns with practical implementations. Finally, we establish the $\epsilon$-TV convergence of \ourmethod\ when implemented with a lazy update strategy under this parameterization. 

\paragraph{Time-Invariant Parameterization.}
Recall that in our general formulation, the neural network $\tilde{v}_{T-t,\vy^\prime}(\cdot)$ approximates the density ratio $q_t^\to(\cdot)/{q_t^\to(\vy^\prime)}$ of the forward marginal distributions, constituting a time-variant parameterization.
Recent studies~\citep{ou2024your,zheng2024masked}, however, establish that this ratio admits a factorization into a time-dependent coefficient and a time-invariant term derived from the clean data.
Formally, for any adjacent states $\vy, \vy^\prime$ with
\begin{equation*}
    \mathrm{Ham}(\vy,\vy^\prime)=1,\quad \DfId{\vy}{\vy^\prime} = i, \quad \gK(\vy) = \{j|\vy_j = \idxK\},\quad \text{and}\quad \gK(\vy^\prime) = \gG(\vy) + \{i\},
\end{equation*}
the density ratio satisfies the identity:
\begin{equation}
    \label{eq:time_inv_para}
    \frac{1-e^{-t}}{e^{-t}}\cdot \frac{q^\to_t(\vy)}{q^\to_t(\vy^\prime)} = \frac{\sum_{\hat{\vy}\in \gY} \vone[\hat{\vy}_j = \vy_j, \forall  j \not\in \gK(\vy)]\cdot  p_0(\hat{\vy})}{\sum_{\hat{\vy}\in \gY} \vone[\hat{\vy}_j = \vy^\prime_j, \forall  j \not\in \gK(\vy^\prime)]\cdot  p_0(\hat{\vy})} = \mathrm{Pr}_{\text{data}}\left[\vy_i | \vy^\prime_j, j\in \gK^c(\vy^\prime)\right].
\end{equation}
This relationship allows the network to directly parametrize the conditional distribution of the clean data, $\tilde{q}_{0,i}(\vy_i | \vy^\prime_{\gK^c(\vy^\prime)})\approx \mathrm{Pr}_{\text{data}}[\vy_i | \vy^\prime_j, j\in \gK^c(\vy^\prime)]$, eliminating the dependence on $t$. 
Such a perspective effectively reframes inference as the iterative imputation of clean data, rather than merely the reversal of a time-dependent CTMC.

While this reduction is elegant, Eq.~\eqref{eq:time_inv_para} relies on two critical assumptions:
(1) the data distribution is supported exclusively on non-absorbing states (Assumption~\ref{ass:mask_init}), and (2) the forward infinitesimal operator exhibits \emph{isotropic absorption}, requiring uniform transition rates $R^\to(\vy,\vy^\prime)$ across all adjacent states ($\mathrm{Ham}(\vy,\vy^\prime) = 1$).
In contrast, time-variant parameterization remains robust when these restrictions are relaxed.
For instance, theoretical convergence persists even if the initialization assumption is violated (Corollary~\ref{cor:conv_wo_A2}),  and the general approach naturally accommodates \emph{anisotropic absorption} via the standard calculation of the infinitesimal operator (Eq.~\eqref{eq:rev_func}).


\begin{algorithm}[t]
    \caption{\sc Iterative Imputation (uniformly randomized denoising order)}
    \label{alg:dlm_imple}
    \begin{algorithmic}[1]
            \STATE {\bfseries Input:}  The sequence length $d$, the vocabulary $\gV=\{1,2,\ldots, \idxK\}$ where $\idxK$ is the mask token,  the noise schedule $\alpha_t$ and its inverse function $\alpha^{-1}$, , the pretrained masked diffusion model $p_{\theta}$
            \STATE $\overline{\rvy}_d = [\idxK, \idxK, \ldots, \idxK]$.
            \FOR{$n = d$ {\bfseries downto} $1$}
                \STATE Randomly and uniformly select an index $i$ from $\{j: \overline{\rvy}_{n,j} = \idxK\}$;
                \STATE Draw the target state for denoising by $\overline{\rvy}_{n-1, i}\sim \tilde{q}_{0,i}(\cdot | \overline{\rvy}_{n, \gK^c(\overline{\rvy}_n)})$
                \STATE $\overline{\rvy}_{n-1} = \overline{\rvy}_n[i: \idxK \rightarrow \overline{\rvy}_{n-1,i}]$
            \ENDFOR
            \STATE {\bfseries return} $\overline{\rvy}_{0}$.
    \end{algorithmic}
\end{algorithm}

\paragraph{Denoising Order and Adaptation of \ourmethod.}
In absorbing discrete diffusion with time-variant parameterization, the generative trajectory—i.e., the sequence of denoised elements—is implicitly governed by the sampling mechanism (Step~\ref{step:rtrm_sample} in Alg.~\ref{alg:uni_inf}).
By drawing samples directly from the transition rate matrix, the process simultaneously determines both the index $i$ of the absorbing state to be updated and its corresponding target value $k$.
However, this inherent ordering becomes obscured when inference is reformulated via the time-invariant parameterization $\tilde{q}_{0,i}(\cdot | \vy^\prime_{\gK^c(\vy^\prime)})$. 
In this setting, the selection of the denoising order is ambiguous.
Consider a state $\vy^\prime$ containing multiple absorbing indices, $i_1, i_2 \in \gK(\vy^\prime)$.
While sampling a candidate value for $i_1$ from $\tilde{q}_{0,i_1}(\cdot | \vy^\prime_{\gK^c(\vy^\prime)})$ is straightforward, the parameterization lacks a canonical metric to compare conditional probabilities across different indices.
Specifically, without a unified scale to weigh $\tilde{q}_{0,i_1}(\vy_i | \vy^\prime_{\gK^c(\vy^\prime)})$ against $\tilde{q}_{0,i_2}(\vy_i | \vy^\prime_{\gK^c(\vy^\prime)})$, prioritizing one absorbing state over another is undefined.
This indeterminacy regarding the denoising schedule has been similarly highlighted in recent literature~\citep{ou2024your,kim2025train}.

Crucially, when \ourmethod\ is adapted to the time-invariant parameterization, it naturally resolves this ambiguity by inducing a specific denoising order.
Specifically, the sampling step (Step~\ref{step:rtrm_sample} in Alg.~\ref{alg:uni_inf}) admits a factorization into two stages: (1) selecting the index of the absorbing state to denoise, and (2) sampling the target state for that index.
Under time-invariant conditions, the marginal probability of selecting index $i$ simplifies to:
\begin{equation*}
    \begin{aligned}
        \frac{\sum_{k=1}^{\idxK-1} \hat{R}_{t,\vy_0}(\vy^\prime[i:\idxK \rightarrow k], \vy^\prime)}{\sum_{i\in \gK(\vy^\prime)}\sum_{k=1}^{\idxK-1} \hat{R}_{t,\vy_0}(\vy^\prime[i:\idxK \rightarrow k]} =  \frac{\sum_{k=1}^{\idxK-1} \tilde{q}_{0,i}(\vy^\prime[i:\idxK \rightarrow k]| \vy^\prime_{\gK^c(\vy^\prime)})}{\sum_{i\in \gK(\vy^\prime)}\sum_{k=1}^{\idxK-1} \tilde{q}_{0,i}(\vy^\prime[i:\idxK \rightarrow k]| \vy^\prime_{\gK^c(\vy^\prime)})} = \left|\gK(\vy^\prime)\right|^{-1},
    \end{aligned}
\end{equation*}
which implies that \ourmethod\ assigns a uniform selection probability to every currently absorbing position.
Furthermore, for the second stage, the probability of selecting the target state aligns exactly with the conditional distribution $\tilde{q}_{0,i}(\vy_i | \vy^\prime_{\gK^c(\vy^\prime)})$. 
Consequently, adapting \ourmethod\ to the time-invariant parameterization yields a procedure similar to performing iterative imputation with a \emph{uniformly randomized} denoising order, as outlined in Alg.~\ref{alg:dlm_imple}.

\paragraph{Lazy Update and $\mathcal{O}(d)$ Discrete Score Calls.}

Building on the previous discussion, a comparison of \ourmethod\ under time-variant versus time-invariant parameterizations reveals a critical computational distinction. 
In the time-variant setting, at each potential transition time $\tau_n$, the particle remains in its current state with probability $1- \beta_{t_w}^{-1}\cdot \hat{R}_{\tau_n,\vy_0}(\vy_{n-1})$. 
Consequently, if no transition occurs, the computed discrete scores $\hat{R}_{\tau_n,\vy_0}(\cdot, \vy_{n-1})$ must be discarded. 
This inefficiency arises because the scores are time-dependent, implying that the values at the next potential transition time will differ, i.e., $\hat{R}_{\tau_n,\vy_0}(\cdot, \vy_{n-1})\not=\hat{R}_{\tau_{n+1},\vy_0}(\cdot, \vy_{n-1})$.
Conversely, within the time-invariant parameterization, transition probabilities depend solely on the current state and are independent of time. 
Thus, if a particle is not updated at $\tau_n$, the calculated scores, e.g., $\tilde{q}_{0}(\cdot | \vy^\prime_{\gK^c(\vy^\prime)})$, can be cached and reused for subsequent steps—a strategy known as ``lazy update.''
With such lazy updates, since each absorbing state is denoised exactly once, the total number of score evaluations is strictly bounded by $d$. 
Relative to the complexity bounds in Theorem~\ref{thm:main_mask_unif}, this eliminates an overhead factor of $\mathcal{O}(\ln d)$, underscoring the efficiency gains of the static formulation.
In this regime, the execution of Alg.~\ref{alg:uni_inf} closely mirrors Alg.~\ref{alg:dlm_imple}, with one notable divergence. 
Due to the scaling factor $\beta_{t_w}$ in Alg.~\ref{alg:uni_inf}, transitions remain probabilistic, rendering the number of absorbing states in the final sample a random variable.
In contrast, the imputation-based inference (Alg.~\ref{alg:dlm_imple}) enforces a deterministic reduction in entropy, guaranteeing that one absorbing state is denoised at each update until the generated particle is fully observed.
Therefore, to rigorously establish that the imputation-type inference—viewed as a reduction of \ourmethod\ in the time-invariant setting—achieves Total Variation (TV) convergence, we present the following theorem. The proof is deferred to Appendix~\ref{app_sec: convergence_fhs_reverse}.
\begin{theorem}[Convergence of Alg.~\ref{alg:dlm_imple}]
    \label{thm:convergence_fhs_reverse}
    Suppose Assumption~\ref{ass:score_approximation_error} and~\ref{ass:mask_init} hold, if the discrete scores are parameterized by time-independent neural network as~\cite{ou2024your}, the TV distance between the target discrete distribution $q_*$ and the underlying distribution of the output particle $\overline{q}_{0}$ of Alg.~\ref{alg:dlm_imple} will satisfy $\TVD{q_*}{\hat{q}_{T-\delta}}\le 4\epsilon$.
\end{theorem}
\section{Conclusion}
In this paper, we present a rigorous analysis of absorbing discrete diffusion. 
Motivated by the computational redundancy of uniform diffusion—which repeatedly re-denoises valid elements—we introduce \ourmethod, a sampler designed to update each state exactly once. 
This approach eliminates the $\ln(1/\epsilon)$ factor characteristic of uniform settings, thereby achieving nearly $\epsilon$-free complexity. 
Furthermore, we adapt \ourmethod\ to the time-invariant parameterization prevalent in practice, demonstrating that it naturally aligns with iterative imputation-type inference under a uniformly randomized denoising order. 
In this regime, we establish Total Variation (TV) convergence guarantees for imputation algorithms.
We empirically corroborate these acceleration mechanisms against uniform baselines on synthetic data and validate the effectiveness of \ourmethod\ on real-world text generation benchmarks following~\citet{lou2024discrete}. 
Collectively, our results provide the first rigorous theoretical foundation for absorbing discrete diffusion, elucidating its efficiency and paving the way for advanced sampling techniques.

\bibliographystyle{plainnat}
\bibliography{0_contents/ref}  





\newpage
\appendix
\tableofcontents

\newpage

\section{Experiments}
\label{app_sec:exps}

\subsection{Synthetic Experiments.}
We conduct synthetic experiments to validate our theoretical findings and compare the sampling efficiency of our Masked Discrete Diffusion model against the uniform baseline.

\paragraph{Experiment Setup.} We utilize a state space defined by vocabulary size $K=3$ and sequence length $d=4$. The ground truth distribution, $p^*$, is constructed by assigning a random mass sampled uniformly from $(0,1)$ to each of the $K^d$ possible sequences and normalizing the distribution. We report results averaged over 5 independent random seeds. For each seed, we generate $5000$ trajectories using our method (Algorithm~\ref{alg:uni_inf}, AATU) and the truncated uniformization baseline with a uniform stationary distribution (adapted from \cite{huang2025almost}). Performance is evaluated via the Total Variation (TV) distance between the empirical marginal distribution and $p^*$, plotted as a function of the Number of (Score) Function Evaluations (NFE). Quantitative results are shown in Figure~\ref{fig:synthetic}, and illustrative sampling trajectories are visualized in Figure~\ref{fig:trajectory}.

\begin{figure}[!htbp]
    \centering
    \includegraphics[width=0.63\linewidth, align = t]{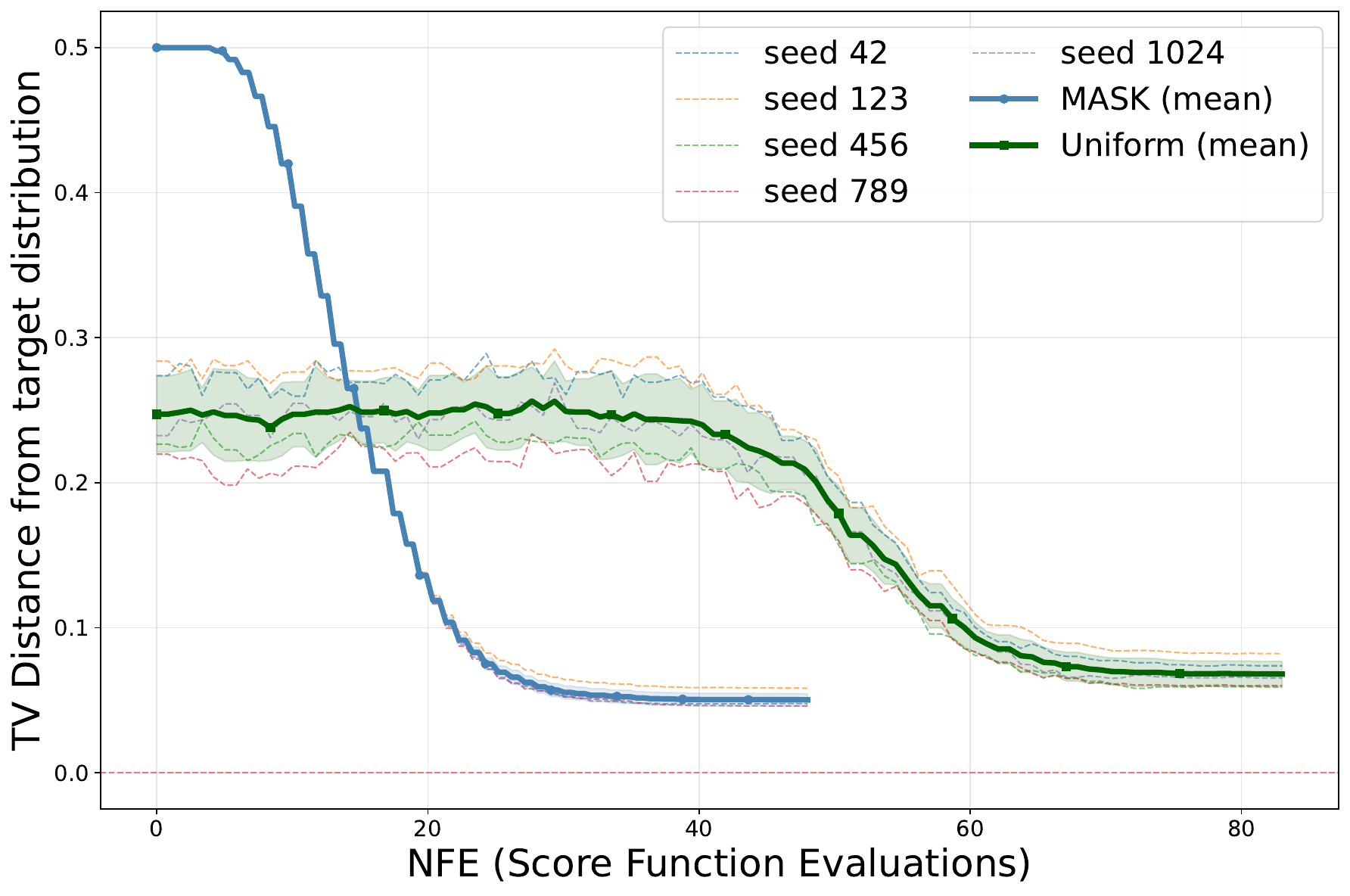}
    \includegraphics[width=0.35\linewidth, align = t]{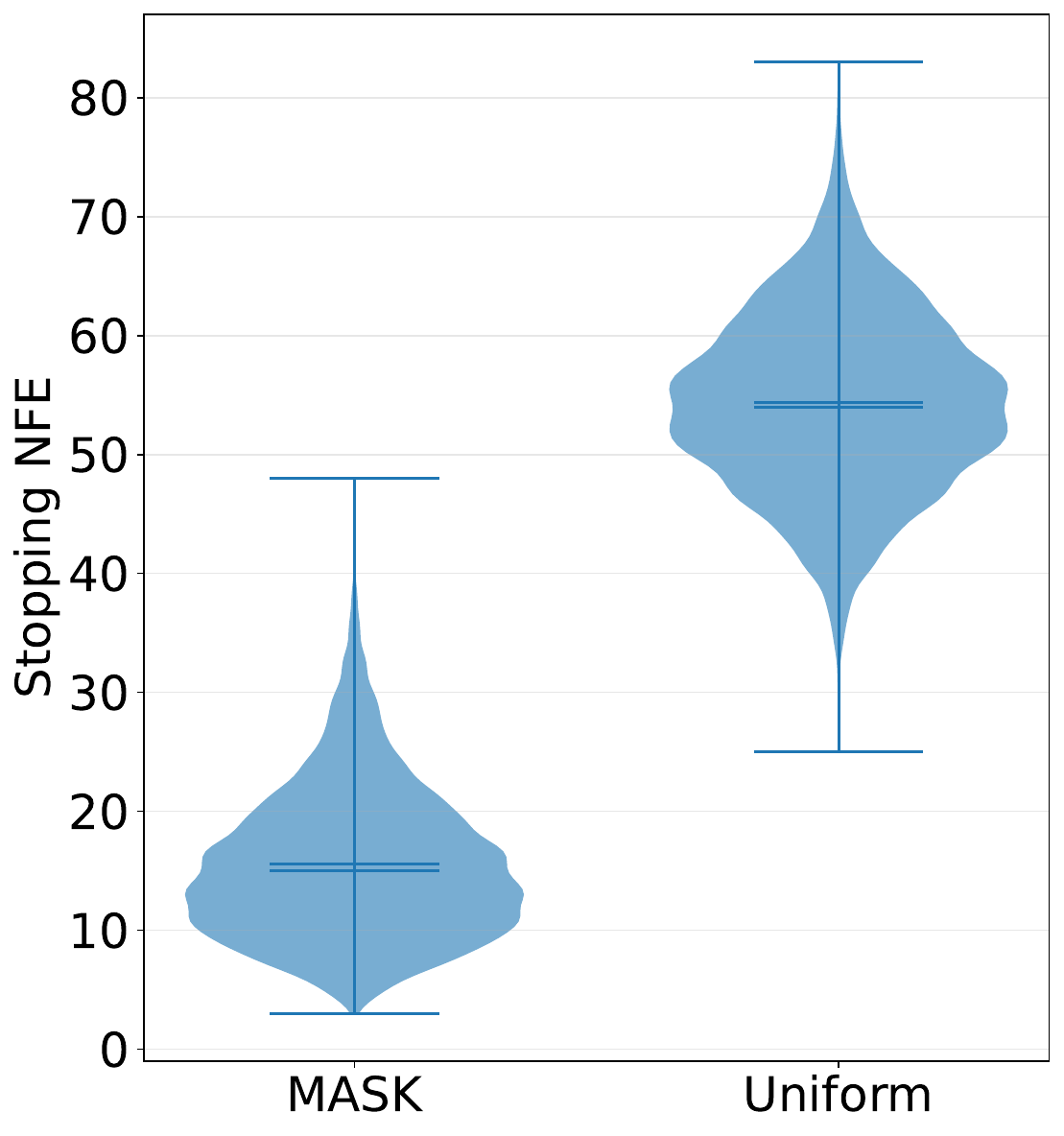}
    \caption{\textbf{Synthetic experiment results on sampling efficiency.} We compare our proposed Masked Discrete Diffusion (MASK) against the Uniform baseline with vocabulary size $K=3$ and sequence length $d=4$. \textbf{Left:} The Total Variation (TV) distance between the empirical and ground truth distributions as a function of the Number of (Score) Function Evaluations (NFE). The solid lines represent the mean over 5 seeds, and shaded regions indicate the standard deviations. Our method achieves faster convergence to the target distribution. \textbf{Right:} Violin plots illustrating the distribution of Stopping NFE. The MASK method requires significantly fewer evaluations to terminate compared to the Uniform baseline.}    
    \label{fig:synthetic}
\end{figure}

\begin{figure}
    \centering
    \includegraphics[width=0.9\linewidth]{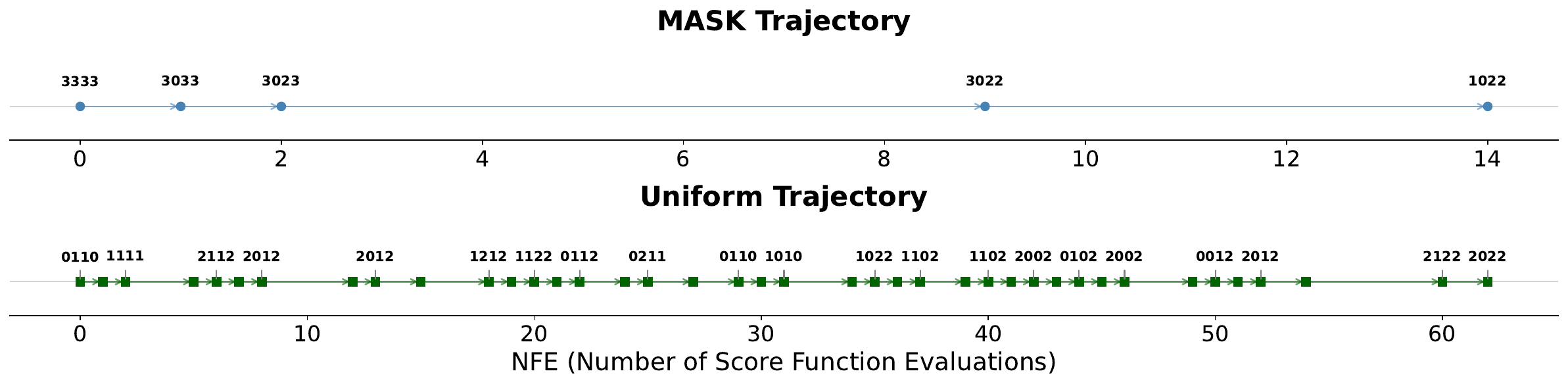}
    \caption{Visualization of individual sampling trajectories.  The plots show single sampling paths, with labels indicating the intermediate discrete states. The MASK method (top) navigates the state space efficiently with few steps. In contrast, the Uniform baseline (bottom) exhibits diffusive behavior with many small steps—often reverting previous changes—resulting in a high NFE cost.}
    \label{fig:trajectory}
\end{figure}

\subsection{Real World Experiments}
We consider to introduce our Alg.~\ref{alg:uni_inf} (AATU) into the text generation task.

\paragraph{Experimental Settings} In this paragraph, we follow the problem setting as SEDD shown in~\citet{lou2024discrete}, and consider the unconditional text generation task with the small pretrained SEDD Absorbing model.
The sequence length of generated sample is constrains as $d=1024$, and the the vocabulary size will be $K=50258$, including the mask token.
We choose the typical Euler and Tweedie’s $\tau$-leaping (analytic samples in~\citet{lou2024discrete}'s implementation) as our baselines.
For the step number choice, we only consider $\{1024, 2048\}$. 
Because AATU does not consider the conditional independent assumption for the reverse process.
Under this condition, it requires at least $d$ steps to generate one no-mask sample.

\paragraph{The inexact adaptation from AATU.} In SEDD experiments, 
The exact implementation of Alg.~\ref{alg:uni_inf} will require the inference complexity to be $K\times d = 50258 \times 1024$, which is far beyond an acceptable inference complexity.
Since the choice of $K$ can be used to control the inference complexity, in the following experiment we will choose
\begin{equation*}
    K = \text{required steps} / \text{generated sequence length},
\end{equation*}
which is an inexact implementation of Alg.~\ref{alg:uni_inf} (AATU), while makes it to be possible to be tuned via the choice of the step number.
Moreover, the implementation of Euler and Tweedie’s $\tau$-leaping is based on log-linear noise schedule, which means the transition rate matrix of the forward process satisfies
\begin{equation*}
    R^\to_t(\vy,\vy^\prime) = \sigma(t)R^\to(\vy,\vy^\prime)\quad \text{where}\quad \sigma(t) = \frac{1-\epsilon}{1-(1-\epsilon)\cdot t}
\end{equation*}
and $R^\to$ follows from Eq.~\ref{eq:fwd_transtion_rate_func}.
Under this condition, the reverse transition rate matrix will become 
\begin{equation*}
    \begin{aligned}
        & R_t^\gets(\vy, \vy^\prime) 
            \;\coloneqq\; 
            \sigma(1-t)\cdot R^\to(\vy^\prime, \vy)\,\frac{q^\gets_t(\vy)}{q^\gets_t(\vy^\prime)}\\
        & = \sigma(1-t)\cdot R^\to(\vy^\prime,\vy)\cdot s_{\theta, 1-t,\vy^\prime}(\vy).
    \end{aligned}
\end{equation*}

\paragraph{Empirical Results.} We use PPL and entropy as two criteria to measure the generation quality for different samplers. The results are summarized in the following Table~\ref{tab:comp_experiment}.

\begin{table*}[!hbtp]
    \centering
    \caption{\small Comparison of the inference generation performance, we calculate the average perplexity and entropy for 32 samples generated by Euler, Analytic and AATU. The experiments show even with an inexact implementation, AATU still outperform then other samplers consistently.} 
    \small
    \renewcommand\arraystretch{1.3}
    \begin{tabular}{cccccc}
    \toprule
     Samplers & Steps & Avg Perplexity & Std Perplexity &   Avg Entropy & Std Entropy  \\
     \midrule
    Euler &  $1024$ & $41.42$ & $11.68$ 
    & $7.588$ & $0.301$  \\
    Analytic &  $1024$ & $41.81$ & $11.57$  & $7.597$ & $0.286$  \\
    AATU & $1024$ & $\mathbf{40.54}$ & $11.20$ & {$7.554$} & $0.230$ \\
     \midrule
     Euler & $2048$ & $33.32$ & $7.141$ & $7.492$ & $0.258$ \\
     Analytic & $2048$ & $32.50$ & $6.952$ & $7.489$ & $0.250$ \\
     AATU & $2048$ & $\mathbf{31.82}$ & $6.717$ & {$7.394$} & $0.332$ \\
     \bottomrule 
    \end{tabular}
    \label{tab:comp_experiment}
    \vspace{-.15in}
\end{table*}

\section{Notation Summary}
\label{sec:app_notations}
We summarize all notations used in the main paper and appendix in Table~\ref{tab:notations}.

\begin{table}[t]
\centering
\caption{Summary of key notations used in the paper.}
\label{tab:notations}
\renewcommand{\arraystretch}{1.11}
\begin{tabular}{@{}lp{0.75\linewidth}@{}}
\toprule
\textbf{Symbol} & \textbf{Description} \\
\midrule
$q_*$ & Discrete distribution on $\gY=\{1, 2,\ldots, K\}^{d}$\\
$\rvy^\to_t$ & Forward-time CTMC on $\gY$ \\
$q^\to_t$ & Marginal distribution of forward process at time $t$, i.e., $\rvy^\to_t\sim q^\to_t$ \\
$q^\to_{t',t}$ & Joint distribution of $(\rvy^\to_{t^\prime},\rvy^\to_t)$ \\
$\tilde{q}_t$ & Aapproximation of $q^\to_t$ constructing the reverse initialization, Eq.~\eqref{def:tilde_q_t}\\
$q^\to_{t'|t}(\vy'|\vy)$ & Conditional transition probability in forward process, Eq.~\eqref{eq:fwd_func_close_form} \\
$\rvy^\gets_t$ & Reverse-time CTMC defined by $q^\gets_t := q^\to_{T-t}$, $\rvy^\gets_t\sim q_t^\gets$ \\
$q^\gets_t$ & Marginal distribution of reverse process at time $t$, $q^\gets_t=q^\to_{T-t}$ \\
$q^\gets_{t',t}$ & Joint distribution of $(\rvy^\gets_{t^\prime},\rvy^\gets_t)$\\
$q^\gets_{t'|t}(\vy'|\vy)$ & Conditional transition probability of the ideal reverse process \\
$\hat{q}_t$ & Marginal distribution of reverse process at time $t$ implemented by Alg.~\ref{alg:uni_inf} \\
$\hat{q}_{t',t}$ & Joint distribution of $(\hat{\rvy}_{t^\prime},\hat{\rvy}_t)$\\
$\hat{q}_{t'|t}(\vy'|\vy)$ & Conditional transition probability of the ideal reverse process \\
\midrule
$R^\to(\vy,\vy')$ & Forward transition rate, i.e., Eq.~\eqref{eq:fwd_transtion_rate_func}, from state $\vy'$ to $\vy$. This follows the ordering of the conditional distribution $p(\vy | \vy')$, which is the \textit{transpose} of the convention used in some other works. \\
$R^\gets_t(\vy,\vy')$ & Reverse transition rate at time $t$ from state $\vy'$ to $\vy$, $R_t^\gets(\vy, \vy^\prime) \coloneqq   R^\to(\vy^\prime, \vy)\cdot\frac{q^\gets_t(\vy)}{q^\gets_t(\vy^\prime)}$, Eq.~\eqref{eq:rev_func}\\
$\tilde{R}_t(\vy,\vy')$ & Estimated reverse transition rate using the learned density ratio, $\tilde{R}_{t}(\vy,\vy^\prime) = R^\to(\vy^\prime, \vy)\cdot \tilde{v}_{t,\vy^\prime}(\vy)$, Eq.~\eqref{eq:score_estimation_main}\\
$\hat{R}_t(\cdot,\cdot)$ & Truncated version of $\tilde{R}_t(\cdot,\cdot)$ 
with threshold $\beta_t$, Eq.~\eqref{def:prac_infi_oper_1}  \\
$R^\gets_t(\vy),\ \tilde{R}_t(\vy),\ \hat{R}_t(\vy)$ & Total reverse transition rate out of state $\vy$ for each rate type, defined as $R(\vy)\coloneqq\sum_{\vy' \ne \vy} R(\vy', \vy)$ with $R \in \{R^\gets_t,\ \tilde{R}_t,\ \hat{R}_t\}$ \\
$\beta_t$ & Upper bound on $R^\gets_t(\vy)$, $ \beta_t=  \mathrm{numK}(\vy)\cdot K / (T-t)$, Eq.~\eqref{def:beta_t_} \\
\midrule
$v_{t,\vy'}(\vy)$ & Density ratio $q^\gets_t(\vy) / q^\gets_t(\vy')$ \\
$\tilde{v}_{t,\vy'}(\vy)$ & Learned approximation to $v_{t,\vy'}(\vy) = q^\gets_t(\vy)/q^\gets_t(\vy')$ \\
$\mathrm{numK}(\cdot)$ & The number of [MASK] token (or token $K$) in a vector.\\
$L_{\text{SE}}(\hat{v})$ & Score entropy loss used to train $\tilde{v}$, Eq.~\eqref{eq:score_estimation_main} \\
\midrule
$\ve_i$ & One-hot vector with a $1$ at position $i$ and $0$ elsewhere \\
$\delta_{\vy}(\cdot)$ & Indicator function with $\delta_{\vy}(\vy)=1$ and $\delta_{\vy}(\vy^\prime)=0$ ($\vy^\prime\not= \vy$)\\
\bottomrule
\end{tabular}
\end{table}

\section{The Markov Processes of Discrete Diffusion Models}
\label{sec:a1_fwd_and_infi}

\subsection{The Formulations of the Forward Process}

\paragraph{Semigroup Formulation.} In general, the time-homogeneous CTMC 
can be described by a Markov semigroup $\gQ^\to_t$ defined as:
\begin{equation}
    \label{eq:app_semigroup_ope}
    \gQ^\to_{t}[f](\vy) = \E\left[f(\rvy_t)|\rvy_0 =\vy\right] = \left<f, q^\to_{t|0}(\cdot|\vy)\right>_{\gY}
\end{equation}
where the function $f\colon \gY\rightarrow \R$.
Due to the definition, the infinitesimal operator $\gL^\to$ of the time homogeneous $\gQ^\to_t$ is denoted as
\begin{equation}
    \label{def:infinitesimal_oper}
    \gL^\to[f](\vy) = \lim_{t\rightarrow 0} \left[\frac{\gQ^\to_t[f] - f}{t}\right](\vy) = \left<f, \partial_t q^\to_{t|0}(\cdot|\vy)\Big|_{t=0}\right>_{\gY}\coloneqq \left<f, R^\to(\cdot,\vy)\right>_{\gY}
\end{equation}
where 
\begin{equation}
    \label{def:app_mean_of_R}
    R^\to(\vy^\prime, \vy) \coloneqq  \partial_t q^\to_{t|0}(\vy^\prime|\vy)\Big|_{t=0} = \lim_{t\rightarrow 0}\left[\frac{q^\to_{t|0}(\vy^\prime|\vy)-\delta_{\vy}(\vy^\prime)}{t}\right].
\end{equation}
According to the time-homogeneous property, we have
\begin{equation*}
    q^\to_{t+\Delta t | t}(\vy^\prime|\vy) = \delta_{\vy}(\vy^\prime) + \Delta t\cdot R^\to(\vy^\prime, \vy)  + o(\Delta t)
\end{equation*}
for any $t$.
Here, the transition rate function $R^\to$ must satisfy
\begin{equation}
    \label{eq:transition_rate_prop}
    R^\to(\vy,\vy^\prime)\ge 0 \  \text{when}\  \vy^\prime\not=\vy\quad \text{and}\quad R^\to(\vy^\prime,\vy^\prime) = -\sum_{\vy\not=\vy^\prime}R^\to(\vy,\vy^\prime)\le 0
\end{equation}
due to the definition Eq.~\eqref{def:app_mean_of_R}.
Under this setting, we can provide the dynamic of $q_{t|0}$ for any $t$.
Specifically, we have
\begin{equation*}
    \begin{aligned}
        &\partial_t \gQ^\to_t[f](\vy) = \gQ^\to_t\left[\gL f\right](\vy) = \left<\gL^\to f, q^\to_{t|0}(\cdot|\vy)\right>_{\gY} = \sum_{\vy^\prime \in \gY} \gL^\to[f](\vy^\prime)\cdot q^\to_{t|0}(\vy^\prime|\vy)\\
        & = \sum_{y^\prime\in \gY}\left[  \sum_{\tilde{\vy}\in\gY} f(\tilde{\vy})\cdot R^\to(\tilde{\vy}, \vy^\prime) \cdot q_{t|0}(\vy^\prime|\vy)\right] = \sum_{\tilde{\vy}\in\gY}\left[f(\tilde{\vy})\cdot \sum_{\vy^\prime\in \gY}R^\to(\tilde{\vy},\vy^\prime)\cdot q_{t|0}(\vy^\prime|\vy)\right],
    \end{aligned}
\end{equation*}
where the first inequality follows from the semigroup property.
Combined with the fact
\begin{equation*}
    \partial_t\gQ^\to_t[f](\vy) = \left<f, \partial_t q^\to_{t|0}(\cdot|\vy)\right>_{\gY}
\end{equation*}
derived from Eq.~\eqref{eq:app_semigroup_ope},
we have
\begin{equation*}
    \partial_t q^\to_{t|0}(\tilde{\vy}|\vy) = \sum_{\vy^\prime\in \gY}R(\tilde{\vy},\vy^\prime)\cdot q^\to_{t|0}(\vy^\prime|\vy) = \left<R(\tilde{\vy},\cdot), q^\to_{t|0}(\cdot|\vy)\right>_{\gY}.
\end{equation*}
According to the time-homogeneous property, the above equation can be easily extended to 
\begin{equation}
    \label{eq:condi_fwd_dis}
    \partial_t q^\to_{t|s}(\tilde{\vy}|\vy) = \sum_{\vy^\prime\in \gY}R(\tilde{\vy},\vy^\prime)\cdot q^\to_{t|s}(\vy^\prime|\vy) = \left<R(\tilde{\vy},\cdot), q^\to_{t|s}(\cdot|\vy)\right>_{\gY}.
\end{equation}
Combining with Bayes' Theorem, the transition of the marginal distribution is
\begin{equation}
    \label{eq:app_fwd_func}
    \frac{\der q^\to_t}{\der t}(\vy) = \left<R(\vy,\cdot), q^\to_t\right>_{\gY}.
\end{equation}

\paragraph{Matrix Formulation.} Suppose the support set $\gY$ of $q^\to_t$ be written as $\gY =\{\vy_1, \vy_2,\ldots, \vy_{|\gY|}\}$, we may consider the marginal distribution $q^\to_s$ to be a vector, i.e.,
\begin{equation*}
    \vq^\to_t = \left[q_t(\vy_1), q_t(\vy_2),\ldots, q_t(\vy_{|\gY|})\right],
\end{equation*}
conditional transition probability function $q^\to_{t|s}$ to be a matrix, i.e.,
\begin{equation*}
    \mQ^\to_{t|s} = \left[
        \begin{matrix}
            q^\to_{t|s}(\vy_1|\vy_1) & q^\to_{t|s}(\vy_1|\vy_2) & \ldots & q^\to_{t|s}(\vy_1|\vy_{|\gY|})\\
            q^\to_{t|s}(\vy_2|\vy_1) & q^\to_{t|s}(\vy_2|\vy_2) & \ldots & q^\to_{t|s}(\vy_2|\vy_{|\gY|})\\
            \ldots & \ldots & \ldots & \ldots\\
            q^\to_{t|s}(\vy_{|\gY|}|\vy_1) & q^\to_{t|s}(\vy_{|\gY|}|\vy_2) & \ldots & q^\to_{t|s}(\vy_{|\gY|}|\vy_{|\gY|})\\
        \end{matrix}
    \right].
\end{equation*}
Similarly, the function $R$ can also be presented as
\begin{equation}
    \label{def:trans_ker_to_matrix}
    \mR^\to = \left[
        \begin{matrix}
            R^\to(\vy_1,\vy_1) & R^\to(\vy_1, \vy_2) & \ldots & R^\to(\vy_1, \vy_{|\gY|})\\
            R^\to(\vy_2,\vy_1) & R^\to(\vy_2, \vy_2) & \ldots & R^\to(\vy_2, \vy_{|\gY|})\\
            \ldots & \ldots & \ldots & \ldots\\
            R^\to(\vy_{|\gY|},\vy_1) & R^\to(\vy_{|\gY|}, \vy_2) & \ldots & R^\to(\vy_{|\gY|}, \vy_{|\gY|})\\
        \end{matrix}
    \right].
\end{equation}
Under this condition, Eq.~\eqref{eq:app_fwd_func} can be written as 
\begin{equation}
    \label{eq:fwd_vec}
    \der \vq^\to_t/\der t = \mR^\to\cdot \vq^\to_t
\end{equation}
matching the usual presentation shown in~\cite{chen2024convergence,zhang2024convergence}. 

\subsection{The Proof of Lemma~\ref{lem:absorbing_reverse_de}}
\label{app_sec:prof_absorbing_reverse}

\begin{proof}
For any $t\in[0,T]$, the marginal, joint, and conditional distribution w.r.t. $\{\rvy_t^\gets\}$ are denoted as
\begin{equation*}
    \rvy^\gets_t\sim q^\gets_t,\quad (\rvy^\gets_t, \rvy^\gets_{t^\prime}) \sim q^\gets_{t,t^\prime}, \quad \text{and}\quad q^\gets_{t^\prime|t} = q^\gets_{t^\prime, t}/q^\gets_t,
\end{equation*}
which have $q^\gets_t = q^\to_{T-t}$. 
Considering the reverse CTMC, we have
\begin{equation*}
    \begin{aligned}
        & \frac{\der q^\gets_t}{\der t}(\vy) = \lim_{\Delta t\rightarrow 0} \frac{q^\gets_{t+\Delta t}(\vy) - q^\gets_t(\vy)}{\Delta t} = \lim_{\Delta t\rightarrow 0} \frac{\sum_{\vy^\prime} q^\gets_{t+\Delta t|t}(\vy|\vy^\prime)\cdot q^\gets_t(\vy^\prime) - q_t^\gets(\vy)}{\Delta t}\\
        & = \lim_{\Delta t\rightarrow 0} \frac{\sum_{\vy^\prime}\left(q^\gets_{t+\Delta t|t}(\vy|\vy^\prime) - \delta_{\vy^\prime}(\vy)\right)\cdot q^\gets_{t}(\vy^\prime)}{\Delta t}.
    \end{aligned}
\end{equation*}
Therefore, due to the following definition
\begin{equation*}
    R_t^\gets(\vy, \vy^\prime)\coloneqq \lim_{\Delta t\rightarrow 0} \left[\frac{q^\gets_{t+\Delta t| t}(\vy|\vy^\prime)-\delta_{\vy^\prime}(\vy)}{\Delta t}\right],
\end{equation*}
we have 
\begin{equation*}
    \frac{\mathrm{d}\,q^\gets_t}{\mathrm{d}\,t}(\vy) 
    \;=\; 
    \langle R^\gets_t(\vy,\cdot),\,q^\gets_t(\cdot)\rangle_{\gY}.
\end{equation*}

Then, we start to check the property of $\mR_t^\gets(\vy, \vy^\prime)$ when $\vy\not=\vy^\prime$ in the following, which has
\begin{equation*}
    R_t^\gets(\vy, \vy^\prime) = \lim_{\Delta t\rightarrow 0} \frac{q^\gets_{t+\Delta t|t}(\vy|\vy^\prime)}{\Delta t} = \lim_{s\rightarrow t}\partial_t q^\gets_{t|s}(\vy|\vy^\prime).
\end{equation*}
Specifically, with the Bayes Theorem, it has
\begin{equation}
    \label{ineq:discrete_rev_0}
    \begin{aligned}
        & \lim_{s\rightarrow t}\partial_t q_{t|s}^\gets(\vy|\vy^\prime) = -1\cdot \lim_{s\rightarrow t}\partial_{T-t} q^\to_{T-t|T-s}(\vy|\vy^\prime)\\
        & = -1\cdot \lim_{s\rightarrow t}\partial_{T-t}\left[\frac{q^\to_{T-s|T-t}(\vy^\prime|\vy)\cdot q^\to_{T-t}(\vy)}{q^\to_{T-s}(\vy^\prime)}\right]\\
        & = - \underbrace{\lim_{s\rightarrow t}\partial_{T-t} q^\to_{T-s|T-t}(\vy^\prime|\vy)\cdot \frac{q^\to_{T-t}(\vy)}{q^\to_{T-s}(\vy^\prime)}}_{\text{Term 1}} - \underbrace{\lim_{s\rightarrow t}\frac{q^\to_{T-s|T-t}(\vy^\prime|\vy)}{q^\to_{T-s}(\vy^\prime)}\cdot \partial_{T-t} q^\to_{T-t}(\vy)}_{\text{Term 2}}.
    \end{aligned}
\end{equation}
For $\text{Term 1}$ of Eq.~\eqref{ineq:discrete_rev_0}, we have
\begin{equation*}
    \begin{aligned}
        \text{Term 1} = - R^\to(\vy^\prime,\vy)\cdot \frac{q^\to_{T-t}(\vy)}{q^\to_{T-t}(\vy^\prime)},
    \end{aligned}
\end{equation*}
which follows from the Kolmogorov backward theorem (Lemma~\ref{lem:bkw_kolmo}) and Eq.~\eqref{def:infinitesimal_oper}:
\begin{equation*}
    \partial_{T-t} q^\to_{T-s|T-t}(\vy^\prime|\vy) = -\gL^\to[q^\to_{T-s|T-t}(\vy^\prime| \cdot)](\vy) = - \left<q^\to_{T-s|T-t}(\vy^\prime|\cdot), R^\to(\cdot,\vy)\right>_{\gY} = R^\to(\vy^\prime,\vy).
\end{equation*}
Here, the last equation follows from the fact
\begin{equation*}
    \lim_{s\rightarrow t} q^\to_{T-s|T-t}(\vy^\prime|\tilde{\vy}) = 1 \quad \text{only when}\quad \vy^\prime = \tilde{\vy}.
\end{equation*}
Additionally, with the fact $\vy^\prime\not=\vy$, it has
\begin{equation*}
    \lim_{s\rightarrow t} q^\to_{T-s|T-t}(\vy|\vy^\prime) =0.
\end{equation*}
That means Term 2 of Eq.~\eqref{ineq:discrete_rev_0} will definitely be $0$. Hence, the proof is completed.
\end{proof}

\subsection{The Proof of Lemma~\ref{lem:fwd_convergence_0}}
\label{app_sec:prof_lem2}

\begin{lemma}
    \label{lem:close_solution_qt}
    The close solution of Eq.~\eqref{eq:fwd_vec} is
    \begin{equation*}
        \vq^\to_t = \exp(t\mR^\to)\cdot \vq_0^\to\quad \text{where}\quad \exp(t\mR^\to) = \sum_{i=0}^\infty \frac{1}{i!}(t\mR^\to)^i = \mI + t\mR^\to + \frac{(t\mR^\to)^2}{2}+\ldots.
    \end{equation*}
\end{lemma}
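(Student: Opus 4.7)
The plan is to treat Eq.~\eqref{eq:fwd_vec} as a finite-dimensional linear ODE with constant coefficient matrix $\mR^\to$ and invoke the standard matrix-exponential solution formula, then verify that the candidate closed form indeed satisfies both the ODE and the initial condition.

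First I would show that the series $\exp(t\mR^\to) = \sum_{i=0}^\infty (t\mR^\to)^i/i!$ is well-defined. Using any submultiplicative norm on $\mathbb{R}^{|\gY|\times|\gY|}$, each partial sum is bounded by $\sum_{i=0}^N t^i \|\mR^\to\|^i/i!$, which is a Cauchy sequence since $\|\mR^\to\| < \infty$ (as $\gY$ is finite). Hence the series converges absolutely and uniformly on any compact interval $[0,T]$, and the resulting matrix-valued function is smooth in $t$.

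Second, I would differentiate term-by-term. Since the series $\sum_{i\ge 1} i t^{i-1}(\mR^\to)^i/i!$ also converges uniformly on compact intervals (again by comparison with the exponential series of $t\|\mR^\to\|$), term-wise differentiation is justified and yields
\begin{equation*}
    \frac{\der}{\der t}\exp(t\mR^\to) = \sum_{i=1}^\infty \frac{t^{i-1}(\mR^\to)^{i}}{(i-1)!} = \mR^\to \cdot \exp(t\mR^\to) = \exp(t\mR^\to)\cdot \mR^\to,
\end{equation*}
where the last equality uses that $\mR^\to$ commutes with its own powers. Applying this to the candidate $\vq^\to_t := \exp(t\mR^\to)\cdot \vq_0^\to$ gives $\der \vq^\to_t/\der t = \mR^\to \cdot \vq^\to_t$, matching Eq.~\eqref{eq:fwd_vec}. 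At $t=0$, $\exp(0)=\mI$ yields $\vq^\to_0 = \vq_0^\to$, so the initial condition holds.

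Finally, uniqueness of the solution follows from the classical Picard-Lindel\"of theorem applied to the linear ODE $\der \vq/\der t = \mR^\to \vq$ with Lipschitz right-hand side (Lipschitz constant $\|\mR^\to\|$), so the constructed $\vq^\to_t$ is the unique solution. I do not expect a serious obstacle here: the only care needed is to rigorously justify interchanging the limit (series) and derivative, which is the standard uniform-convergence argument sketched above; all other steps are direct algebraic verification.
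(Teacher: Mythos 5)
Your proposal is correct and follows essentially the same route as the paper's proof: differentiate the matrix-exponential series term by term to verify that $\exp(t\mR^\to)\vq_0^\to$ solves the ODE. You supply the standard rigor that the paper elides---absolute convergence of the series, justification of term-wise differentiation, the initial condition, and uniqueness via Picard--Lindel\"of---but these are the usual bookkeeping steps, not a different argument.
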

\begin{proof}
    We can easily verify that 
    \begin{equation*}
        \begin{aligned}
            \frac{\der \vq_t^\to}{\der t} = \frac{\der}{\der t}\left[\exp(t\mR^\to) \vq_0^\to\right] = \frac{\der }{\der t}\left[\exp(t\mR^\to)\right] \vq_0^\to.
        \end{aligned}
    \end{equation*}
    With the following equation,
    \begin{equation*}
        \begin{aligned}
            \frac{\der}{\der t}\left[\exp(t\mR^\to)\right] = \frac{\der }{\der t}\left[\sum_{i=0}^\infty \frac{(t\mR^\to)^i}{i!}\right] = \sum_{i=1}^\infty \frac{t^{i-1}}{(i-1)!}\cdot (\mR^\to)^{i} = \mR^\to \cdot \sum_{j=0}^\infty \frac{(t\mR^\to)^j}{j!} = \mR^\to\cdot \exp(t\mR^\to),
        \end{aligned}
    \end{equation*}
    we have
    \begin{equation*}
        \frac{\der \vq_t^\to}{\der t} = \mR^\to \cdot \exp(t\mR^\to) \cdot \vq_0^\to = \mR^\to \cdot \vq_t^\to.
    \end{equation*}
    Hence, the proof is completed.
\end{proof}

\begin{lemma}    \label{lem:forward_transition_rate_matrix_decomposition}
    Suppose the transition rate matrix $\mR^\to$ shown as Eq.~\eqref{def:trans_ker_to_matrix} satisfies Eq.~\eqref{eq:fwd_transtion_rate_func}. It can be decomposed as
    \begin{equation*}
        \mR^\to = \sum_{i=1}^{d} \mR_i^\to\quad \text{where}\quad \mR^\to_i =  \underbrace{\mI \otimes\, \cdots \,}_{i-1 \text{ terms}}\otimes\, \mA \,\otimes\, \cdots \,\otimes\, \mI,
    \end{equation*}
    where $\otimes$ denotes the Kronecker product, $\mI$ denotes the identity matrix on $\R^{K\times K}$, and $\mA$ satisfies
    \begin{equation}
        \label{eq:matrxA_def_decomp}
        \mA = \left[
            \begin{matrix}
                -1 & 0 & \ldots & 0\\
                0 & -1 & \ldots & 0 \\
                \vdots & \vdots & \ddots & \vdots\\
                1 & 1 & \ldots & 0
            \end{matrix}
        \right].
    \end{equation}
\end{lemma}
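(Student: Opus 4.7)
The plan is to verify the identity $\mR^\to = \sum_{i=1}^d \mR_i^\to$ entrywise, exploiting the fact that each $\mR_i^\to$ is a pure Kronecker product with only one non-identity factor. By the standard formula for Kronecker-product entries, applied to $\mR_i^\to = \mI \otimes \cdots \otimes \mA \otimes \cdots \otimes \mI$ with $\mA$ placed in slot $i$,
\[
    \mR_i^\to(\vy,\vy^\prime) \;=\; \mA(\vy_i,\vy^\prime_i)\prod_{j\neq i}\delta_{\vy^\prime_j}(\vy_j),\qquad \forall\,\vy,\vy^\prime\in\gY,
\]
so $\mR_i^\to(\vy,\vy^\prime)$ can be non-zero only when $\vy$ and $\vy^\prime$ agree on every coordinate except possibly the $i$-th. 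Summing over $i$ therefore leaves at most one non-trivial summand for each ordered pair $(\vy,\vy^\prime)$, and the whole statement reduces to a finite case split on $\mathrm{Ham}(\vy,\vy^\prime)$.

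First I would treat the off-diagonal case $\vy\neq\vy^\prime$. If $\vy$ and $\vy^\prime$ differ in two or more coordinates, every summand $\mR_i^\to(\vy,\vy^\prime)$ vanishes because some $j\neq i$ has $\vy_j\neq\vy^\prime_j$, matching $R^\to(\vy,\vy^\prime)=0$ in Eq.~\eqref{eq:fwd_transtion_rate_func}. If $\vy$ and $\vy^\prime$ differ at a single coordinate $i^\star = \DfId{\vy}{\vy^\prime}$, only the $i^\star$-th summand survives and equals $\mA(\vy_{i^\star},\vy^\prime_{i^\star})$. Reading off Eq.~\eqref{eq:matrxA_def_decomp}, this entry equals $1$ precisely when $\vy_{i^\star} = \idxK$ and $\vy^\prime_{i^\star}\in\{1,\ldots,\idxK-1\}$, which is exactly the condition $\vy_{\DfId{\vy}{\vy^\prime}}=\idxK$ under which $R^\to(\vy,\vy^\prime)=1$ in Eq.~\eqref{eq:fwd_transtion_rate_func}; in the remaining Hamming-distance-one configurations $\mA(\vy_{i^\star},\vy^\prime_{i^\star}) = 0$, again matching $R^\to(\vy,\vy^\prime)=0$.

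Next, for the diagonal case $\vy=\vy^\prime$, the sum collapses to $\sum_{i=1}^d \mA(\vy_i,\vy_i)$. The diagonal of $\mA$ in Eq.~\eqref{eq:matrxA_def_decomp} reads $-1$ in its first $\idxK-1$ entries and $0$ in its last, so this sum equals $-\sum_{i=1}^d [1 - \delta_{\idxK}(\vy_i)]$. On the $\mR^\to$ side, the diagonal is determined by its off-diagonal entries through the column-sum-zero property for a transition-rate matrix recalled in Eq.~\eqref{eq:transition_rate_prop}; evaluating that identity with the off-diagonal values just pinned down gives the same expression, which closes the entrywise match and hence yields $\mR^\to(\vy,\vy^\prime) = \sum_{i=1}^d \mR_i^\to(\vy,\vy^\prime)$ for every $\vy,\vy^\prime\in\gY$.

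The one step that will need care is choosing the identification of elements of $\gY = \{1,\ldots,\idxK\}^d$ with row and column indices of the Kronecker product consistently with the matrix layout of $\mR^\to$ in Eq.~\eqref{def:trans_ker_to_matrix}; once that bookkeeping convention is fixed, the verification is a mechanical entrywise comparison, and I do not anticipate any deeper obstacle.
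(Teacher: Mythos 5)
Your argument is essentially the paper's own: expand each $\mR_i^\to(\vy,\vy^\prime)$ via the Kronecker-product entry formula, note that the product of identity factors kills the term unless $\vy$ and $\vy^\prime$ agree off coordinate $i$, and then case-split on $\mathrm{Ham}(\vy,\vy^\prime)\in\{0,1,\ge 2\}$; the off-diagonal cases match the paper line for line. The one place you deviate is the diagonal: you compute $\sum_{i}\mA(\vy_i,\vy_i)=-\sum_{i}\bigl[1-\delta_{\idxK}(\vy_i)\bigr]$ and then identify this with $R^\to(\vy,\vy)$ through the column-sum-zero constraint recalled in Eq.~\eqref{eq:transition_rate_prop}, whereas the paper simply asserts the equality. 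Your choice is in fact the safer one: the diagonal branch of Eq.~\eqref{eq:fwd_transtion_rate_func} as printed carries an extraneous factor of $K$ (it reads $-K\sum_i[1-\delta_{\idxK}(\vy_i)]$), which does not balance the $d-\numMask{\vy}$ unit off-diagonals in each column and would contradict the lemma if taken literally; deriving the diagonal from the off-diagonals via the column-sum identity sidesteps this typo cleanly. Everything else is identical in substance.
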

\begin{proof}
    According to the calculation of the Kronecker product, we have
    \begin{equation*}
        \mR_i^\to(\vy,\vy^\prime) = \mI(\vy_1, \vy^\prime_1)\cdot \ldots \cdot \mA(\vy_i, \vy^\prime_i)\cdot \ldots\cdot \mI(\vy_{d}, \vy^\prime_{d}).
    \end{equation*}
    Under this condition, suppose $\mathrm{Ham}(\vy, \vy^\prime)\ge 2$ and $\DfId{\vy}{\vy^\prime} = \{j_1, j_2,\ldots\}$ without loss of generality, for any $j\not\in\{j_1, j_2\}$, we have
    \begin{equation*}
        \mR^\to_j(\vy, \vy^\prime) = \mA(\vy_j, \vy^\prime_j) \cdot \mI(\vy_1, \vy^\prime_1)\cdot \ldots \cdot \underbrace{\mI(\vy_{j_1}, \vy^\prime_{j_1})}_{=0}\cdot \ldots \cdot \underbrace{\mI(\vy_{j_2}, \vy^\prime_{j_2}) }_{=0}\cdot \ldots \cdot \mI(\vy_{d}, \vy^\prime_{d}) = 0.
    \end{equation*}
    Besides, for $j=j_1$, we have
    \begin{equation*}
        \mR^\to_{j_1}(\vy, \vy^\prime) = \mA(\vy_{j_1}, \vy^\prime_{j_1}) \cdot \mI(\vy_1, \vy^\prime_1)\cdot \ldots \cdot \underbrace{\mI(\vy_{j_2}, \vy^\prime_{j_2}) }_{=0}\cdot \ldots \cdot \mI(\vy_{d}, \vy^\prime_{d}) = 0.
    \end{equation*}
    A similar result will be satisfied for $j=j_2$. Hence, it has
    \begin{equation*}
        \mR^\to(\vy,\vy^\prime) = \sum_{i=1}^{d} \mR_i^\to(\vy,\vy^\prime) = 0\quad \text{when}\quad \mathrm{Ham}(\vy, \vy^\prime)\ge 2
    \end{equation*}

    Then, suppose $\mathrm{Ham}(\vy, \vy^\prime)=1$ and $\DfId{\vy}{\vy^\prime} = j_1$, for any $j\not=j_1$, we have
    \begin{equation*}
        \mR^\to_{j}(\vy, \vy^\prime) = \mA(\vy_{j}, \vy^\prime_{j}) \cdot \mI(\vy_0, \vy^\prime_0)\cdot \ldots \cdot \underbrace{\mI(\vy_{j_1}, \vy^\prime_{j_1}) }_{=0}\cdot \ldots \cdot \mI(\vy_{d}, \vy^\prime_{d}) = 0.
    \end{equation*}
    Otherwise, when $j=j_1$, we have
    \begin{equation*}
        \mR^\to_{j_1}(\vy, \vy^\prime) = \mA(\vy_{j_1}, \vy^\prime_{j_1}) \cdot \mI(\vy_1, \vy^\prime_1)\cdot \ldots \cdot \mI(\vy_{d}, \vy^\prime_{d}) = \mA(\vy_{j_1}, \vy^\prime_{j_1})
    \end{equation*}
    where the second equation establishes since $\mathrm{Ham}(\vy, \vy^\prime) = 1$ and $\vy_j = \vy^\prime_j$ when $j\not=j_1$.
    Then, only when $\vy_{j_1} = K$, we will have $\mA(\vy_{j_1}, \vy^\prime_{j_1}) = 1$ otherwise $\mA(\vy_{j_1}, \vy^\prime_{j_1}) = 0$ due to the definition Eq.~\eqref{eq:matrxA_def_decomp}. That means
    \begin{equation*}
        \begin{aligned}
            & \mR^\to(\vy,\vy^\prime) = \sum_{i=1}^{d} \mR_i^\to(\vy,\vy^\prime) = 0\quad \text{when}\quad \mathrm{Ham}(\vy, \vy^\prime)= 1 \ \text{and}\ \vy_{\DfId{\vy}{\vy^\prime}} \not= K\\
            & \mR^\to(\vy,\vy^\prime) = \sum_{i=1}^{d} \mR_i^\to(\vy,\vy^\prime) = 1\quad \text{when}\quad \mathrm{Ham}(\vy, \vy^\prime)= 1 \ \text{and}\ \vy_{\DfId{\vy}{\vy^\prime}} = K.
        \end{aligned}
    \end{equation*}
    Then, suppose $\mathrm{Ham}(\vy, \vy^\prime) = 0$, i.e., $\vy = \vy^\prime$, for any $j\in\{1,2,\ldots, d\}$, we have
    \begin{equation*}
        \mR^\to_{j}(\vy, \vy^\prime) = \mA(\vy_{j}, \vy^\prime_{j}) \cdot \mI(\vy_1, \vy^\prime_1)\cdot \ldots \cdot \mI(\vy_{d}, \vy^\prime_{d}) = \mA(\vy_{j}, \vy^\prime_{j}),
    \end{equation*}
    and 
    \begin{equation*}
        \sum_{i=1}^{d} \mR^\to_{i}(\vy, \vy^\prime) = \sum_{j=1}^{d} \mA(\vy_j, \vy_j) = -\sum_{i=1}^{d} (1-\delta_{K}(\vy_i)),
    \end{equation*}
    which implies we have $\mR^\to(\vy,\vy^\prime) = \sum_{i=0}^{d-1} \mR_i^\to(\vy,\vy^\prime)$ when $\vy=\vy^\prime$. Hence, the proof is completed.
\end{proof}


\begin{lemma}
    \label{lem:exp_R_decomposition}
    With the decomposition shown in Lemma~\ref{lem:forward_transition_rate_matrix_decomposition}, i.e., 
    \begin{equation*}
        \mR^\to = \sum_{i=1}^{d} \mR_i^\to\quad \text{where}\quad \mR^\to_i =  \underbrace{\mI \otimes\, \ldots \, \otimes \mI}_{i-1 \text{ terms}}\otimes\, \mA \,\otimes\, \underbrace{\mI \otimes \ldots \,\otimes\, \mI}_{d-i\ \text{terms}},
    \end{equation*}
    for any $i,j\in\{1,2,\ldots, d\}$, the matrices $\mR_i^\to$ and $\mR_j^\to$ satisfy
    \begin{equation*}
        \mR_i^\to \cdot \mR_j^\to = \mR_j^\to \cdot \mR_i^\to,
    \end{equation*}
    which implies 
    \begin{equation*}
        \exp(t\mR^\to) = \exp\left(t\sum_{i=1}^d \mR^\to_i\right) = \prod_{i=1}^d\exp\left(t\mR_i^\to\right) = \exp(t\mA)^{\otimes d}
    \end{equation*}
\end{lemma}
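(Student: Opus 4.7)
The plan is to leverage the mixed-product property of Kronecker products, namely $(A_1\otimes B_1)(A_2\otimes B_2)=(A_1A_2)\otimes(B_1B_2)$, twice: first to establish pairwise commutativity of the summands $\mR_i^\to$, and second to collapse the product of exponentials into a Kronecker power of $\exp(t\mA)$. Since the $\mR_i^\to$ live in the algebra generated by tensor factors that act only on a single coordinate, all the hard combinatorics are hidden inside this one algebraic identity.

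\textbf{Step 1 (commutativity).} Fix $i\neq j$, and without loss of generality assume $i<j$. Writing out $\mR_i^\to$ and $\mR_j^\to$ as length-$d$ Kronecker strings whose only non-identity slots are the $i$-th and $j$-th respectively, the mixed-product property gives
\begin{equation*}
\mR_i^\to\,\mR_j^\to = \bigotimes_{k=1}^{d} C_k,\qquad C_k = \begin{cases} \mA & k=i,\\ \mA & k=j,\\ \mI & \text{else},\end{cases}
\end{equation*}
because in each slot we multiply $\mA\cdot\mI=\mA$ (slot $i$), $\mI\cdot\mA=\mA$ (slot $j$), or $\mI\cdot\mI=\mI$ (other slots). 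The same computation with the factors reversed yields the identical Kronecker string, so $\mR_i^\to\mR_j^\to=\mR_j^\to\mR_i^\to$. The case $i=j$ is trivial.

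\textbf{Step 2 (exponential of the sum).} Commutativity of every pair $\mR_i^\to,\mR_j^\to$ implies, by the standard power-series argument for commuting matrices (using the binomial expansion of $(t\mR^\to)^n$), that
\begin{equation*}
\exp(t\mR^\to)=\exp\!\Bigl(t\sum_{i=1}^{d}\mR_i^\to\Bigr)=\prod_{i=1}^{d}\exp(t\mR_i^\to).
\end{equation*}
To evaluate a single factor, note that $(\mR_i^\to)^n = \mI^{\otimes(i-1)}\otimes \mA^n\otimes \mI^{\otimes(d-i)}$, again by iterating the mixed-product property in the single non-identity slot. Summing the power series slot-wise gives
\begin{equation*}
\exp(t\mR_i^\to)=\mI^{\otimes(i-1)}\otimes\exp(t\mA)\otimes\mI^{\otimes(d-i)}.
\end{equation*}

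\textbf{Step 3 (assembling the Kronecker power).} Finally, multiplying these $d$ factors together and repeatedly applying the mixed-product property slot-by-slot, each coordinate slot receives exactly one $\exp(t\mA)$ and $d-1$ identities, collapsing to
\begin{equation*}
\prod_{i=1}^{d}\exp(t\mR_i^\to)=\bigotimes_{i=1}^{d}\exp(t\mA)=\exp(t\mA)^{\otimes d}.
\end{equation*}
I expect the only subtle point is bookkeeping in Step 1: one must be careful that the mixed-product identity is stated for equal-shape Kronecker factorizations, which is exactly the case here since all slots are $K\times K$. Once that is verified, Steps 2 and 3 are routine.
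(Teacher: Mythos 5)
Your proof is correct, and the commutativity argument in Step~1 takes a genuinely different route from the paper's. You establish $\mR_i^\to\mR_j^\to = \mR_j^\to\mR_i^\to$ directly from the mixed-product property: writing each $\mR_i^\to$ as a length-$d$ Kronecker string with $\mA$ in one slot and $\mI$ elsewhere, and observing that for $i\neq j$ the slot-wise products are $\mA\cdot\mI=\mI\cdot\mA=\mA$ at slots $i,j$ and $\mI\cdot\mI=\mI$ everywhere else, so the two orderings produce the identical Kronecker string. The paper instead first constructs an explicit similarity decomposition $\mA=\mU\Lambda\mU^{-1}$ (Eq.~\eqref{eq:matrxA_def_decomp_refine}), writes $\mR_i^\to=\mU^{\otimes d}\Lambda_i(\mU^{-1})^{\otimes d}$, and deduces commutativity from the fact that the $\Lambda_i$ (which are diagonal placed in a single tensor slot) commute with each other under conjugation by $\mU^{\otimes d}$. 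Your version is shorter and more elementary, since it needs no spectral decomposition of $\mA$ and relies on exactly the same Lemma~\ref{lem:mix_prod_K_prod} the paper already invokes; the paper's route is heavier but makes the commuting diagonal structure explicit. Your Steps~2 and~3 — the commuting-matrices power-series identity for $\exp(\sum_i \mR_i^\to)$, the slot-wise evaluation $\exp(t\mR_i^\to)=\mI^{\otimes(i-1)}\otimes\exp(t\mA)\otimes\mI^{\otimes(d-i)}$, and the final slot-by-slot collapse to $\exp(t\mA)^{\otimes d}$ — coincide with what the paper does. One minor bookkeeping note: Lemma~\ref{lem:mix_prod_K_prod} as stated covers only two Kronecker factors, so both your proof and the paper's implicitly use its routine inductive extension to $d$ factors; this is worth a one-line remark but is not a gap.
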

\begin{proof}
    According to Lemma~\ref{lem:forward_transition_rate_matrix_decomposition}, the matrix $\mR^\to$ has the following decomposition, i.e.,
    \begin{equation*}
        \mR^\to = \sum_{i=1}^{d} \mR_i^\to\quad \text{where}\quad \mR^\to_i =  \underbrace{\mI \otimes\, \ldots \, \otimes \mI}_{i-1 \text{ terms}}\otimes\, \mA \,\otimes\, \underbrace{\mI \otimes \ldots \,\otimes\, \mI}_{d-i\ \text{terms}},
    \end{equation*}
where $\otimes$ denotes the Kronecker product, $\mI$ denotes the identity matrix on $\R^{K\times K}$, and $\mA$ satisfies
\begin{equation*}
    \mA = \left[
        \begin{matrix}
            -1 & 0 & \ldots & 0\\
            0 & -1 & \ldots & 0 \\
            \vdots & \vdots & \ddots & \vdots\\
            1 & 1 & \ldots & 0
        \end{matrix}
    \right].
\end{equation*}
We can easily verify that the matrix $\mA$ can be decomposed as
\begin{equation}
    \label{eq:matrxA_def_decomp_refine}
    \left[
        \begin{matrix}
            -\mI_{K-1} & \vzero \\
            \vone_{1\times (K-1)} & 0
        \end{matrix}
    \right] = \underbrace{\left[
        \begin{matrix}
            \mI_{K-1} & \vzero \\
            -\vone_{1\times (K-1)} & 1
        \end{matrix}
    \right]}_{\mU}\cdot \underbrace{\left[
        \begin{matrix}
            -\mI_{K-1} & 0\\
            \vzero & 0
        \end{matrix}
    \right]}_{\Lambda} \cdot \underbrace{\left[
        \begin{matrix}
            \mI_{K-1} & \vzero \\
            \vone_{1\times (K-1)} & 1
        \end{matrix}
    \right]}_{\mU^{-1}}\quad \text{where}\quad \mU\mU^{-1} = \mU^{-1}\mU = \mI_K.
\end{equation}
Under this condition, $\mR^\to_i$ can be reformulated as
\begin{equation*}
    \begin{aligned}
        \mR_i^\to = & \underbrace{(\mU  \mU^{-1}) \otimes \ldots \otimes (\mU\mU^{-1})}_{i-1\ \text{terms}}\otimes (\mU\Lambda \mU^{-1})\otimes (\mU\mU^{-1})\otimes \ldots (\mU\mU^{-1})\\
        = & \left(\mU\otimes \ldots \otimes \mU\right)\cdot \left(\underbrace{\mI\otimes \ldots \otimes \mI}_{i-1\ \text{terms}} \otimes \Lambda \otimes \mI \ldots \otimes \mI\right) \cdot \left(\mU^{-1}\otimes \ldots \otimes \mU^{-1}\right)\coloneqq \mU^{\otimes d}\cdot \Lambda_i \cdot (\mU^{-1})^{\otimes d}
    \end{aligned}
\end{equation*}
where the last inequality follows from Lemma~\ref{lem:mix_prod_K_prod}.
Under this condition, it has
\begin{equation*}
    \begin{aligned}
        \mR^\to_i \cdot \mR^\to_j & = \mU^{\otimes d}\cdot \Lambda_i \cdot (\mU^{-1})^{\otimes d} \cdot \mU^{\otimes d}\cdot \Lambda_j \cdot (\mU^{-1})^{\otimes d} =  \mU^{\otimes d}\cdot \Lambda_i \cdot \Lambda_j \cdot (\mU^{-1})^{\otimes d}\\
        & = \mU^{\otimes d}\cdot \Lambda_j \cdot \Lambda_i \cdot (\mU^{-1})^{\otimes d} = \mU^{\otimes d}\cdot \Lambda_i \cdot (\mU^{-1})^{\otimes d} \cdot \mU^{\otimes d}\cdot \Lambda_j \cdot (\mU^{-1})^{\otimes d}  = \mR^\to_j \cdot \mR^\to_i,
    \end{aligned}
\end{equation*}
where the second and forth equations follows from Lemma~\ref{lem:mix_prod_K_prod} and Eq.~\eqref{eq:matrxA_def_decomp_refine}.

For the property about the matrix exponential, we start from investigating the case of two commuting matrices, i.e., $\mR_1^\to$ and $\mR^\to_2$.
By definition, we have
\begin{equation*}
    \exp(\mR^\to_1 + \mR^\to_2) = \sum_{i=0}^{\infty} \frac{1}{i!}\left(\mR^\to_1 + \mR^\to_2\right)^i = \sum_{i=0}^\infty\frac{1}{i!}\sum_{j=0}^i C_{i}^j \cdot (\mR^\to_1)^j\cdot (\mR_2^\to)^{i-j}
\end{equation*}
where the last equation establishes since $\mR_1^\to$ and $\mR^\to_2$ are commute. 
Then, we have 
\begin{equation*}
    \begin{aligned}
        & \sum_{i=0}^\infty\frac{1}{i!}\sum_{j=0}^i C_{i}^j \cdot (\mR^\to_1)^j\cdot (\mR_2^\to)^{i-j} = \sum_{i=0}^\infty \sum_{j=0}^i \frac{1}{i!}\cdot \frac{i!}{j!(i-j)!} \cdot (\mR^\to_1)^j\cdot (\mR_2^\to)^{i-j}\\
        & = \sum_{i=0}^\infty \sum_{j=0}^i \frac{1}{j!(i-j)!}\cdot (\mR^\to_1)^j\cdot (\mR_2^\to)^{i-j} = \left(\sum_{j=0}^\infty \frac{(\mR^\to_1)^j}{j!} \right)\cdot \left(\sum_{i=0}^\infty \frac{(\mR^\to_2)^i}{i!} \right) = \exp(\mR_1^\to)\cdot \exp(\mR_2^\to).
    \end{aligned}
\end{equation*}
According to the definition of the matrix exponential, we will have $\exp(\mA\otimes \mB)=\exp(\mA)\otimes \exp(\mB)$ when one of the factors is the identity.
When we multiply all these exponentials, it has
\begin{equation*}
    \begin{aligned}
        &\exp(\mR_1^\to) \cdot  \exp(\mR_2^\to) = \left[\exp(\mA)\otimes \mI\otimes\ldots \otimes \mI\right]\cdot \left[\mI\otimes \exp(\mA)\otimes \ldots \otimes \mI\right] \\
        & = \left[\exp(\mA)\cdot \mI\right]\otimes \left[\mI\cdot \exp(\mA)\right]\otimes \mI \ldots \otimes \mI.
    \end{aligned}
\end{equation*}

Then, following a recursive manner, we have
\begin{equation*}
    \exp\left(t\sum_{i=1}^d \mR^\to_i\right) = \prod_{i=1}^d\exp\left(t\mR_i^\to\right) = \exp(t\mA)^{\otimes d},
\end{equation*}
hence the proof is completed.
\end{proof}

\begin{lemma}
    \label{lem:solution_exp_tA}
    Suppose matrix $\mA$ is 
    \begin{equation*}
        \mA = \left[
            \begin{matrix}
                -1 & 0 & \ldots & 0\\
                0 & -1 & \ldots & 0 \\
                \vdots & \vdots & \ddots & \vdots\\
                1 & 1 & \ldots & 0
            \end{matrix}
        \right],
    \end{equation*}
    the matrix exponential $\exp(t\mA)$ becomes
    \begin{equation*}
        \exp(t\mA) = \left[
            \begin{matrix}
                e^{-t} & 0 & \ldots & 0 & 0\\
                0 & e^{-t} & \ldots & 0 & 0\\
                \vdots & \vdots & \ddots & \vdots & \vdots\\
                1-e^{-t} & 1-e^{-t} & \ldots & 1-e^{-t} & 1
            \end{matrix}
        \right].
    \end{equation*}
\end{lemma}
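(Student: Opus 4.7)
The plan is to reuse the eigendecomposition of $\mA$ that was already isolated inside the proof of Lemma~\ref{lem:exp_R_decomposition}, namely Eq.~\eqref{eq:matrxA_def_decomp_refine}. That equation gives $\mA = \mU\,\Lambda\,\mU^{-1}$ with
\[
\mU = \begin{pmatrix} \mI_{K-1} & \vzero \\ -\vone_{1\times (K-1)} & 1 \end{pmatrix}, \quad
\Lambda = \begin{pmatrix} -\mI_{K-1} & \vzero \\ \vzero & 0 \end{pmatrix}, \quad
\mU^{-1} = \begin{pmatrix} \mI_{K-1} & \vzero \\ \vone_{1\times (K-1)} & 1 \end{pmatrix}.
\]
Because conjugation commutes with power series, the identity $\exp(t\mA) = \mU\,\exp(t\Lambda)\,\mU^{-1}$ follows by telescoping the $\mU^{-1}\mU$ factors in each term of the power series $\sum_{i\ge 0}(t\mA)^{i}/i!$. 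Since $\Lambda$ is diagonal, $\exp(t\Lambda)$ is simply the diagonal matrix with entries $e^{-t}$ in the first $K-1$ positions and $1$ in the last, so the problem reduces to a single block multiplication.

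Next, I would carry out the product in the order $\mU \cdot \bigl(\exp(t\Lambda)\,\mU^{-1}\bigr)$. The inner product $\exp(t\Lambda)\,\mU^{-1}$ is
\[
\begin{pmatrix} e^{-t}\mI_{K-1} & \vzero \\ \vzero & 1 \end{pmatrix} \cdot
\begin{pmatrix} \mI_{K-1} & \vzero \\ \vone_{1\times (K-1)} & 1 \end{pmatrix}
\;=\;
\begin{pmatrix} e^{-t}\mI_{K-1} & \vzero \\ \vone_{1\times (K-1)} & 1 \end{pmatrix}.
\]
Left-multiplying by $\mU$ leaves the top block row unchanged as $[\,e^{-t}\mI_{K-1}\ \ \vzero\,]$, and produces bottom block row
\[
\bigl[\,-\vone_{1\times (K-1)}\cdot e^{-t}\mI_{K-1} + \vone_{1\times (K-1)},\ \ -\vone_{1\times (K-1)}\cdot \vzero + 1\,\bigr] \;=\; \bigl[\,(1-e^{-t})\vone_{1\times (K-1)},\ \ 1\,\bigr],
\]
which matches the target matrix entry by entry.

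There is no substantive obstacle here: once the similarity transformation from the previous lemma is in hand, the statement collapses to one block multiplication. The only care required is keeping track of the orientation of the row/column blocks in $\mU$ and $\mU^{-1}$ so that the signs align in the $(1-e^{-t})$ entries of the last row; a quick sanity check that $\exp(0\cdot\mA)=\mI_K$ and that each column of $\exp(t\mA)$ sums to $1$ (as required for a stochastic transition kernel under the forward CTMC) confirms the result.
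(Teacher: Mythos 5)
Your proof is correct, and it takes a genuinely different route from the paper's. You diagonalize: starting from the similarity decomposition $\mA = \mU\Lambda\mU^{-1}$ of Eq.~\eqref{eq:matrxA_def_decomp_refine} (isolated inside the proof of Lemma~\ref{lem:exp_R_decomposition}, and not dependent on the present lemma, so there is no circularity), you push the power series through the conjugation to get $\exp(t\mA) = \mU\exp(t\Lambda)\mU^{-1}$, evaluate $\exp(t\Lambda)$ directly since $\Lambda$ is diagonal, and finish with two block multiplications; I verified that the bottom block row comes out to $[(1-e^{-t})\vone_{1\times(K-1)}\ \ 1]$ as you claim. The paper instead invokes Lemma~\ref{lem:close_solution_qt} to regard $\bar{\mA}(t) := \exp(t\mA)$ as the solution of the matrix ODE $\bar{\mA}'(t) = \mA\bar{\mA}(t)$, $\bar{\mA}(0)=\mI$, and solves it column by column: the lower block-triangular structure $\mA = \bigl(\begin{smallmatrix}-\mI_{K-1} & \vzero \\ \vone & 0\end{smallmatrix}\bigr)$ decouples the top $(K-1)$ coordinates, giving $\bar{\va}_1(t)=e^{-t}\bar{\va}_1(0)$, after which the last coordinate is obtained by one integration. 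Both arguments are elementary and short; yours is a little tighter and avoids solving an ODE, at the cost of borrowing the eigendecomposition from the previous proof, whereas the paper's version is self-contained within this lemma. Your closing sanity checks ($\exp(0\cdot\mA)=\mI_K$ and column sums equal to $1$, consistent with $\mA$ having zero column sums) are sound, though of course they only corroborate rather than establish the result.
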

\begin{proof}
    According to Lemma~\ref{lem:close_solution_qt}, $\bar{\mA}(t)\coloneqq \exp(t\mA)$ can be considered as the close solution of the following matrix ODE, i.e.,
    \begin{equation}
        \label{ineq:ode_for_A_bar}
        \frac{\der \bar{\mA}(t)}{\der t} = \mA \cdot \bar{\mA}(t),\quad \text{where}\quad \bar{\mA}(0) = \mI.
    \end{equation}
    To provide a close form of $\bar{\mA}_t$, we first decompose the matrix $\mA$ as follows
    \begin{equation*}
        \mA = \left[
            \begin{matrix}
                \mB & \vzero\\
                \mC & \vzero
            \end{matrix}
        \right]\quad \text{where}\quad \mB\coloneqq -\mI_{K-1}\in \R^{(K-1)\times(K-1)}\ \text{and}\ \mC \coloneqq [1,1,\ldots, 1]\in \R^{1\times(K-1)}.
    \end{equation*}
    Then, the ODE.~\eqref{ineq:ode_for_A_bar} can be equivalently think column-by-column, the $j$--th column of $\bar{\mA}(t)$ solves
    \begin{equation*}
        \frac{\der }{\der t}\bar{\va}(t) = \mA \bar{\va}(t)\quad \text{where}\quad \va(0) = \ve_j.
    \end{equation*}
    We use the block structure to split $\bar{\va}(t)\in\R^K$ into two parts, i.e., $\bar{\va}(t) = [\bar{\va}_1(t), \bar{\va}_K(t)]$ where $\rvq_1(t)\in\R^{K-1}$ and $\va_K(t)\in \R$ denotes the last coordinate.
    Under this condition, we have
    \begin{equation*}
        \frac{\der}{\der t}\bar{\va}_1(t) = \mB \bar{\va}_1(t) + \vzero\cdot \bar{\va}_K(t) = \mB \bar{\va}_1(t).
    \end{equation*}
    According to the definition of $\mB = - \mI_{K-1}$, we have
    \begin{equation*}
        \frac{\der}{\der t}\bar{\va}_1(t) = -\bar{\va}_1(t) \quad \Rightarrow \bar{\va}_1(t) = e^{-t}\bar{\va}_1(0).
    \end{equation*}
    If we consider the solution of $\bar{\va}_K(t)$, it has
    \begin{equation*}
        \frac{\der}{\der t}\bar{\va}_K(t) = \mC \cdot \bar{\va}_1(t) + \vzero \cdot \bar{\va}_K(t) = \mC\cdot e^{-t}\cdot \bar{\va}_1(0).
    \end{equation*}
    For the initial condition, i.e., $\bar{\va}(0)=\ve_j$, where  $j\in\{1,2,\ldots, K-1\}$ and $\mC \cdot \bar{\va}_1(0) = 1$, then it has
    \begin{equation*}
        \frac{\der}{\der t}\bar{\va}_K(t) = \mC \cdot \bar{\va}_1(t) + \vzero \cdot \bar{\va}_K(t) = e^{-t},
    \end{equation*}
    which implies
    \begin{equation*}
        \bar{\va}_K(t) = \bar{\va}_K(0) + 1 - e^{-t} = 1 - e^{-t}.
    \end{equation*}
    For the initial condition, $\bar{\va}(0)=\ve_K$, we have $\mC \cdot \bar{\va}_1(0) = 0$ and
    \begin{equation*}
        \bar{\va}_K(t) = \bar{\va}_K(0) + 0 = 1.
    \end{equation*}
    Therefore, we have
    \begin{equation*}
        \exp(t\mA) = \left[
            \begin{matrix}
                e^{-t} & 0 & \ldots & 0 & 0\\
                0 & e^{-t} & \ldots & 0 & 0\\
                \vdots & \vdots & \ddots & \vdots & \vdots\\
                1-e^{-t} & 1-e^{-t} & \ldots & 1-e^{-t} & 1
            \end{matrix}
        \right].
    \end{equation*}
\end{proof}

\begin{lemma}[Forward transition kernel]
    \label{lem:fwd_trans_ker}
    Consider the forward CTMC, i.e., $\{\rvy_t\}_{t=0}^T$ with the infinitesimal operator $R^\to$ given in Eq.~\eqref{eq:fwd_transtion_rate_func}. Then, for any two timestamps $s\le t$, the forward transition probability satisfies, for any $\vy, \vy^\prime \in \gY$, 
    \begin{equation}
        \label{eq:fwd_func_close_form}
        \begin{aligned}
            q^\to_{t|s}(\vy|\vy^\prime) =\prod_{i=1}^{d} & \left[ \delta_{(K,K)}(\vy_i, \vy_i^\prime) + \left(1- \delta_{(K,K)}(\vy_i, \vy_i^\prime)\right)\cdot \delta_{0}(\vy_i-\vy_i^\prime)\cdot e^{-(t-s)} \right.\\
            &\left. + \left(1- \delta_{(K,K)}(\vy_i, \vy_i^\prime)\right)\cdot \delta_{K}(\vy_i)\cdot (1-e^{-(t-s)}) \right].
        \end{aligned}
    \end{equation}
\end{lemma}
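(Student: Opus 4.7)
}
The plan is to assemble the closed form directly from the earlier lemmas, with no new analytic work required. Concretely, I would derive Eq.~\eqref{eq:fwd_func_close_form} by chaining together the matrix-exponential solution, the Kronecker decomposition of the rate matrix, and the explicit $K\times K$ formula for $\exp(t\mA)$.

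First, by the time-homogeneous Markov property, it suffices to compute $q^\to_{t|s}(\vy\mid\vy^\prime)$ as the $(\vy,\vy^\prime)$ entry of the transition matrix over the elapsed time $t-s$. By Lemma~\ref{lem:close_solution_qt} applied with initial condition $\vq_0^\to = \ve_{\vy^\prime}$ (viewing the state space $\gY$ as indexed in some fixed ordering), the conditional distribution solves $\der \vq_\tau / \der \tau = \mR^\to \vq_\tau$ on $[0,t-s]$, so the transition matrix equals $\exp((t-s)\mR^\to)$.

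Next, I would invoke Lemma~\ref{lem:forward_transition_rate_matrix_decomposition} to write $\mR^\to=\sum_{i=1}^d \mR^\to_i$ with the Kronecker-factored summands, and then Lemma~\ref{lem:exp_R_decomposition} to conclude
\begin{equation*}
    \exp((t-s)\mR^\to) \;=\; \bigl(\exp((t-s)\mA)\bigr)^{\otimes d}.
\end{equation*}
A standard property of Kronecker products then gives the entrywise factorization
\begin{equation*}
    q^\to_{t|s}(\vy\mid\vy^\prime) \;=\; \prod_{i=1}^d \bigl[\exp((t-s)\mA)\bigr]_{\vy_i,\vy_i^\prime}.
\end{equation*}
Finally, Lemma~\ref{lem:solution_exp_tA} supplies the entries of $\exp((t-s)\mA)$ explicitly: the $(K,K)$ entry is $1$; the $(k,k)$ entry for $k<K$ is $e^{-(t-s)}$; the $(K,k)$ entry for $k<K$ is $1-e^{-(t-s)}$; all other entries vanish. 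A short case check shows that the bracketed expression in Eq.~\eqref{eq:fwd_func_close_form} reproduces exactly these four cases: the term $\delta_{(K,K)}(\vy_i,\vy_i^\prime)$ captures the mask-to-mask entry; when $(\vy_i,\vy_i^\prime)\neq(K,K)$, the factor $\delta_0(\vy_i-\vy_i^\prime)$ isolates the diagonal (non-mask) entry with value $e^{-(t-s)}$, and the factor $\delta_K(\vy_i)$ isolates the mask-absorbing entry with value $1-e^{-(t-s)}$; any remaining coordinate configuration produces zero in both the bracket and the matrix entry.

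There is essentially no analytic obstacle here—the only place that requires attention is verifying that the indicator arithmetic in the bracket faithfully encodes the four-case entry structure of $\exp((t-s)\mA)$, and in particular that the three indicator terms are mutually exclusive so their sum equals the correct entry without double counting. Once this bookkeeping is confirmed, the closed-form expression follows immediately from the three cited lemmas.
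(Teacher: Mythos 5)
Your proposal is correct and follows essentially the same route as the paper's own proof: reduce to the matrix ODE and its exponential solution via Lemma~\ref{lem:close_solution_qt}, apply Lemma~\ref{lem:exp_R_decomposition} to factor $\exp((t-s)\mR^\to)=\exp((t-s)\mA)^{\otimes d}$, read off entries of $\exp((t-s)\mA)$ from Lemma~\ref{lem:solution_exp_tA}, and conclude via the entrywise Kronecker-product formula (Lemma~\ref{lem:basic_prod_K_prod}). The only cosmetic difference is that you phrase the ODE initial condition column-by-column rather than in full matrix form, and you spell out the case check that the indicators in Eq.~\eqref{eq:fwd_func_close_form} are mutually exclusive, which the paper leaves implicit.
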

\begin{proof}
    Under the matrix presentation, Eq.~\eqref{eq:condi_fwd_dis} implies the transition matrix $\mQ^\to_{t|s}$ can be considered as the solution of the ODE
    \begin{equation*}
        \der \mQ^\to_{t|s}/\der t = \mR^\to \cdot  \mQ^\to_{t|s}\quad \text{where}\quad \mQ^\to_{s|s} =\mI.
    \end{equation*}
    Combining Lemma~\ref{lem:close_solution_qt} and~\ref{lem:exp_R_decomposition}, we have
    \begin{equation}
        \label{eq:conditional_mid}
        \mQ^\to_{t|s} = \exp\left((t-s)\mR^\to\right) = \exp\left((t-s) \mA\right)^{\otimes d},
    \end{equation}
    which implies
    \begin{equation*}
        \mQ^\to_{t|s} = \left[
            \begin{matrix}
                e^{-(t-s)} & 0 & \ldots & 0 & 0\\
                0 & e^{-(t-s)} & \ldots & 0 & 0\\
                \vdots & \vdots & \ddots & \vdots & \vdots\\
                1-e^{-(t-s)} & 1-e^{-(t-s)} & \ldots & 1-e^{-(t-s)} & 1
            \end{matrix}
        \right]^{\otimes d}
    \end{equation*}
    due to the close solution of $\exp((t-s)\mA)$ shown in Lemma~\ref{lem:solution_exp_tA}.
    Combining this result with the calculation of the Kronecker product Lemma~\ref{lem:basic_prod_K_prod}, we have
    \begin{equation*}
        \begin{aligned}
            q^\to_{t|s}(\vy|\vy^\prime) =\prod_{i=1}^{d} & \left[ \delta_{(K,K)}(\vy_i, \vy_i^\prime) + \left(1- \delta_{(K,K)}(\vy_i, \vy_i^\prime)\right)\cdot \delta_{0}(\vy_i-\vy_i^\prime)\cdot e^{-(t-s)} \right.\\
            &\left. + \left(1- \delta_{(K,K)}(\vy_i, \vy_i^\prime)\right)\cdot \delta_{K}(\vy_i)\cdot (1-e^{-(t-s)}) \right].
        \end{aligned}
    \end{equation*}
    where  $\vy,\vy^\prime\in \gY$.
    Hence, the proof is completed.
\end{proof}

\begin{proof}[The proof of Lemma~\ref{lem:fwd_convergence_0}.]
    According to Eq.~\eqref{eq:fwd_vec}, the solution of $\vq^\to_t$ can be calculated as
    \begin{equation*}
        \vq_t^\to = \exp(t\mR^\to) \cdot  \vq^\to_0 = \exp(t\mA)^{\otimes d}\cdot \vq_0^\to = \left[
                \begin{matrix}
                    e^{-t} & 0 & \ldots & 0 & 0\\
                    0 & e^{-t} & \ldots & 0 & 0\\
                    \vdots & \vdots & \ddots & \vdots & \vdots\\
                    0 & 0 & \ldots & e^{-t} & 0 \\
                    1-e^{-t} & 1-e^{-t} & \ldots & 1-e^{-t} & 1
                \end{matrix}
            \right]^{\otimes d}\cdot \vq^\to_0
    \end{equation*}
    where the first equation follows from Lemma~\ref{lem:close_solution_qt}, the second equation follows from Lemma~\ref{lem:exp_R_decomposition}, and the last equation follows from Lemma~\ref{lem:solution_exp_tA}.
    With the calculation of the Kronecker product Lemma~\ref{lem:basic_prod_K_prod}, we have
    \begin{equation}
        \label{eq:forward_trans_exten}
        \vq^\to_{t}(\vy) = \sum_{\vy^\prime\in \gY} \exp(t\mA)^{\otimes d}(\vy, \vy^\prime)\cdot \vq_0^\to(\vy^\prime) = \sum_{\vy^\prime}\left[\prod_{i=1}^d \exp(t\mA)(\vy_i, \vy_i^\prime)\right]\cdot \vq_0^\to(\vy^\prime).
    \end{equation}
    Under this condition, for any $\vy$, we denote the coordinate set of token $K$ as $\gK$ satisfying $\vy_i = K\quad \forall\ i\in \gK(\vy)$, and 
    \begin{equation*}
        \vy_{\gK^c(\vy)} = \vy^\prime_{\gK^c(\vy)}\quad \Leftrightarrow\quad \vy_{i} = \vy^\prime_{i}\ \forall\ i\not\in\gK(\vy).
    \end{equation*}
    Then, Eq.~\eqref{eq:forward_trans_exten} can be rewritten as
    \begin{equation*}
        \begin{aligned}
            \vq_t^\to(\vy) = & \sum_{\vy_{\gK^c(\vy)}^\prime = \vy_{\gK^c(\vy)}}\left[\prod_{j\not\in\gK} \exp(t\mA)(\vy_j, \vy^\prime_j)\cdot 
            \prod_{j\not=i}^d \exp(t\mA)(K,\vy_j^\prime)\right] \cdot \vq_0^\to(\vy^\prime)\\
            & + \sum_{\vy_{\gK^c(\vy)}^\prime \not= \vy_{\gK^c(\vy)}}\left[
            \prod_{j=1}^d \exp(t\mA)(\vy_j,\vy_j^\prime)\right] \cdot \vq_0^\to(\vy^\prime)\\ 
            =&  \sum_{\vy_{\gK^c(\vy)}^\prime = \vy_{\gK^c(\vy)}} \left[e^{-t\cdot |\gK^c(\vy)|}\cdot (1-e^{-t})^{|\gK(\vy)|}\right]\cdot \vq^\to_0(\vy^\prime)\\
            \le &  e^{-t\cdot(d-\numMask{\vy})}\cdot \sum_{\vy_{\gK^c(\vy)}^\prime = \vy_{\gK^c(\vy)}} \vq_0^\to(\vy^\prime) \le \exp(-t\cdot (d-\numMask{\vy})),
        \end{aligned}
    \end{equation*}
    where the second equation establishes since we have
    \begin{equation*}
        \exp(t\mA)(\vy_j,\vy^\prime_j) = \left\{
            \begin{aligned}
                & e^{-t}  && \vy_j = \vy^\prime_j\quad \text{and}\quad \vy_j\not=K\\
                & \vone_{K}(\vy_j^\prime)\cdot (1-e^{-t}) + (1-\vone_{K}(\vy_j^\prime)) && \vy_j = K\\
                & 0 && \text{otherwise}
            \end{aligned}
        \right. .
    \end{equation*}
    According to the definition of $\tilde{q}(\vy)$, we can calculate the normalizing constant of $\tilde{q}$ as
    \begin{equation*}
        \tilde{Z}_t = \sum_{\vy} \exp(-t\cdot(d-\numMask{\vy})) = \sum_{i=0}^d \sum_{\numMask{\vy}=i} \exp(-t\cdot (d-i)) = \sum_{i=1}^d C_d^i\cdot e^{-t\cdot i} = (1+e^{-t})^d.
    \end{equation*}
    Therefore, the KL divergence between $q^\to_t$ and $\tilde{q}_t$ can be written as
    \begin{equation*}
        \begin{aligned}
            & \KL{q_t^\to}{\tilde{q}_t} = \sum_{\vy\in\gY} q_t^\to(\vy)\cdot \ln \frac{q_t^\to(\vy)}{\tilde{q}_t(\vy)} = q_t^\to([K,\ldots,K])\cdot \ln \frac{q_t^\to([K,\ldots, K])}{\tilde{q}_t([K,\ldots, K])} + \sum_{\vy\not=[K,\ldots,K]} q_t^\to(\vy)\cdot \ln \frac{q_t^\to(\vy)}{\tilde{q}_t(\vy)}\\
            & \le \ln \tilde{Z}_t + \sum_{\vy\not=[K,\ldots,K]} q_t^\to(\vy) \ln \frac{q_t^\to(\vy)}{\exp(-t\cdot (d-\numMask{\vy}))/\tilde{Z}_t} = \ln \tilde{Z}_t + \sum_{\vy\not=[K,\ldots,K]} q_t^\to(\vy) \ln \tilde{Z}_t \\
            & \le 2\ln \tilde{Z}_t = 2\ln \left[1 + (1+e^{-t})^d - 1\right] \le 2\cdot (1+e^{-t})^d - 2.
        \end{aligned}
    \end{equation*}
    Suppose we require the TV distance to be small enough, e.g.,
    \begin{equation*}
        \KL{\vq^\to_t}{\tilde{q}_t} \le \epsilon \quad \Leftrightarrow\quad (1+e^{-t})^d - 1 \le \epsilon/2 \quad \Leftrightarrow\quad d\ln(1+e^{-t})\le \ln(1+\epsilon/2),
    \end{equation*}
    then, since $\ln(1+c)\le c$ when $c>0$, the sufficient condition for the establishment of the above equation is to require
    \begin{equation*}
        d\cdot e^{-t} \le \ln(1+\epsilon/2) \quad \Leftrightarrow\quad t\ge \ln(d/\ln(1+\epsilon/2)) \quad \Leftarrow\quad  t\ge \ln(4d/\epsilon),
    \end{equation*}
    where the last derivation establishes since $\epsilon/4\le \ln(1+\epsilon/2)$ when $\epsilon \le 1$ without loss of generality.
    Hence, the proof is completed.
\end{proof}

\section{Truncated Uniformization Inference Analysis}

\subsection{The Proof of Lemma~\ref{lem:out_degree_rate_wrt_time}}
\label{app_sec:prof_out_degree_rate}

\begin{proof}[The proof of Lemma~\ref{lem:out_degree_rate_wrt_time}]
    According to the definition, we have
\begin{equation*}
    R^\gets_t(\vy) = \sum_{\vy^\prime\not=\vy} R^\gets_t(\vy^\prime, \vy) = \sum_{\vy^\prime\not=\vy} R^\to(\vy, \vy^\prime)\cdot\frac{q^\gets_t(\vy^\prime)}{q^\gets_t(\vy)}
\end{equation*}
Since the definition of the transition rate matrix, i.e., Eq.~\eqref{eq:fwd_transtion_rate_func}, for any $\vy^\prime$ with $\mathrm{Ham}(\vy^\prime, \vy)>1$, it has $R^\to(\vy,\vy^\prime) = 0$. 
Moreover, even when $\mathrm{Ham}(\vy^\prime, \vy) = 1$, it has
\begin{equation*}
    R^\to(\vy,\vy^\prime) = 0 \quad \text{when}\quad \vy_{\DfId{\vy}{\vy^\prime}} \not= K.
\end{equation*}
Define the function to transfer the $i$--th element of $\vy$ ($\vy_i$) from $k^\prime$ to $k$ as 
\begin{equation*}
    \vy[\vy_i\colon k^\prime\to k] = \left[\vy_1, \vy_2, \ldots, \vy_{i-1}, k, \vy_{i+1},\ldots, \vy_d\right].
\end{equation*}
That means $R^\gets_t(\vy)$ can be rewritten as
\begin{equation}
    \label{eq:reverse_transit_upb_1}
        \begin{aligned}
            R^\gets_t(\vy) = \sum_{i, \vy_i=K}\left[\sum_{k=1}^{K-1} R^\to(\vy, \vy[\vy_i\colon K \to k])\cdot \frac{q^\gets_t(\vy[\vy_i\colon  K\to k])}{q^\gets_t(\vy)}\right].
        \end{aligned}
    \end{equation}
    To upper bound the RHS of the above equation, we consider controlling
    \begin{equation}
        \label{eq:reverse_transit_upb_2}
        \begin{aligned}                 &\frac{q^\gets_t(\vy[\vy_i\colon K\to k])}{q^\gets_t(\vy)} = \frac{q^\to_{T-t}(\vy[\vy_i\colon K\to k])}{q^\to_{T-t}(\vy)} = \frac{\sum_{\vy_0\in\gY} q^\to_0(\vy_0)\cdot q^\to_{T-t|0}(\vy[\vy_i\colon K\to k]|\vy_0)}{\sum_{\vy_0\in\gY}q_0^\to(\vy_0)\cdot q^\to_{T-t|0}(\vy|\vy_0)}\\
        & = \frac{\sum_{\vy_0\in\gY}q_0^\to(\vy_0)\cdot q^\to_{T-t|0}(\vy|\vy_0)\cdot \frac{q^\to_{T-t|0}(\vy[\vy_i\colon K\to k]|\vy_0)}{q^\to_{T-t|0}(\vy|\vy_0)}}{\sum_{\vy_0\in\gY}q_0^\to(\vy_0)\cdot q^\to_{T-t|0}(\vy|\vy_0)} = \E_{\rvy_0\sim q^\to_{0|T-t}(\cdot|\vy)}\left[\frac{q^\to_{T-t|0}(\vy[\vy_i\colon K\to k]|\vy_0)}{q^\to_{T-t|0}(\vy|\vy_0)}\right],
        \end{aligned}
    \end{equation}
    where the last equation follows from Bayes' Theorem, i.e.,
    \begin{equation*}
        q^\to_{0|T-t}(\vy_0|\vy)\cdot q^\to_{T-t}(\vy) = q_{T-t|0}^\to(\vy|\vy_0)\cdot q^\to_0(\vy_0)\quad \Leftrightarrow\quad q^\to_{0|T-t}(\vy_0|\vy)\propto  q_{T-t|0}^\to(\vy|\vy_0)\cdot q^\to_0(\vy_0).
    \end{equation*}
    Then, we only need to control $q^\to_{T-t|0}(\vy[\vy_i\rightarrow k]|\vy_0)/q^\to_{T-t|0}(\vy|\vy_0)$ where both the denominator and the numerator can be calculated accurately by Lemma~\ref{lem:fwd_trans_ker}.
    Specifically, we have
    \begin{equation*}
        \begin{aligned}
            q^\to_{T-t|0}(\vy|\vy_0) = \prod_{j\in\{1,\ldots, i-1, i+1, \ldots, d\}} & \left[ \vone_{(K,K)}(\vy_j, \vy_{0,j}) + \left(1- \vone_{(K,K)}(\vy_j, \vy_{0,j})\right)\cdot \vone_{0}(\vy_j-\vy_{0,j})\cdot e^{-(T-t)} \right.\\
            &\left. + \left(1- \vone_{(K,K)}(\vy_j, \vy_{0,j})\right)\cdot \vone_{K}(\vy_j)\cdot (1-e^{-(T-t)}) \right]\cdot\\
            & \left[ \vone_{(K,K)}(K, \vy_{0,i})  + \left(1- \vone_{(K,K)}(K, \vy_{0,i})\right)\cdot (1-e^{-(T-t)}) \right]
        \end{aligned}
    \end{equation*}
    and 
    \begin{equation*}
        \begin{aligned}
            q^\to_{T-t|0}(\vy[\vy_i\colon K\to k]|\vy_0) = \prod_{j\in\{1,\ldots, i-1, i+1, \ldots, d\}} & \left[ \vone_{(K,K)}(\vy_j, \vy_{0,j}) + \left(1- \vone_{(K,K)}(\vy_j, \vy_{0,j})\right)\cdot \vone_{0}(\vy_j-\vy_{0,j})\cdot e^{-(T-t)} \right.\\
            &\left. + \left(1- \vone_{(K,K)}(\vy_j, \vy_{0,j})\right)\cdot \vone_{K}(\vy_j)\cdot (1-e^{-(T-t)}) \right]\cdot\\
            & \left[ \left(1- \vone_{(K,K)}(k, \vy_{0,i})\right)\cdot \vone_{0}(k-\vy_{0,i})\cdot e^{-(T-t)} \right].
        \end{aligned}
    \end{equation*}
    Since the factor except for the $i$--th term will be canceled, we have
    \begin{equation}
        \begin{aligned}
            &\frac{q^\to_{T-t|0}(\vy[\vy_i\rightarrow k]|\vy_0)}{q^\to_{T-t|0}(\vy|\vy_0)} = \frac{ \left(1- \vone_{(K,K)}(k, \vy_{0,i})\right)\cdot \vone_{0}(k-\vy_{0,i})\cdot e^{-(T-t)} }{ \vone_{(K,K)}(K, \vy_{0,i})  + \left(1- \vone_{(K,K)}(K, \vy_{0,i})\right)\cdot (1-e^{-(T-t)}) }\\
            & =  \frac{  \vone_{0}(k-\vy_{0,i})\cdot e^{-(T-t)} }{  1-e^{-(T-t)} }\le \frac{e^{-(T-t)}}{1-e^{-(T-t)}}  = \frac{1}{e^{(T-t)}-1}.
        \end{aligned}
        \label{eq:q_0_k_score_bound}
    \end{equation}
    Plugging this result into Eq.~\eqref{eq:reverse_transit_upb_2}, the density ratio of the reverse process will have
    \begin{equation}
        \frac{q^\gets_t(\vy[\vy_i\rightarrow k])}{q^\gets_t(\vy)} = \E_{\rvy_0\sim q^\to_{0|T-t}(\cdot|\vy)}\left[\frac{q^\to_{T-t|0}(\vy[\vy_i\colon K\to k]|\vy_0)}{q^\to_{T-t|0}(\vy|\vy_0)}\right] \le \frac{1}{e^{(T-t)}-1}.
        \label{eq:density_ratio_reverse}
    \end{equation}
    Combining with the fact, i.e.,
    \begin{equation*}
        R^\to(\vy, \vy[\vy_i\colon K \to k]) = 1
    \end{equation*}
    from Eq.~\eqref{eq:fwd_transtion_rate_func}, Eq.~\eqref{eq:reverse_transit_upb_1} can be upper bounded as
    \begin{equation*}
        R^\gets_t(\vy) =  \sum_{i, \vy_i=K}\left[\sum_{k=1}^{K-1} \frac{q^\gets_t(\vy[\vy_i\colon  K\to k])}{q^\gets_t(\vy)}\right] \le \frac{\mathrm{numK}(\vy)\cdot K}{e^{(T-t)}-1}.
    \end{equation*}
    Hence, the proof is completed.
\end{proof}
\begin{remark}
    Here, an interesting property is that compared with the upper bound of $\beta_t(\vy)$  in the \emph{uniform forward process}~\cite{chen2024convergence}, i.e., 
    \begin{equation*}
        \sum_{\vy^\prime\not=\vy} R^\gets_t(\vy^\prime,\vy) \le K\cdot d\cdot \frac{1+e^{-2(T-t)}}{1-e^{-2(T-t)}}\le K\cdot d\cdot (1+(T-t)^{-1}).
    \end{equation*}
    the upper bound of $\beta_t(\vy)$ in \emph{absorbing forward process} will only be 
    \begin{equation*}
        \sum_{\vy^\prime\not=\vy} R^\gets_t(\vy^\prime,\vy) \le \textcolor{red}{K\cdot \mathrm{numK}(\vy)}\cdot \frac{e^{-(T-t)}}{1-e^{-(T-t)}}.
    \end{equation*}
    The latter upper bound is strictly better compared with the former one, since the number of mask tokens, i.e., $\mathrm{numK}(\vy)\le d$.
    Besides, with the time growth (from $0$ to $T$), $\mathrm{numK}(\vy)$ will be monotonic decrease for $R^\gets_t(\vy)$ (from $d$ to $0$).
    Since the dominating term in the complexity analysis of truncated uniformization is $\beta_t$,  the discrete diffusion models with absorbing forward process are expected to have a better result.
    The mechanism of the acceleration can be explained in one sentence, i.e.,
    \begin{framed}
        \centering
        \emph{At each uniformization step, absorbing the discrete diffusion model knows the token needs (masked token)/ or does not need (unmasked token) to denoise, and an unmasked token will not be denoised twice.}
    \end{framed}
    Rigorously, this property can be summarized by Lemma~\ref{lem:reverse_nonzero_transitions}.
\end{remark}

\begin{lemma}
    \label{lem:reverse_nonzero_transitions}
    Suppose Assumption~\ref{ass:mask_init} hold, and $0<t_0\le t$, we have $q^\gets_{t|t_0}(\vy|\vy_0)\not=0$ if and only if 
    \begin{equation*}
        \vy\in \gY^\gets(\vy_0) = \left\{\vy^\prime| \forall i,\quad \vy_{0,i} = K\  \text{or}\ \vy^\prime_i = \vy_{0,i}  \right\}.
    \end{equation*}
\end{lemma}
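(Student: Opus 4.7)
The plan is to invert the reverse conditional back to the forward process via Bayes' rule. Since $q^\gets_s = q^\to_{T-s}$ for every $s$ and $T-t \le T-t_0$, the joint of $(\rvy^\gets_t,\rvy^\gets_{t_0})$ agrees with that of $(\rvy^\to_{T-t},\rvy^\to_{T-t_0})$, so
\[
q^\gets_{t|t_0}(\vy|\vy_0) \;=\; \frac{q^\to_{T-t_0|T-t}(\vy_0|\vy)\,q^\to_{T-t}(\vy)}{q^\to_{T-t_0}(\vy_0)}.
\]
Because $q^\to_{T-t_0}(\vy_0)>0$ is implied by the conditioning, positivity of the reverse transition reduces to positivity of the two numerator factors. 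I would treat each of them separately.

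For the forward conditional, the plan is to invoke the closed form from Lemma~\ref{lem:fwd_trans_ker}: the kernel factorizes coordinatewise into a sum of three indicator-weighted terms. A direct case check shows the $i$-th factor is strictly positive iff one of the three branches is active, and collecting the conditions, this holds iff $\vy_{0,i}=K$ or $\vy_i=\vy_{0,i}$. Conversely, whenever $\vy_{0,i}\neq K$ and $\vy_i\neq\vy_{0,i}$, every branch is killed and the factor is $0$. This reflects the mask-monotone structure of the forward chain: an observed token must either remain unchanged or be absorbed into the mask $K$, and mask tokens can never re-emerge as concrete symbols. Coordinatewise combination yields $q^\to_{T-t_0|T-t}(\vy_0|\vy)>0$ iff $\vy\in\gY^\gets(\vy_0)$.

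The remaining piece is that $q^\to_{T-t}(\vy)>0$ for every $\vy\in\gY^\gets(\vy_0)$ in the non-degenerate regime $t<T$. Here Assumption~\ref{ass:mask_init} is the key input: every mask-free sequence has positive mass under $q_*$, so I build a witness $\vy^\star$ by replacing each $K$-coordinate of $\vy$ by an arbitrary symbol in $\{1,\ldots,K-1\}$, while keeping the already-concrete coordinates of $\vy$ fixed. Then $q_*(\vy^\star)>0$, and Lemma~\ref{lem:fwd_trans_ker} gives $q^\to_{T-t|0}(\vy|\vy^\star)>0$ because each coordinate factor is strictly positive whenever $T-t>0$. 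Marginalizing over this single witness lower-bounds $q^\to_{T-t}(\vy)$ by a strictly positive number. Combining both factors establishes the equivalence. The only subtlety I anticipate is keeping the three-branch case analysis of the forward kernel tidy and making sure the witness construction preserves the agreement on non-mask coordinates of $\vy_0$; both are routine given Lemma~\ref{lem:fwd_trans_ker} and Assumption~\ref{ass:mask_init}.
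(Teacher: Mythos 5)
Your proposal follows essentially the same route as the paper's own proof: both invert $q^\gets_{t|t_0}$ into forward quantities via Bayes (the paper phrases it through the joint $q^\gets_{t,t_0}=q^\to_{T-t_0,T-t}$, which is the same identity), then read off positivity of $q^\to_{T-t_0|T-t}(\vy_0|\vy)$ coordinatewise from the closed-form kernel in Lemma~\ref{lem:fwd_trans_ker}, and finally invoke Assumption~\ref{ass:mask_init} to get $q^\to_{T-t}(\vy)>0$. Your explicit witness construction for the marginal positivity (replacing each mask coordinate of $\vy$ by a concrete symbol to land in the support of $q_*$) and your note that this needs $T-t>0$ are small refinements of a step the paper merely asserts, but the decomposition, the appeal to Lemma~\ref{lem:fwd_trans_ker}, and the role of Assumption~\ref{ass:mask_init} are all identical.
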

\begin{proof}
    According to the Bayes' theorem, for any $t\ge t_0$, it has
    \begin{equation}
        \label{eq:joint_equ_fw_bw}
        \begin{aligned}
            & q^\gets_{t,t_0}(\vy, \vy_0) = q^\gets_{t|t_0}(\vy|\vy_0)\cdot q^\gets_{t_0}(\vy_0) = q^\to_{T-t, T-t_0}(\vy, \vy_0)\\
            & = q^\to_{T-t_0, T-t}(\vy_0, \vy) = q^\to_{T-t_0|T-t}(\vy_0|\vy)\cdot q^\to_{T-t}(\vy),
        \end{aligned}
    \end{equation}
    where the third equation follows from the reversibility of the absorbing forward process shown in~\citet{campbell2022continuous}.
    Following from the forward transition kernel shown in Lemma~\ref{lem:fwd_trans_ker}, we know that 
    \begin{equation}
        \label{eq:condi_fw_not0}
         q^\to_{T-t_0|T-t}(\vy_0|\vy)\not=0\quad  \Leftrightarrow\quad \vy_0\in \gY^\to(\vy) = \left\{\vy^\prime|\ \forall i,\quad \vy^\prime_i = \vy_i\ \text{or}\ \vy^\prime_i = K \right\}.
    \end{equation}
    Combining Assumption~\ref{ass:mask_init} and Lemma~\ref{lem:fwd_trans_ker}, we have $q^\to_{\tau}(\vy)>0$ for all $\vy\in\gY$, which implies
    \begin{equation}
        \label{eq:margin_fw_not0}
        \begin{aligned}
            q^\gets_{t_0}(\vy_0) = q^\to_{T-t_0}(\vy_0)>0\quad \text{and}\quad  q^\to_t(\vy)>0.
        \end{aligned} 
    \end{equation}
    Then, we can summarize 
    \begin{equation*}
        q^\gets_{t|t_0}(\vy|\vy_0)\not=0\quad \Leftrightarrow\quad \vy\in \gY^\gets(\vy_0) = \left\{\vy^\prime| \forall i,\quad \vy_{0,i} = K\  \text{or}\ \vy^\prime_i = \vy_{0,i}  \right\}.
    \end{equation*}
    Hence, the proof is completed.
\end{proof}

\subsection{The convergence of Alg.~\ref{alg:uni_inf}}
\label{app_sec:prof_convergence_reverse}

Suppose, with the infinitesimal reverse transition rate, the particles in Alg.~\ref{alg:uni_inf} during the reverse process are denotes as random variables $\{\hat{\rvy}_t\}_{t=0}^{T-\delta}$, whose underlying distributions are $\hat{q}_t$.
Then, the implementation will be equivalent to the following Poisson process. 
For $t\in(t_{w-1}, t_w]$, $\hat{\rvy}_{t_{w-1}} = \vy_0$ and $\hat{\rvy}_t = \vy$, 
\begin{enumerate}[leftmargin=*]
    \item With probability $\Delta t\cdot \beta_{t_{w}}(\vy_0)$, allow a state transition.
    \item Conditioning on an allowed transition, move from $\vy$ to $\vy^\prime$ with probability
    \begin{equation*}
        \hat{\mM}_{t|t_{w-1}}(\vy^\prime|\vy, \vy_0) = \left\{
            \begin{aligned}
                & \beta_{t_{w}}^{-1}(\vy_0)\cdot \hat{R}_{t,\vy_0}(\vy^\prime, \vy) && \vy^\prime\not=\vy\\
                & 1- \beta_{t_{w}}^{-1}(\vy_0)\hat{R}_{t,\vy_0}(\vy) && \text{otherwise}
            \end{aligned}
        \right. .
    \end{equation*}
\end{enumerate}
Here we should note that
\begin{equation*}
    \hat{R}_{t,\vy_0}(\vy) \le \beta_t(\vy) = K\cdot \numMask{\vy}\cdot \frac{1}{e^{T-t}-1}\le K\cdot \numMask{\vy_0}\cdot \frac{1}{e^{T-t_{w}}-1} = \beta_{t_{w}}(\vy_0), 
\end{equation*}
where the second inequality established since $\numMask{\hat{\rvy}_t}\le \numMask{\hat{\rvy}_{t_{w-1}}}$ and $(e^{T-t}-1)^{-1}$ is monotonic increasing.
Under these two steps, the practical conditional probability satisfies 
\begin{equation}
    \label{eq:prac_hat_infini_ope}
    \begin{aligned}
        \hat{q}_{t+\Delta t| t, t_{w-1}}(\vy^\prime|\vy, \vy_0) = & \left\{
            \begin{aligned}
            & \Delta t\cdot \beta_{t_{w}}(\vy_0) \cdot \hat{R}_{t,\vy_0}(\vy^\prime, \vy)\cdot \beta^{-1}_{t_{w}}(\vy_0)  && \vy^\prime\not=\vy\\
            & 1-\Delta t\cdot \beta_{t_{w}}(\vy_0) + \Delta t \cdot \beta_{t_{w}}(\vy_0) \cdot (1-\beta_{t_{w}}(\vy_0)^{-1}\cdot \hat{R}_{t,\vy_0}(\vy))  && \vy^\prime = \vy,
            \end{aligned}    
        \right. \\
        = & \left\{
            \begin{aligned}
                & \Delta t \cdot \hat{R}_{t,\vy_0}(\vy^\prime,\vy) && \vy^\prime\not=\vy\\
                & 1-\Delta t\cdot \hat{R}_{t,\vy_0}(\vy) && \vy^\prime = \vy
            \end{aligned}
        \right. .
    \end{aligned}
\end{equation}

\begin{lemma}
    \label{lem:lim_deltat_ln}
    Following the notations shown in Section~\ref{sec:app_notations}, we have
    \begin{equation*}
         \lim_{\Delta t\rightarrow 0}\left[\Delta t^{-1} \cdot \ln \frac{1-\sum_{\vy^\prime\not=\vy}q^\gets_{t+\Delta t|t, t_{w-1}}(\vy^\prime|\vy,\vy_0)}{1-\sum_{\vy^\prime\not=\vy}\hat{q}_{t+\Delta t|t, t_{w-1}}(\vy^\prime|\vy,\vy_0)}\right] = \hat{R}_{t,\vy_0}(\vy) - R^\gets_t(\vy).
    \end{equation*}
\end{lemma}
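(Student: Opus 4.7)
The plan is to mirror the proof of Lemma~\ref{lem:lim_deltat_ln_Euler} almost verbatim, replacing the role of the frozen estimator $\tilde R_{kh}$ by the mask-aware truncated rate $\hat R_{t,\vy_0}$ prescribed in Eq.~\eqref{eq:prac_hat_infini_ope}. First, I would note that both conditional kernels $q^\gets_{t+\Delta t|t,t_{w-1}}(\cdot|\vy,\vy_0)$ and $\hat q_{t+\Delta t|t,t_{w-1}}(\cdot|\vy,\vy_0)$ converge pointwise to $\delta_\vy$ as $\Delta t\to 0$, so the off-diagonal sums both vanish and the ratio inside the logarithm can be written as $1+\eta(\Delta t)$ with
\[
\eta(\Delta t) \;=\; \frac{\sum_{\vy'\neq\vy}\bigl(\hat q_{t+\Delta t|t,t_{w-1}}(\vy'|\vy,\vy_0) - q^\gets_{t+\Delta t|t,t_{w-1}}(\vy'|\vy,\vy_0)\bigr)}{1-\sum_{\vy'\neq\vy}\hat q_{t+\Delta t|t,t_{w-1}}(\vy'|\vy,\vy_0)}.
\]
For all $\Delta t$ sufficiently small, $|\eta(\Delta t)|\le 1/2$, so the series $\ln(1+\eta)=\sum_{i\ge 1}\tfrac{(-1)^{i+1}}{i}\eta^i$ converges absolutely.

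Next, I would divide by $\Delta t$ and interchange the limit with the infinite sum by dominated convergence, using the fact that each summand is controlled by a geometric tail $2^{-i}$ once $|\eta(\Delta t)|\le 1/2$. The first-order expansion $\hat q_{t+\Delta t|t,t_{w-1}}(\vy'|\vy,\vy_0) = \Delta t\cdot\hat R_{t,\vy_0}(\vy',\vy) + o(\Delta t)$ (from Eq.~\eqref{eq:prac_hat_infini_ope}) and $q^\gets_{t+\Delta t|t}(\vy'|\vy) = \Delta t\cdot R^\gets_t(\vy',\vy) + o(\Delta t)$ (from Eq.~\eqref{def:mean_of_R_gets} and Lemma~\ref{lem:absorbing_reverse_de}) together imply $\eta(\Delta t) = \Delta t\bigl(\hat R_{t,\vy_0}(\vy) - R^\gets_t(\vy)\bigr) + o(\Delta t)$. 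Consequently, for every $i\ge 2$, $\eta(\Delta t)^i/\Delta t = O(\Delta t^{\,i-1})\to 0$, so only the $i=1$ term survives in the limit and yields exactly $\hat R_{t,\vy_0}(\vy) - R^\gets_t(\vy)$, as required.

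The only subtle step is the interchange of limit and infinite sum, since neither rate function is assumed uniformly bounded globally; however, the reduction $|\eta(\Delta t)|\le 1/2$ holds for all $\Delta t$ small enough (uniformly in the indices $\vy'\neq\vy$ after summation), giving the required geometric dominant. All remaining calculations are routine first-order bookkeeping inherited from the Markov semigroup characterization of the reverse CTMC and from the two-step Poisson construction underlying Alg.~\ref{alg:uni_inf}. Since the proof is essentially identical to that of Lemma~\ref{lem:lim_deltat_ln_Euler} up to relabeling, I would either present it in full with the substitutions stated above or simply invoke Lemma~\ref{lem:lim_deltat_ln_Euler} and observe that its argument does not use any specific property of $\tilde R_{kh}$ beyond it being the infinitesimal generator of the auxiliary kernel.
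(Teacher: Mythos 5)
Your proposal is correct and follows essentially the same route as the paper: both write the ratio inside the logarithm as $1+\eta(\Delta t)$ with $|\eta|\le 1/2$ for small $\Delta t$, expand $\ln(1+\eta)$ as a power series, justify the interchange of limit and sum by dominated convergence, and observe that the first-order expansions $\hat q_{t+\Delta t|t,t_{w-1}}(\vy'|\vy,\vy_0)=\Delta t\,\hat R_{t,\vy_0}(\vy',\vy)+o(\Delta t)$ and $q^\gets_{t+\Delta t|t}(\vy'|\vy)=\Delta t\,R^\gets_t(\vy',\vy)+o(\Delta t)$ leave only the $i=1$ term surviving. Your closing remark that the argument is the Euler proof up to relabeling (with $\hat R_{t,\vy_0}$ in place of $\tilde R_{kh}$) matches the paper's treatment exactly.
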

\begin{proof}
    Since we have required $\Delta t\rightarrow 0$, for any $\vy^\prime\not=\vy$, it has
    \begin{equation*}
        \begin{aligned}
            &\hat{q}_{t+\Delta t|t, t_{w-1}}(\vy^\prime|\vy,\vy_0)\rightarrow \hat{q}_{t|t}(\vy^\prime|\vy,\vy_0)=0\\
            &\text{and}\quad q^\gets_{t+\Delta t|t, t_{w-1}}(\vy^\prime|\vy,\vy_0) = q^\gets_{t+\Delta t|t}(\vy^\prime|\vy)\rightarrow q^\gets_{t|t}(\vy^\prime|\vy)=0,
        \end{aligned}
    \end{equation*}
    where the first row follows from Eq.~\eqref{eq:prac_hat_infini_ope} and the second row follows from Lemma.~\ref{lem:absorbing_reverse_de}.
    This automatically makes 
    \begin{equation*}
        \left|\frac{\sum_{\vy^\prime\not=\vy}\left(\hat{q}_{t+\Delta t|t, t_{w-1}}(\vy^\prime|\vy,\vy_0) - q^\gets_{t+\Delta t|t,t_{w-1}}(\vy^\prime|\vy,\vy_0)\right)}{1-\sum_{\vy^\prime\not=\vy}\hat{q}_{t+\Delta t|t,t_{w-1}}(\vy^\prime|\vy,\vy_0)}\right|\le \frac{1}{2}<1.
    \end{equation*}
    Under this condition, we have
    \begin{equation*}
        \begin{aligned}
            & \ln \frac{1-\sum_{\vy^\prime\not=\vy}q^\gets_{t+\Delta t|t, t_{w-1}}(\vy^\prime|\vy,\vy_0)}{1-\sum_{\vy^\prime\not=\vy}\hat{q}_{t+\Delta t|t,t_{w-1}}(\vy^\prime|\vy,\vy_0)} =\ln\left[ 1 + \frac{\sum_{\vy^\prime\not=\vy} \left(\hat{q}_{t+\Delta t|t, t_{w-1}}(\vy^\prime|\vy,\vy_0)-q^\gets_{t+\Delta t|t}(\vy^\prime|\vy)\right)}{1-\sum_{\vy^\prime\not=\vy}\hat{q}_{t+\Delta t|t,t_{w-1}}(\vy^\prime|\vy,\vy_0)}\right]\\
            & = \sum_{i=1}^\infty \frac{(-1)^{i+1}}{i}\cdot \left[\frac{\sum_{\vy^\prime\not=\vy} \left(\hat{q}_{t+\Delta t|t, t_{w-1}}(\vy^\prime|\vy,\vy_0)-q^\gets_{t+\Delta t|t, t_{w-1}}(\vy^\prime|\vy,\vy_0)\right)}{1-\sum_{\vy^\prime\not=\vy}\hat{q}_{t+\Delta t|t,t_{w-1}}(\vy^\prime|\vy,\vy_0)}\right]^i,
        \end{aligned}
    \end{equation*}
    which implies (with the dominated convergence theorem)
    \begin{equation*}
        \begin{aligned}
             & \lim_{\Delta t\rightarrow 0}\left[\Delta t^{-1} \cdot \ln \frac{1-\sum_{\vy^\prime\not=\vy}q^\gets_{t+\Delta t|t,t_{w-1}}(\vy^\prime|\vy,\vy_0)}{1-\sum_{\vy^\prime\not=\vy}\hat{q}_{t+\Delta t|t, t_{w-1}}(\vy^\prime|\vy,\vy_0)}\right]\\
             & = \sum_{i=1}^\infty \frac{(-1)^{i+1}}{i}\cdot \lim_{\Delta t\rightarrow 0} \frac{\sum_{\vy^\prime\not=\vy} \left(\hat{q}_{t+\Delta t|t,t_{w-1}}(\vy^\prime|\vy,\vy_0)-q^\gets_{t+\Delta t|t, t_{w-1}}(\vy^\prime|\vy,\vy_0)\right)}{\Delta t}\\
             & \quad \cdot \lim_{\Delta t \rightarrow 0}\frac{\left(\sum_{\vy^\prime\not=\vy} \left(\hat{q}_{t+\Delta t|t,t_{w-1}}(\vy^\prime|\vy,\vy_0)-q^\gets_{t+\Delta t|t, t_{w-1}}(\vy^\prime|\vy,\vy_0)\right)\right)^{i-1}}{\left(1-\sum_{\vy^\prime\not=\vy}\hat{q}_{t+\Delta t|t, t_{w-1}}(\vy^\prime|\vy,\vy_0)\right)^i}.
        \end{aligned}
    \end{equation*}
    Only when $i=1$, we have
    \begin{equation*}
        \lim_{\Delta t \rightarrow 0}\frac{\left(\sum_{\vy^\prime\not=\vy} \left(\hat{q}_{t+\Delta t|t, t_{w-1}}(\vy^\prime|\vy,\vy_0)-q^\gets_{t+\Delta t|t,t_{w-1}}(\vy^\prime|\vy,\vy_0)\right)\right)^{i-1}}{\left(1-\sum_{\vy^\prime\not=\vy}\hat{q}_{t+\Delta t|t}(\vy^\prime|\vy)\right)^i}=1,
    \end{equation*}
    otherwise it will be equivalent to $0$.
    Therefore, we have
    \begin{equation*}
        \begin{aligned}
            &\lim_{\Delta t\rightarrow 0}\left[\Delta t^{-1} \cdot \ln \frac{1-\sum_{\vy^\prime\not=\vy}q^\gets_{t+\Delta t|t,t_{w-1}}(\vy^\prime|\vy,\vy_0)}{1-\sum_{\vy^\prime\not=\vy}\hat{q}_{t+\Delta t|t,t_{w-1}}(\vy^\prime|\vy,\vy_0)}\right]\\
            & = \lim_{\Delta t\rightarrow 0} \frac{\sum_{\vy^\prime\not=\vy} \left(\hat{q}_{t+\Delta t|t,t_{w-1}}(\vy^\prime|\vy,\vy_0)-q^\gets_{t+\Delta t|t, t_{w-1}}(\vy^\prime|\vy,\vy_0)\right)}{\Delta t}\\
            & = \sum_{\vy^\prime\not=\vy} \left(\hat{R}_{t,\vy_0}(\vy^\prime,\vy) - R^\gets_t(\vy^\prime,\vy)\right) = \hat{R}_{t,\vy_0}(\vy) - R^\gets_t(\vy),
        \end{aligned}
    \end{equation*}
    where the second equation follows from Eq.~\eqref{eq:prac_hat_infini_ope} and the second row follows from Lemma.~\ref{lem:absorbing_reverse_de}.
    Hence, the proof is completed.
\end{proof}

\begin{theorem}[The convergence of Alg.~\ref{alg:uni_inf}]
    \label{thm:convergence_unif_reverse}
    Suppose Assumption~\ref{ass:score_approximation_error} and~\ref{ass:mask_init} hold, if 
    Alg.~\ref{alg:uni_inf} has 
    \begin{equation*}
        t_0 = 0,\quad t_W= T-\delta,\quad \text{and}\quad \epsilon_{\text{score}}\le T^{-1/2}\cdot \epsilon \quad \text{where}\quad T = \ln(4d/\epsilon^2) \quad \text{and}\quad \delta\le d^{-1}\epsilon,
    \end{equation*}   
    the TV distance between the target discrete distribution $q_*$ and the underlying distribution of the output particle $\hat{q}_{T-\delta}$ will satisfy $\TVD{q_*}{\hat{q}_{T-\delta}}\le 2\epsilon$.
\end{theorem}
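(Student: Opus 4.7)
The plan is to split the total TV distance into three manageable pieces via triangle inequality---an early-stopping term, an initialization-mismatch term, and a path-dynamics term---and to control each one separately. Introduce an auxiliary idealized reverse CTMC $\{\bar{q}_t\}_{t=0}^{T-\delta}$ that uses the exact reverse rate $R^\gets_t$ but is initialized at the deterministic point $\bar{q}_0 = \delta_{[\idxK,\ldots,\idxK]}$ (matching Alg.~\ref{alg:uni_inf}), and write
\[
\TVD{q_*}{\hat{q}_{T-\delta}} \;\le\; \TVD{q^\to_0}{q^\to_\delta} \;+\; \TVD{q^\gets_{T-\delta}}{\bar{q}_{T-\delta}} \;+\; \TVD{\bar{q}_{T-\delta}}{\hat{q}_{T-\delta}},
\]
using $q_* = q^\to_0$ and $q^\gets_{T-\delta} = q^\to_\delta$. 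For the first piece, the closed-form forward kernel (Lemma~\ref{lem:fwd_trans_ker}) and coordinate-wise independence give $\Pr(\rvy^\to_0 = \rvy^\to_\delta) = e^{-d\delta}$, so the TV is at most $1-e^{-d\delta}\le d\delta\le\epsilon$ under $\delta\le\epsilon/d$. For the second piece, since $q^\gets$ and $\bar{q}$ share the same reverse CTMC dynamics and differ only in their initial distributions, the Markov-semigroup contraction of TV yields $\TVD{q^\gets_{T-\delta}}{\bar{q}_{T-\delta}} \le \TVD{q^\gets_0}{\bar{q}_0} = 1 - q^\to_T([\idxK,\ldots,\idxK]) = 1-(1-e^{-T})^d \le d\,e^{-T} = \epsilon^2/4$ under $T=\ln(4d/\epsilon^2)$.

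The third piece is the main technical step. Because $\bar{q}_0 = \hat{q}_0 = \delta_{[\idxK,\ldots,\idxK]}$, we have $\KL{\bar{q}_0}{\hat{q}_0} = 0$, so Pinsker combined with the KL chain rule for CTMCs gives
\[
\TVD{\bar{q}_{T-\delta}}{\hat{q}_{T-\delta}} \;\le\; \sqrt{\tfrac12\int_0^{T-\delta}\tfrac{d}{dt}\KL{\bar{q}_t}{\hat{q}_t}\,dt}.
\]
Mirroring the Kolmogorov-type computation already executed in the proof of Theorem~\ref{thm:main_mask_euler}, now substituting Lemma~\ref{lem:lim_deltat_ln} for its Euler analogue, the integrand factors through a Bregman remainder on the score estimator,
\[
\tfrac{d}{dt}\KL{\bar{q}_t}{\hat{q}_t} \;\le\; \mathbb{E}_{\vy\sim \bar{q}_t}\!\left[\sum_{\vy^\prime\neq\vy} R^\to(\vy,\vy^\prime)\,D_\phi\!\left(v_{t,\vy}(\vy^\prime)\big\|\tilde{v}_{t,\vy}(\vy^\prime)\right)\right] + (\text{truncation remainder}).
\]
Integrating on $[0,T-\delta]$ and invoking Assumption~\ref{ass:score_approximation_error} with the score-entropy loss in Eq.~\eqref{eq:score_estimation_main} yields $\KL{\bar{q}_{T-\delta}}{\hat{q}_{T-\delta}} \lesssim T\,\epsilon_{\text{score}}^2 \le \epsilon^2$ under $\epsilon_{\text{score}}\le T^{-1/2}\epsilon$, so the dynamic TV piece is at most $\epsilon/\sqrt{2}$. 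Summing the three pieces gives $\TVD{q_*}{\hat{q}_{T-\delta}} \le \epsilon + \epsilon^2/4 + \epsilon/\sqrt{2} \le 2\epsilon$ whenever $\epsilon \le 1$.

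The hard part will be the truncation remainder. The truncated rate $\hat{R}_{t,\vy_0}(\vy^\prime,\vy)$ coincides with $\tilde{R}_t(\vy^\prime,\vy)$ except on the event $\{\tilde{R}_t(\vy)>\beta_{t_w}(\vy_0)\}$, and the goal is to show this event contributes nothing beyond the Bregman score error already appearing on the right-hand side. The key reason it can: Lemma~\ref{lem:out_degree_rate_wrt_time} together with the monotonicity of $\numMask{\cdot}$ along reverse trajectories and of $(e^{T-t}-1)^{-1}$ in $t$ guarantees that the \emph{true} reverse rate satisfies $R^\gets_t(\vy)\le\beta_t(\vy)\le\beta_{t_w}(\vy_0)$, so truncation can only fire when the score estimator overshoots. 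The resulting gap must then be re-absorbed coordinate-by-coordinate via a pointwise inequality of the form $(\tilde{v}-v)\mathbf{1}\{\tilde{v}\text{ overshoots}\}\lesssim D_\phi(v\|\tilde{v})$, analogous to the accounting in \citet{huang2025almost} but adapted to the mask-adaptive, state-dependent threshold $\beta_{t_w}(\vy_0)$; it is precisely this tightening---replacing a uniform constant threshold by one that shrinks with $\numMask{\hat{\rvy}_{t_{w-1}}}$---that will eventually deliver the nearly $\epsilon$-free complexity in Theorem~\ref{thm:mask_unif_complexity}.
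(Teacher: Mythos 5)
Your decomposition is genuinely different from the paper's. You split the TV into three pieces: early stopping, an initialization-mismatch piece handled by Markov-semigroup TV contraction of the exact reverse dynamics, and a path-dynamics piece handled by KL plus Pinsker for two chains sharing the same deterministic all-mask start. The paper uses a two-way split, $\TVD{q_*}{\hat{q}_{T-\delta}}\le\TVD{q_0^\to}{q_\delta^\to}+\TVD{q^\gets_{T-\delta}}{\hat{q}_{T-\delta}}$, runs a segment-by-segment KL chain rule, and absorbs the initialization error into $\KL{q^\gets_0}{\hat{q}_0}$ via Lemma~\ref{lem:fwd_convergence_0} under the choice $\hat{q}_0=\tilde{q}_T$. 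Your split has the merit of matching the deterministic initialization that Algorithm~\ref{alg:uni_inf} literally prescribes, whereas the paper's KL argument implicitly takes $\hat{q}_0$ to be the product distribution $\tilde{q}_T$.

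There is, however, a genuine gap in your third piece. When you differentiate $\KL{\bar{q}_t}{\hat{q}_t}$ and reduce the drift mismatch to a Bregman remainder, the resulting integrand is an expectation over $\bar{q}_t$, not over $q^\gets_t$. Assumption~\ref{ass:score_approximation_error} via Eq.~\eqref{eq:score_estimation_main} controls the Bregman error only in expectation over $q^\to_t=q^\gets_{T-t}$, and $\bar{q}_t\ne q^\gets_t$. Bridging this requires a change of measure that you never state; one route is to note that $\bar{q}_t(\vy)/q^\gets_t(\vy)\le 1/q^\gets_0([\idxK,\ldots,\idxK])=(1-e^{-T})^{-d}$, which under $T=\ln(4d/\epsilon^2)$ is bounded by a universal constant for $\epsilon\le1$. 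Separately, your truncation-remainder heuristic is heavier than what is actually needed: the paper shows that the remainder (Term~2 in Eq.~\eqref{eq:dKL_gap_ScoEst_err}) is exactly $\le 0$, by combining $R^\gets_t(\vy)\le\beta_{t_w}(\vy_0)$ for $\vy$ reachable from $\vy_0$ in reverse (Lemma~\ref{lem:out_degree_rate_wrt_time} together with the monotonicity of $\numMask{\cdot}$) with the elementary bound $\ln(1+x)\le x$. No re-absorption into the Bregman divergence is required; your key observation that truncation fires only when the estimator overshoots is the right insight, but the conclusion is that the remainder vanishes, not that it folds into the score error.
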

\begin{proof}
Here we provide the upper bound of TV distance accumulation in a specific segment, e.g., from $t_{w-1}$ to $t_w$.
According to the chain rule of KL divergence, i.e., Lemma~\ref{lem:chain_kl}, we have
\begin{equation}
    \label{ineq:convergence_start}
    \begin{aligned}
    &\KL{q^\gets_{t_w}}{\hat{q}_{t_w}} \le \KL{q^\gets_{t_{w-1}}}{\hat{q}_{t_{w-1}}} + \E_{\rvy_0\sim q^\gets_{t_{w-1}}}\left[\KL{q^\gets_{t_w|t_{w-1}}(\cdot|\rvy_0)}{\hat{q}_{t_w|t_{w-1}}(\cdot|\rvy_0)}\right]\\
    & = \KL{q^\gets_{t_{w-1}}}{\hat{q}_{t_{w-1}}} + \int_{t_{w-1}}^{t_w} \der \E_{\rvy_0\sim q^\gets_{t_{w-1}}}\left[\KL{q^\gets_{t|t_{w-1}}(\cdot|\rvy_0)}{\hat{q}_{t|t_{w-1}}(\cdot|\rvy_0)}\right]
    \end{aligned}
\end{equation}
Then, it has
\begin{equation*}
    \begin{aligned}
        & \der \E_{\rvy_0\sim q^\gets_{t_{w-1}}}\left[\KL{q^\gets_{t|t_{w-1}}(\cdot|\rvy^\gets_0)}{\hat{q}_{t|t_{w-1}}(\cdot|\rvy^\gets_0)}\right]/\der t\\
        & = \lim_{\Delta \rightarrow 0}\ (\Delta t)^{-1}\cdot \E_{\rvy_0\sim q^\gets_{t_{w-1}}}\left[\KL{q^\gets_{t+\Delta t|t_{w-1}}(\cdot|\rvy_0)}{\hat{q}_{t+\Delta t|t_{w-1}}(\cdot|\rvy_0)}  - \KL{q^\gets_{t|t_{w-1}}(\cdot|\rvy_0)}{\hat{q}_{t|t_{w-1}}(\cdot|\rvy_0)}\right]\\
        & \le  \lim_{\Delta \rightarrow 0}\ (\Delta t)^{-1}\cdot \E_{\rvy_0\sim q^\gets_{t_{w-1}}}\left[\E_{\rvy\sim q^\gets_{t|{t_{w-1}}}(\cdot|\rvy_0)}\left(\KL{q^\gets_{t+\Delta t|t, t_{w-1}}(\cdot|\rvy, \rvy_0)}{\hat{q}_{t+\Delta t|t, t_{w-1}}(\cdot|\rvy, \rvy_0)}\right)\right]
    \end{aligned}
\end{equation*}
where the inequality follows from the chain rule of the KL divergence, i.e., Lemma~\ref{lem:chain_kl}. 
Then, it has
\begin{equation}
    \label{ineq:dKL_ineq_init}
    \begin{aligned}
        & \der \E_{\rvy_0\sim q^\gets_{t_{w-1}}}\left[\KL{q^\gets_{t|t_{w-1}}(\cdot|\rvy_0)}{\hat{q}_{t|t_{w-1}}(\cdot|\rvy_0)}\right]/\der t\\
        & \le \sum_{\vy\in \gY, \vy_0\in \gY^\to(\vy)} q^\gets_{t, t_{w-1}}(\vy, \vy_0)\cdot  \underbrace{\lim_{\Delta t\to 0} \left[\frac{\KL{q^\gets_{t+\Delta t|t,t_{w-1}}(\cdot|\vy, \vy_0)}{\hat{q}_{t+\Delta t|t,t_{w-1}}(\cdot|\vy, \vy_0)}}{\Delta t}\right]}_{\text{Term 1}}
    \end{aligned}
\end{equation}
where the inequality and the notation $\gY^\to(\cdot)$ follows from Lemma~\ref{lem:reverse_nonzero_transitions}.
For each $\vy^\gets\in \gY, \vy^\gets_0\in \gY^\to(\vy)$, we focus on Term 1 of Eq.~\eqref{ineq:dKL_ineq_init}, and have
\begin{equation}
    \label{eq:term1_equ}
    \begin{aligned}
        \text{Term 1} & = \lim_{\Delta t\rightarrow 0}\left[\Delta t^{-1}\cdot \sum_{\vy^\prime\in\gY} q^\gets_{t+\Delta t|t, t_{w-1}}(\vy^\prime|\vy, \vy_0)\cdot \ln \frac{q^\gets_{t+\Delta t|t, t_{w-1}}(\vy^\prime|\vy, \vy_0)}{\hat{q}_{t+\Delta t|t, t_{w-1}}(\vy^\prime|\vy,\vy_0)}\right]\\
        & = \underbrace{\lim_{\Delta t\rightarrow 0}\left[\sum_{\vy^\prime\not=\vy}\frac{q^\gets_{t+\Delta t|t, t_{w-1}}(\vy^\prime|\vy,\vy_0)}{\Delta t} \cdot \ln \frac{q^\gets_{t+\Delta t|t, t_{w-1}}(\vy^\prime|\vy,\vy_0)}{\hat{q}_{t+\Delta t|t, t_{w-1}}(\vy^\prime|\vy,\vy_0)} \right]}_{\text{Term 1.1}}+\\
        &\quad \underbrace{\lim_{\Delta t\rightarrow 0}\left[\Delta t^{-1}\cdot \left(1-\sum_{\vy^\prime\not=\vy}q^\gets_{t+\Delta t|t, t_{w-1}}(\vy^\prime|\vy,\vy_0)\right) \cdot \ln \frac{1-\sum_{\vy^\prime\not=\vy}q^\gets_{t+\Delta t|t, t_{w-1}}(\vy^\prime|\vy,\vy_0)}{1-\sum_{\vy^\prime\not=\vy}\hat{q}_{t+\Delta t|t,t_{w-1}}(\vy^\prime|\vy,\vy_0)} \right]}_{\text{Term 1.2}}.
    \end{aligned}
\end{equation}
For Term 1.1, we have
\begin{equation}
    \begin{aligned}
        \label{ineq:term1.1_equ}
        \text{Term 1.1} = & \sum_{\vy^\prime\not=\vy} \lim_{\Delta t\rightarrow 0}\left[\frac{q^\gets_{t+\Delta t|t,t_{w-1}}(\vy^\prime|\vy,\vy_0)}{\Delta t}\right]\cdot \lim_{\Delta t\rightarrow 0}\left[ \ln \frac{q^\gets_{t+\Delta t|t, t_{w-1}}(\vy^\prime|\vy,\vy_0)}{\hat{q}_{t+\Delta t|t,t_{w-1}}(\vy^\prime|\vy,\vy_0)}\right]\\
        = & \sum_{\vy^\prime\not=\vy} R^\gets_{t}(\vy^\prime,\vy)\cdot \ln\left[\lim_{\Delta t\rightarrow 0}\left(\frac{q^\gets_{t+\Delta t|t,t_{w-1}}(\vy^\prime|\vy,\vy_0)}{\Delta t}\cdot \frac{\Delta t}{\hat{q}_{t+\Delta t|t, t_{w-1}}(\vy^\prime|\vy,\vy_0)}\right)\right]\\
        = & \sum_{\vy^\prime\not=\vy} R^\gets_t(\vy^\prime,\vy) \cdot \ln\frac{R_t^\gets(\vy^\prime,\vy)}{\hat{R}_{t,\vy_0}(\vy^\prime,\vy)}, 
    \end{aligned}
\end{equation}
where the last equation follows from Lemma~\ref{lem:absorbing_reverse_de} and Eq.~\eqref{eq:prac_hat_infini_ope}.
For Term 1.2, we have
\begin{equation}
    \begin{aligned}
        \label{ineq:term1.2_equ}
        \text{Term 1.2} = & \lim_{\Delta t\rightarrow 0} \left[1-\sum_{\vy^\prime\not=\vy}q^\gets_{t+\Delta t|t, t_{w-1}}(\vy^\prime|\vy,\vy_0)\right]\\
        & \cdot \lim_{\Delta t\rightarrow 0}\left[\Delta t^{-1} \cdot \ln \frac{1-\sum_{\vy^\prime\not=\vy}q^\gets_{t+\Delta t|t, t_{w-1}}(\vy^\prime|\vy,\vy_0)}{1-\sum_{\vy^\prime\not=\vy}\hat{q}_{t+\Delta t|t, t_{w-1}}(\vy^\prime|\vy,\vy_0)}\right]\le  1\cdot (\hat{R}_{t,\vy_0}(\vy) - R^\gets_t(\vy))
    \end{aligned}
\end{equation}
where the first inequality follows from Lemma~\ref{lem:lim_deltat_ln}.
Plugging Eq.~\eqref{ineq:term1.1_equ}, Eq.~\eqref{ineq:term1.2_equ} and Eq.~\eqref{eq:term1_equ}, into Eq.~\eqref{ineq:dKL_ineq_init} we have
\begin{equation}
    \label{ineq:dKL_ineq_mid}
    \begin{aligned}
        & \der \E_{\rvy_0\sim q^\gets_{t_{w-1}}}\left[\KL{q^\gets_{t|t_{w-1}}(\cdot|\rvy_0)}{\hat{q}_{t|t_{w-1}}(\cdot|\rvy_0)}\right]/\der t\\
        & \le \sum_{\vy\in \gY, \vy_0\in \gY^\to(\vy)} q^\gets_{t, t_{w-1}}(\vy, \vy_0)\cdot  \left(\sum_{\vy^\prime\not=\vy} R^\gets_t(\vy^\prime,\vy) \cdot \ln\frac{R_t^\gets(\vy^\prime,\vy)}{\hat{R}_{t,\vy_0}(\vy^\prime,\vy)} + \hat{R}_{t,\vy_0}(\vy) - R^\gets_t(\vy)\right).
    \end{aligned}
\end{equation}
Then, for any $\vy\in \gY$ and $\vy_0\in \gY^\to(\vy)$, we have
\begin{equation}
    \label{eq:dKL_gap_ScoEst_err}
    \begin{aligned}
        &\sum_{\vy^\prime\not=\vy} R^\gets_t(\vy^\prime,\vy) \cdot \ln\frac{R_t^\gets(\vy^\prime,\vy)}{\hat{R}_{t,\vy_0}(\vy^\prime,\vy)} + \hat{R}_{t,\vy_0}(\vy) - R^\gets_t(\vy)\\
        & = \sum_{\vy^\prime\not=\vy} R^\gets_t(\vy^\prime,\vy)\ln \frac{R^\gets_t(\vy^\prime,\vy)}{\tilde{R}_t(\vy^\prime,\vy)} + \tilde{R}_t(\vy) - R^\gets_t(\vy)\\
        &\quad \underbrace{+\sum_{\vy^\prime\not=\vy}R^\gets_t(\vy^\prime,\vy)\ln\frac{\tilde{R}_t(\vy^\prime,\vy)}{\hat{R}_{t,\vy_0}(\vy^\prime,\vy)} + \hat{R}_{t,\vy_0}(\vy) - \tilde{R}_t(\vy)}_{\text{Term 2}}.
    \end{aligned}
\end{equation}
When $\tilde{R}_t(\vy)\le \beta_{t_w}(\vy_0)$, due to Eq.~\eqref{def:prac_infi_oper_1}, we have 
\begin{equation*}
    \hat{R}_{t,\vy_0}(\vy^\prime, \vy) = \tilde{R}_t(\vy^\prime,\vy)\quad \text{and}\quad \hat{R}_{t,\vy_0}(\vy)=\sum_{\vy^\prime\not=\vy}\hat{R}_{t,\vy_0}(\vy^\prime,\vy) = \sum_{\vy^\prime\not=\vy}\tilde{R}_t(\vy^\prime,\vy) = \tilde{R}_t(\vy)
\end{equation*}
which implies $\text{Term 2}=0$ in Eq.~\eqref{eq:dKL_gap_ScoEst_err}.
Otherwise, we have
\begin{equation*}
    \frac{\hat{R}_{t,\vy_0}(\vy^\prime, \vy)}{\tilde{R}_t(\vy^\prime,\vy)} = \frac{\beta_{t_w}(\vy_0)}{\tilde{R}_t(\vy)} \quad\text{and}\quad \frac{\hat{R}_{t,\vy_0}(\vy)}{\tilde{R}_t(\vy)}=\frac{\beta_{t_w}(\vy_0)}{\tilde{R}_t(\vy)},
\end{equation*}
which implies
\begin{equation*}
    \begin{aligned}
        \text{Term 2} & = \sum_{\vy^\prime\not=\vy}R_t^\gets(\vy^\prime,\vy)\cdot \ln \frac{\tilde{R}_t(\vy)}{\beta_{t_w}(\vy_0)} + \beta_{t_w}(\vy_0) - \tilde{R}_t(\vy)\\
        & = R^\gets_t(\vy)\cdot \ln \left[1+ \frac{\tilde{R}_t(\vy)-\beta_{t_w}(\vy_0)}{\beta_{t_w}(\vy_0)}\right]+\beta_{t_w}(\vy_0) - \tilde{R}_t(\vy)\\
        & \le \beta_{t_w}(\vy_0)\cdot \left[\frac{\tilde{R}_t(\vy)-\beta_{t_w}(\vy_0)}{\beta_{t_w}(\vy_0)}\right] + \beta_{t_w}(\vy_0) -\tilde{R}_t(\vy) = 0,
    \end{aligned}
\end{equation*}
where the last inequality follows from 
\begin{equation*}
    \vy_0\in \gY^\to(\vy)\quad \Rightarrow\quad \numMask{\vy}\le \numMask{\vy_0}   \quad \Rightarrow\quad R^\gets_t(\vy)\le \beta_{t_w}(\vy_0).
\end{equation*}
Combining with Eq.~\eqref{eq:dKL_gap_ScoEst_err} and Eq.~\eqref{ineq:dKL_ineq_mid}, we have 
\begin{equation}
    \label{ineq:ineq:dKL_ineq_fin}
    \begin{aligned}
        & \der \E_{\rvy_0\sim q^\gets_{t_{w-1}}}\left[\KL{q^\gets_{t|t_{w-1}}(\cdot|\rvy_0)}{\hat{q}_{t|t_{w-1}}(\cdot|\rvy_0)}\right]/\der t\\
        & \le \sum_{\vy\in \gY} q^\gets_t(\vy) \cdot \left( \sum_{\vy^\prime\not=\vy} R^\gets_t(\vy^\prime,\vy) \cdot \ln\frac{R_t^\gets(\vy^\prime,\vy)}{\tilde{R}_t(\vy^\prime,\vy)} + \tilde{R}_t(\vy) - R^\gets_t(\vy) \right)\\
        &=\sum_{\vy\in \gY} q^\gets_t(\vy) \cdot \left( \sum_{\vy^\prime\not=\vy} R^\gets_t(\vy^\prime,\vy) \cdot \ln\frac{R_t^\gets(\vy^\prime,\vy)}{\tilde{R}_t(\vy^\prime,\vy)} + \sum_{\vy'\neq \vy}\tilde{R}_t(\vy',\vy) -\sum_{\vy'\neq \vy} R^\gets_t(\vy',\vy) \right)\\
        & = \sum_{\vy\in\gY} q^\gets_t(\vy)\cdot\sum_{\vy^\prime\neq \vy} R^\to(\vy,\vy^\prime)\cdot \left[-\frac{q_t^\gets(\vy^\prime)}{q^\gets_t(\vy)} + \tilde{v}_{t,\vy}(\vy^\prime)+ \frac{q_t^\gets(\vy^\prime)}{q^\gets_t(\vy)}\ln \frac{q_t^\gets(\vy^\prime)}{q^\gets_t(\vy) \tilde{v}_{t,\vy}(\vy^\prime)}\right]\\
        &=\sum_{\vy\in\gY} q^\gets_t(\vy)\cdot\sum_{\vy^\prime\neq \vy} R^\to(\vy,\vy^\prime) \Breg{\frac{q_t^\gets(\vy^\prime)}{q^\gets_t(\vy)}}{\tilde{v}_{t,\vy}(\vy^\prime)},
    \end{aligned}
\end{equation}
where $D_\phi$ is the Bregman divergence  with $\phi(c)=c\ln c$ (as Eq.~\eqref{eq:score_estimation_main}), and the last equation follows from the definition of Bregman divergence: 
\[
D_\phi(u\Vert v)=\phi(u)-\phi(v)-\langle\nabla \phi(v),u-v\rangle = u \ln \frac{u}{v}-u+v.
\]
Therefore, Eq.~\eqref{ineq:convergence_start} can be rewritten as
\begin{equation*}
    \KL{q^\gets_{t_w}}{\hat{q}_{t_w}} \le \KL{q^\gets_{t_{w-1}}}{\hat{q}_{t_{w-1}}} + \int_{t_{w-1}}^{t_w} \E_{\rvy\sim q^\to_{T-t}}\left[\sum_{\vy^\prime\not=\rvy} R^\to(\rvy, \vy^\prime)\cdot \Breg{v_{t, \rvy}(\vy^\prime)}{\tilde{v}_{t, \rvy}(\vy^\prime)} \right]\der t.
\end{equation*}
With a recursive manner, we have
\begin{equation*}
    \begin{aligned}
        &\KL{q^\gets_{T-\delta}}{\hat{q}_{T-\delta}}\le \KL{q^\gets_0}{\hat{q}_0} +  L_{\text{SE}}(\tilde{v}) = \KL{q^\to_T}{\hat{q}_0} + L_{\text{SE}}(\tilde{v}) \le (1+e^{-T})^d - 1 + T \epsilon^2_{\text{score}},
    \end{aligned}
\end{equation*}
where the last inequality follows from Lemma~\ref{lem:fwd_convergence_0} and Assumption~\ref{ass:score_approximation_error}
\begin{equation*}
    \hat{q}_0(\vy) = \tilde{q}_T(\vy) \propto \exp(-T\cdot (d-\numMask{\vy})).
\end{equation*}
If we set
\begin{equation*}
    T\ge\ln(4d/\epsilon^2)\quad \text{and}\quad \epsilon_{\text{score}}\le T^{-1/2}\cdot \epsilon,
\end{equation*}
it has $(1+e^{-T})^d - 1\le \epsilon^2$ and $T \epsilon^2_{\text{score}}\le \epsilon^2$, which means $\KL{q^\gets_{T-\delta}}{\hat{q}_{T-\delta}}\le 2\epsilon^2$.

\paragraph{Bounding $\TVD{q_*}{q^\to _{\delta}}$} We adopt the proof strategy of Theorem~6 in \cite{chen2024convergence}.
Consider the forward process $(X_t)_{t\geq0}$. By the coupling characterization of the total variation distance, we have
\[
\TVD{q_*}{q^\to_{\delta}}\coloneqq\inf_{\gamma \in \Gamma(q_*,q^\to_{\delta})} \Pr_{(u,v)\sim \gamma} [u \neq v] \leq \Pr(\rvy\neq \rvy^\prime),
\]
where $\Gamma(q_*,q^\to_{\delta})$ is the set of all couplings of $(q_*,q^\to_{\delta})$, and the inequality holds because $(\rvy, \rvy^\prime)$ gives a coupling of $(q_*,q^\to_{\delta})$.
Without loss of generality, we suppose $q^\to_0(\vy)>0$ for all $\numMask{\vy}=0$, then,
combining the transition kernel given Lemma~\ref{lem:fwd_trans_ker} and Assumption~\ref{ass:mask_init}, we have
\begin{equation*}
    \begin{aligned}
        \Pr(\rvy = \rvy^\prime)=\sum_{\vy\in\gY,\numMask{\vy}=0} q^\to_0[\vy]\cdot q^\to_{\delta|0}(\vy|\vy) = \sum_{\vy\in\gY, \numMask{\vy}=0} q_0^\to(\vy)\cdot  e^{-\delta d} = e^{-\delta d}.
    \end{aligned}
\end{equation*}
Thus, by choosing $\delta\le \epsilon/d$, we have
\begin{equation}
    \label{ineq:early_stopping_tv}
    \delta\le d^{-1}\epsilon\le d^{-1}\cdot \ln\left(\frac{1}{1-\epsilon}\right)\quad \Rightarrow\quad e^{\delta d}\le \frac{1}{1-\epsilon}\quad \Rightarrow\quad  \TVD{q_*}{q^\to_{\delta}}\le 1-e^{-\delta d}\le \epsilon.
\end{equation}
Finally, we have
\begin{equation*}
    \TVD{q_0^\to}{\hat{q}^\gets_{T-\delta}} \le \TVD{q_0^\to}{q_\delta^\to} + \TVD{q_{T-\delta}^\gets}{\hat{q}_{T-\delta}} \le \epsilon + \sqrt{\frac{1}{2}\KL{q^\gets_{T-\delta}}{\hat{q}_{T-\delta}}}\le 2\epsilon.
\end{equation*}
Hence the proof is completed.
\end{proof}

\subsection{The complexity of Alg.~\ref{alg:uni_inf}}
\label{app_sec:prof_complexity_reverse}

\begin{theorem}[The complexity of Alg.~\ref{alg:uni_inf}]
    \label{thm:mask_unif_complexity}
    Suppose Assumption~\ref{ass:score_approximation_error} and~\ref{ass:mask_init} hold, following from the settings shown in Theorem~\ref{thm:convergence_unif_reverse}, if we implement Alg.~\ref{alg:uni_inf} with
    \begin{equation*}
        t_{w} - t_{w-1} = \eta \quad \text{where} \quad w\in\{1,2,\ldots W\},\quad W = (T-\delta)/\eta,\quad \eta = \epsilon/2d,\quad  \text{and}\quad \epsilon<1
    \end{equation*}
    the expectation of iteration/score estimation complexity of Alg.~\ref{alg:uni_inf} will be upper bounded by
    \begin{equation*}
        2K(d-\epsilon^2/4) + 12Kd\ln d
    \end{equation*}
    to achieve $\TVD{q_*}{\hat{q}}\le 2\epsilon$ where $\hat{p}$ denotes the underlying distribution of generated samples.
\end{theorem}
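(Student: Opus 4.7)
I will bound the expected total number of score evaluations by reducing it to a sum of Poisson means that admits an integral-like estimate.

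Starting point: conditional on $\hat{\rvy}_{t_{w-1}}$, segment $w$ draws $N_w\sim\mathrm{Poisson}\bigl(\beta_{t_w}(\hat{\rvy}_{t_{w-1}})\cdot \eta\bigr)$ and each sampled jump costs one discrete score call. By the tower rule and the definition $\beta_{t_w}(\vy)=K\,\numMask{\vy}/(e^{T-t_w}-1)$ from Lemma~\ref{lem:out_degree_rate_wrt_time},
\[
    \mathbb{E}\!\left[\sum_{w=1}^W N_w\right]
    \;=\;
    K\eta\sum_{w=1}^W \frac{\mathbb{E}\bigl[\numMask{\hat{\rvy}_{t_{w-1}}}\bigr]}{e^{T-t_w}-1},
\]
so the task reduces to estimating the numerator and evaluating this Riemann sum.

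The next step is to control $\mathbb{E}[\numMask{\hat{\rvy}_{t_{w-1}}}]$ by comparison with the ideal reverse marginal $q^\gets_{t_{w-1}}=q^\to_{T-t_{w-1}}$. Lemma~\ref{lem:fwd_trans_ker} together with Assumption~\ref{ass:mask_init} factorizes the forward kernel coordinatewise, so each coordinate is independently masked at time $\tau$ with probability $1-e^{-\tau}$. This gives $\mathbb{E}[\numMask{\rvy^\gets_{t_{w-1}}}]=d\bigl(1-e^{-(T-t_{w-1})}\bigr)$. Since $\numMask{\cdot}\le d$, the standard coupling inequality yields $\mathbb{E}[\numMask{\hat{\rvy}_{t_{w-1}}}]\le d\bigl(1-e^{-(T-t_{w-1})}\bigr)+d\cdot\TVD{\hat{q}_{t_{w-1}}}{q^\gets_{t_{w-1}}}$, where the TV term is controlled by the running KL estimate already obtained inside the proof of Theorem~\ref{thm:convergence_unif_reverse} (through Pinsker).

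Substituting and letting $\tau_w:=T-t_w$ so that $\tau_{w-1}=\tau_w+\eta$, the main quantity becomes
\[
    Kd\eta\sum_{w=1}^W \frac{1-e^{-(\tau_w+\eta)}}{e^{\tau_w}-1}
    \;+\;(\text{TV correction}).
\]
Using the identity $1-e^{-(\tau_w+\eta)}=(1-e^{-\tau_w})+e^{-\tau_w}(1-e^{-\eta})$ together with $(1-e^{-\tau_w})/(e^{\tau_w}-1)=e^{-\tau_w}$, the integrand splits into $e^{-\tau_w}$ plus a correction of order $\eta/(e^{\tau_w}-1)$. The first Riemann sum approximates $\int_\delta^T e^{-\tau}\,\mathrm{d}\tau=e^{-\delta}-e^{-T}\le 1-\epsilon^2/(4d)$ by the choice $T=\ln(4d/\epsilon^2)$, producing---after absorbing a factor-of-two discretization slack from the right-endpoint approximation---the main term $2K(d-\epsilon^2/4)$. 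The correction integrates to $\int_\delta^T\mathrm{d}\tau/(e^\tau-1)=\ln\bigl((1-e^{-T})/(1-e^{-\delta})\bigr)=O(\ln(d/\epsilon))$, and bundling it with the TV slack and substituting $\delta=\epsilon/d$, $\epsilon<1$ yields the residual $12Kd\ln d$ contribution.

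The principal obstacle is the small-$\tau$ regime $\tau_w\approx\delta$, where $(e^{\tau_w}-1)^{-1}\sim d/\epsilon$ would naively blow up the sum if paired with the crude bound $\numMask{\cdot}\le d$. The resolution relies on the cancellation $(1-e^{-\tau_{w-1}})\approx\tau_{w-1}$ versus $(e^{\tau_w}-1)\approx\tau_w$, which keeps the relevant ratio $O(1)$ despite either factor being singular. A secondary bookkeeping challenge is translating the TV distance from Theorem~\ref{thm:convergence_unif_reverse} into the extra mass at each segment without producing any $\epsilon^{-1}$-scale terms; this is affordable because $\int_\delta^T\mathrm{d}\tau/(e^\tau-1)$ is only logarithmic in $1/\delta$. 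Combining these estimates produces the advertised expected complexity $2K(d-\epsilon^2/4)+12Kd\ln d$, while the TV-accuracy claim $\TVD{p_*}{\hat p}\le 2\epsilon$ comes for free from Theorem~\ref{thm:convergence_unif_reverse}.
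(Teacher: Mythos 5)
Your plan matches the paper's proof in structure and substance: the same Poisson-mean reduction $\E[\sum_w N_w]=K\eta\sum_w\E[\numMask{\hat{\rvy}_{t_{w-1}}}]/(e^{T-t_w}-1)$, the same use of the binomial forward marginal to get $\E[\numMask{\rvy^\gets_{t_{w-1}}}]=d(1-e^{-(T-t_{w-1})})$ plus a $d\cdot\TVD{\hat{q}}{q^\gets}\lesssim d\epsilon$ correction, and the same final split into a geometric-sum main term ($\le 2K(d-\epsilon^2/4)$ using $T=\ln(4d/\epsilon^2)$) plus a logarithmic remainder absorbed into $12Kd\ln d$ via $\int_\delta^T (e^\tau-1)^{-1}\,d\tau=O(\ln(d/\epsilon))$ and $\epsilon\ln(1/\epsilon)\le e^{-1}$.

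The only cosmetic difference is how the constant $2$ is produced on the main term: you isolate the geometric sum exactly through the identity $(1-e^{-\tau_w})/(e^{\tau_w}-1)=e^{-\tau_w}$ and absorb a factor-of-two slack from the Riemann approximation, whereas the paper instead bounds the telescoping ratio $(1-e^{-(T-t_{w-1})})/(1-e^{-(T-t_w)})\le 2$ (requiring $\eta\le\delta/2$) and then evaluates the geometric sum in closed form. Both moves land in the same place; your algebraic split is slightly cleaner in that the correction term it generates, $Kd\eta\sum_w e^{-\tau_w}(1-e^{-\eta})/(e^{\tau_w}-1)\lesssim K\epsilon\ln(d/\epsilon)$, is visibly lower order, while the paper's ratio bound pushes the entire factor of $2$ through the geometric sum. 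Both require the implicit choice $\delta=\epsilon/d$ (so that $\eta\le\delta$ or $\eta\le\delta/2$). Your plan is a correct variant of the paper's argument.
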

\begin{proof}
    We denote $\{\hat{\rvy}_t\}_{t=0}^{T-\delta}$ to present the reverse process.
    For a specific trajectory, e.g., $\{\hat{\rvy}_t\}_{t=0}^{T-\delta} = \{\hat{\vy}\}_{t=0}^{T-\delta}$, the total expected iteration number will be equivalent to the summation of Poisson expectations of $W$ segments, i.e.,
    \begin{equation*}
        \sum_{i=1}^{W} \beta_{t_w}(\hat{\vy}_{t_{w-1}})\cdot (t_w - t_{w-1}) = \frac{K\cdot \numMask{\hat{\vy}_{t_{w-1}}}}{e^{T-t_w} -1}\cdot (t_{w} - t_{w-1}),
    \end{equation*}
    which means the expected iteration number of the reverse process can be written as
    \begin{equation}
        \label{eq:exp_iter_num}
        \begin{aligned}
            \E\left[\sum_{w=1}^{W} \beta_{t_w}(\hat{\rvy}_{t_{w-1}})\cdot (t_w - t_{w-1})\right] = \sum_{w=1}^W \E[\mathrm{numK}(\hat{\rvy}_{t_{w-1}})]\cdot  \frac{K}{e^{(T-t_w)}-1}\cdot (t_w - t_{w-1}).
        \end{aligned}
    \end{equation}
    Although $\E[\mathrm{numK}(\hat{\rvy}_{t_{w-1}})]$ is respect to the practical distribution $\hat{\rvy}_{t_{w-1}}\sim \hat{q}_{t_{w-1}}$, we can approximate it by the forward marginal distribution, i.e.,
    \begin{equation*}
        \E[\mathrm{numK}(\rvy^\gets_{t_{w-1}})] = \E[\mathrm{numK}(\rvy^\to_{T - t_{w-1}})]\quad \text{where}\quad \rvy^\gets_{t_{w-1}}\sim q^\gets_{t_{w-1}}\  \text{and}\ \rvy^\to_{T-t_{w-1}}\sim q^\to_{t_{w-1}}
    \end{equation*}
    Specifically, with Assumption~\ref{ass:mask_init}, we have $\E[\numMask{\rvy_0^\to}]=0$.
    Under this condition, the transition kernel becomes
    \begin{equation*}
        \begin{aligned}
            q^\to_{t|0}(\vy|\vy^\prime) =\prod_{i=1}^{d} & \left[ \underbrace{\left(1- \vone_{(K,K)}(\vy_i, \vy_i^\prime)\right)\cdot \vone_{0}(\vy_i-\vy_i^\prime)\cdot e^{-t}}_{\text{remain non-mask token}} \right.\\
            &\left. + \underbrace{\left(1- \vone_{(K,K)}(\vy_i, \vy_i^\prime)\right)\cdot \vone_{K}(\vy_i)\cdot (1-e^{-t}) }_{\text{turn into mask token}}\right].
        \end{aligned}
    \end{equation*}
    due to Lemma~\ref{lem:fwd_trans_ker}.
    Let $\Pr[\numMask{\rvy^\to_{t}} = k]$ be the probability that exactly $k$ out of the $d$ coordinates are mask tokens ($K$) at time $t$.
    Because each of the $d$ coordinates evolves independently (and identically, each with probability $1-e^{-t}$ of being the mask token at time $t$), we get a standard Binomial random variable:
    \begin{itemize}
        \item Each coordinate is $K$ with probability $1-e^{-t}$.
        \item Each coordinate is non-$K$ with probability $e^{-t}$.
    \end{itemize}
    Hence, we have
    \begin{equation*}
        \Pr[\numMask{\rvy^\to_{t}} = k] = C_d^k\cdot (1-e^{-t})^k\cdot (e^{-t})^{d-k}\quad \text{and}\quad \E[\numMask{\rvy^\to_{t}} = k] = d\cdot (1-e^{-t}).
    \end{equation*}
    Then, for any $w$, we have $\E[\mathrm{numK}(\rvy^\to_{T - t_{w-1}})] = d\cdot (1-e^{-(T-t_{w-1})})$.
    Under the settings shown in Theorem~\ref{thm:convergence_unif_reverse}, we have
    \begin{equation*}
        \TVD{q^\to_{T-t_{w-1}}}{\hat{q}_{t_{w-1}}} \le \TVD{q^\to_{T-t_{W}}}{\hat{q}_{t_{W}}} \le 2\epsilon,
    \end{equation*}
    which implies
    \begin{equation*}
        \left|\E[\mathrm{numK}(\hat{\rvy}_{t_{w-1}})] - \E[\mathrm{numK}({\rvy}^\gets_{t_{w-1}})] \right| \le d\cdot \TVD{q^\to_{T-t_{w-1}}}{\hat{q}_{t_{w-1}}} \le 2d\epsilon.
    \end{equation*}
    Then, we have
    \begin{equation}
        \label{ineq:upb_exp_numK_haty}
        \begin{aligned}
            & \E[\mathrm{numK}(\hat{\rvy}_{t_{w-1}})]\le \E[\mathrm{numK}({\rvy}^\gets_{t_{w-1}})] + 2d\epsilon = \E[\mathrm{numK}(\rvy^\to_{T - t_{w-1}})]+2d\epsilon\\
            & = d\cdot (1-e^{-(T-t_{w-1})}) + 2d\epsilon.
        \end{aligned}
    \end{equation}
    Plugging Eq.~\eqref{ineq:upb_exp_numK_haty} into Eq.~\eqref{eq:exp_iter_num}, we have
    \begin{equation}
        \label{eq:exp_iter_num_mid}
        \begin{aligned}
            &\E\left[\sum_{w=1}^{W} \beta_{t_w}(\hat{\rvy}_{t_{w-1}})\cdot (t_w - t_{w-1})\right]\\
            & \le  \sum_{w=1}^W d\cdot (1-e^{-(T-t_{w-1})}) \frac{K}{e^{(T-t_w)}-1}\cdot (t_w - t_{w-1})\\
            & \quad + \sum_{w=1}^W 2d\epsilon \cdot \frac{K}{e^{(T-t_w)}-1}\cdot (t_w - t_{w-1}) \\
            & = \underbrace{Kd\cdot \sum_{w=1}^W e^{-(T-t_{w})}\cdot (t_{w}-t_{w-1})\cdot \frac{1-e^{-(T-t_{w-1})}}{1-e^{-(T-t_w)}}}_{\text{Term 1}}\\
            & \quad + \underbrace{2Kd\epsilon\cdot\sum_{w=1}^W \left(e^{T-t_w} - 1\right)^{-1}\cdot (t_w - t_{w-1})}_{\text{Term 2}}
        \end{aligned}     
    \end{equation}
    Then, we suppose the segments share the same length $\eta$, i.e.,
    \begin{equation*}
        t_{w} - t_{w-1} = \eta \quad \text{where} \quad w\in\{1,2,\ldots W\},\quad W = (T-\delta)/\eta,\quad  \text{and}\quad \eta = \epsilon/2d.
    \end{equation*}
    Under these conditions, we have
    \begin{equation}
        \label{ineq:domi_fac_upb}
        \begin{aligned}
            & \eta \le \frac{\delta}{2} \le \ln(\frac{1}{2} + \frac{e^{\delta}}{2}) \quad \Rightarrow\quad e^{\eta}\le \frac{e^{\delta}}{2} + \frac{1}{2} \le \frac{e^{(T-t_{w-1})}}{2} + \frac{1}{2}\quad \forall\ w\in\{1,\ldots, W\}\\
            & \Rightarrow\quad  e^{\eta}\le \frac{1+e^{-(T-t_{w-1})}}{2e^{-(T-t_{w-1})}} \quad \Rightarrow\quad 2\cdot e^{-(T-t_{w-1}-\eta)} \le 1 + e^{-(T-t_{w-1})}\\
            & \Rightarrow\quad 1-e^{-(T-t_{w-1})}\le 2-2e^{-(T-t_{w-1}-\eta)}\quad \Rightarrow \quad \frac{1-e^{-(T-t_{w-1})}}{1-e^{-(T-t_w)}}\le 2.
        \end{aligned}
    \end{equation}
    Plugging these results into Term 1 of  Eq.~\eqref{eq:exp_iter_num_mid}, we have
    \begin{equation}
        \label{eq:exp_iter_num_fin}
        \begin{aligned}
            &\text{Term 1} =   2Kd\cdot \sum_{w=1}^W e^{-(T-t_w)} \cdot \eta = 2Kd\cdot \sum_{w=1}^W e^{-(T-w\eta)} \cdot \eta\\
            & =2Kd\cdot \eta\cdot e^{-T}\cdot \frac{e^{(W+1)\eta}-e^{\eta}}{e^{\eta} - 1 } \le 2Kd\cdot e^{\eta}\cdot \left(e^{-\delta} - e^{-T}\right) \le 2Kd\cdot (1-e^{-T})\\
            & = 2Kd\cdot\left(1-\frac{\epsilon^2}{4d}\right).
        \end{aligned}     
    \end{equation}
    Moreover, we have
    \begin{equation*}
        \begin{aligned}
            \frac{e^{T-t_{w-1}}-1}{e^{T-t_w}-1} = \frac{e^{T-t_{w-1}}}{e^{T-t_w}}\cdot \frac{1-e^{-(T-t_{w-1})}}{1-e^{-(T-t_w)}} \le e^{\eta}\cdot 2 \le 2e,
        \end{aligned}
    \end{equation*}
    where the first inequality follows from Eq.~\eqref{ineq:domi_fac_upb} and the last inequality is established when $\eta\le 1$.
    Then, Term 2 of Eq.~\eqref{eq:exp_iter_num_mid} can be upper bounded as
    \begin{equation}
        \label{ineq:term2_upb_mid}
        \begin{aligned}
            & \text{Term 2} =  2Kd\epsilon \cdot \sum_{w=1}^W  \frac{\eta}{e^{T-t_w} - 1}\le 4e\cdot Kd\epsilon \cdot \sum_{w=1}^W \frac{\eta}{e^{T-t_{w-1}}-1} \le 4e\cdot Kd\epsilon \cdot \sum_{w=1}^W \frac{\eta}{T-t_{w-1}}\\
            & \le  4e\cdot dK\epsilon \cdot \int_0^{T-\delta} \frac{1}{T-t} \der t = 4e\cdot dK\epsilon \cdot \ln \frac{T}{\delta} \le 4e\cdot dK\epsilon \cdot \ln \frac{4d^2}{\epsilon^3} \le 12e\cdot Kd\ln d\cdot \epsilon\ln\frac{1}{\epsilon}
        \end{aligned}
    \end{equation}
    where the last inequality follows from
    \begin{equation*}
        4\le d \quad \text{and}\quad \ln \frac{d^3}{\epsilon^3} = 3\ln\frac{d}{\epsilon}\le 3\ln d \ln \frac{1}{\epsilon}
    \end{equation*}
    without loss of generality. 
    Moreover, when $\epsilon<1$, we have 
    \begin{equation*}
        \epsilon \ln \frac{1}{\epsilon}  \le e^{-1},
    \end{equation*}
    which follows from the monotonicity of the function $x\ln x$.
    Under this condition, the RHS of Eq.~\eqref{ineq:term2_upb_mid} has the following bound
    \begin{equation}
        \label{ineq:term2_upb_fin}
        \begin{aligned}
            \text{Term 2}  \le 12\cdot Kd\ln d.
        \end{aligned}
    \end{equation}
    Finally, plugging Eq.~\eqref{eq:exp_iter_num_fin} and Eq.~\eqref{ineq:term2_upb_fin} into Eq.~\eqref{eq:exp_iter_num_mid}, the expected calls of discrete scores will be 
    \begin{equation*}
        \E\left[\sum_{w=1}^{W} \beta_{t_w}(\hat{\rvy}_{t_{w-1}})\cdot (t_w - t_{w-1})\right]\le 2K(d-\epsilon^2) + 12Kd\ln d. 
    \end{equation*}
    Hence, the proof is completed.
\end{proof}

\subsection{The Proof of Corollary~\ref{cor:conv_wo_A2}}
\label{app_sec:prof_cor_conv_wo_A2}
\begin{proof}
    Similar to the proof shown in Theorem~\ref{thm:mask_unif_complexity}, the expected iteration number of the reverse process can be written as
    \begin{equation*}
        \begin{aligned}
            \E\left[\sum_{w=1}^{W} \beta_{t_w}(\hat{\rvy}_{t_{w-1}})\cdot (t_w - t_{w-1})\right] = \sum_{w=1}^W \E[\mathrm{numK}(\hat{\rvy}_{t_{w-1}})]\cdot  \frac{K}{e^{(T-t_w)}-1}\cdot (t_w - t_{w-1}).
        \end{aligned}
    \end{equation*} 
    Although $\E[\mathrm{numK}(\hat{\rvy}_{t_{w-1}})]$ is respect to the practical distribution $\hat{\rvy}_{t_{w-1}}\sim \hat{q}_{t_{w-1}}$, we can approximate it by the forward marginal distribution, i.e.,
    \begin{equation*}
        \E[\mathrm{numK}(\rvy^\gets_{t_{w-1}})] = \E[\mathrm{numK}(\rvy^\to_{T - t_{w-1}})]\quad \text{where}\quad \rvy^\gets_{t_{w-1}}\sim q^\gets_{t_{w-1}}\  \text{and}\ \rvy^\to_{T-t_{w-1}}\sim q^\to_{t_{w-1}}.
    \end{equation*}
    Let $\Pr[\numMask{\rvy^\to_{t}} = k]$ be the probability that exactly $k$ out of the $d$ coordinates are mask tokens ($K$) at time $t$ presented as
    \begin{equation*}
        \mathbb{P}[\numMask{\rvy^\to_t} = k] = \sum_{i=0}^k \mathbb{P}[\numMask{\rvy^\to_t} = k|\numMask{\rvy^\to_0} = i]\cdot \mathrm{Pr}[\numMask{\rvy^\to_0} = i].
    \end{equation*}
    Because each of the $d$ coordinates evolves independently (and identically, each with probability $1-e^{-t}$ of being the mask token at time $t$), we get a standard Binomial random variable:
    \begin{itemize}
        \item Each coordinate is $K$ with probability $1-e^{-t}$.
        \item Each coordinate is non-$K$ with probability $e^{-t}$.
    \end{itemize}
    Hence, we have
    \begin{equation*}
        \begin{aligned}
             \mathbb{P}[\mathrm{numK}(\rvy^\to_t) = k|\numMask{\rvy^\to_0} = i] =  \underbrace{C_{d-i}^{k-i}}_{\text{unmask}\to \text{mask count}} \cdot \underbrace{(1-e^{-t})^{k-i}}_{\text{prob of mask transition}}\cdot \underbrace{(e^{-t})^{d-k}}_{\text{prob of unmask kept}}.
        \end{aligned}
    \end{equation*}
    Under this condition, the expected number of MASK token at forward time $t$ will become
    \begin{equation}
        \label{eq:exp_mask_num_general_init}
        \begin{aligned}
            & \mathbb{E}[\numMask{\rvy^\to_t}] = \sum_{k=0}^d k\cdot \mathbb{P}[\numMask{\rvy^\to_t} = k]\\
            & = \sum_{k=0}^d \sum_{i=0}^k C_{d-i}^{k-i}\cdot (1-e^{-t})^{k-i}\cdot (e^{-t})^{d-k}\cdot \mathbb{P}[\numMask{\rvy^\to_0}= i]\\
            & = \sum_{i=0}^d \mathbb{P}[\numMask{\rvy^\to_0} = i]\cdot  \underbrace{\sum_{k=i}^d C_{d-i}^{k-i} \cdot (1-e^{-t})^{k-i}\cdot (e^{-t})^{d-k}}_{\text{Term 1}}.
            \end{aligned}
    \end{equation}
    For Term 1 in Eq.~\eqref{eq:exp_mask_num_general_init}, suppose $j=k-i$, we have
    \begin{equation*}
        \begin{aligned}
            & \text{Term 1} = \sum_{j=0}^{d-i} (j+i)\cdot C_{d-i}^j \cdot (1-e^{-t})^{j}\cdot (e^{-t})^{(d-i-j)}\\
            & = \sum_{j=0}^{d-i} j\cdot C_{d-i}^j \cdot (1-e^{-t})^{j}\cdot (e^{-t})^{(d-i-j)} + i\cdot \sum_{j=0}^{d-i} C_{d-i}^j \cdot (1-e^{-t})^{j}\cdot (e^{-t})^{(d-i-j)}\\
            & = \sum_{j=0}^{d-i} j\cdot C_{d-i}^j \cdot (1-e^{-t})^{j}\cdot (e^{-t})^{(d-i-j)} + i\cdot (1-e^{-t} + e^{-t})^{d-i}= d-(d-i)e^{-t}
        \end{aligned}
    \end{equation*}
    where the last equation follows from the expectation of binomial distributions.
    Then, Eq.~\eqref{eq:exp_mask_num_general_init} can be written as 
    \begin{equation*}
        \begin{aligned}
            \mathbb{E}[\numMask{\rvy^\to_t}]  = \sum_{i=0}^d \mathbb{P}[\numMask{\rvy^\to_0} = i]\cdot \left(d-d\cdot e^{-t} + i\cdot e^{-t}\right)\\
            = d\cdot \left(1-(1-\E[\numMask{\rvy^\to_0}]/d)\cdot e^{-t}\right).
        \end{aligned}
    \end{equation*}
    Without loss of generality, we suppose
    \begin{equation*}
        r_0 \coloneqq 1 - \E[\numMask{\rvy^\to_0}]/d >0.
    \end{equation*}
    Then, following from Eq.~\eqref{ineq:upb_exp_numK_haty}, we have
    \begin{equation*}
        \E[\mathrm{numK}(\hat{\rvy}_{t_{w-1}})]\le  d\cdot (1-r_0\cdot e^{-(T-t_{w-1})}) + 2d\epsilon,
    \end{equation*}
    and
    \begin{equation}
        \label{ineq:comp_upb_cor_init}
        \begin{aligned}
            \E\left[\sum_{w=1}^{W} \beta_{t_w}(\hat{\rvy}_{t_{w-1}})\cdot (t_w - t_{w-1})\right] \le & \underbrace{Kd\cdot \sum_{w=1}^W e^{-(T-t_{w})}\cdot (t_{w}-t_{w-1})\cdot \frac{1-r_0\cdot e^{-(T-t_{w-1})}}{1-e^{-(T-t_w)}}}_{\text{Term 1}}\\
            & \quad + \underbrace{2Kd\epsilon\cdot\sum_{w=1}^W \left(e^{T-t_w} - 1\right)^{-1}\cdot (t_w - t_{w-1})}_{\text{Term 2}}
        \end{aligned}     
    \end{equation}
    Here the second term can be upper bounded as 
    \begin{equation*}
            \text{Term 2}  \le 12\cdot Kd\ln d
    \end{equation*}
    by choosing the mixing time $T$ and early stopping time $\delta$ as Theorem~\ref{thm:convergence_unif_reverse}, which follows from Eq.~\eqref{ineq:term2_upb_mid}.
    
    For Term 1 of Eq.~\eqref{ineq:comp_upb_cor_init}, we will discuss it in categories.
    Suppose the expected number of mask token satisfies
    \begin{equation*}
        \E[\numMask{\rvy^\to_0}] \le C_0 \cdot\epsilon\quad \Leftrightarrow\quad r_0\ge 1-C_0 \cdot\epsilon/d,
    \end{equation*}
    and the segments share the same length $\eta$, i.e.,
    \begin{equation*}
        t_{w} - t_{w-1} = \eta \quad \text{where} \quad w\in\{1,2,\ldots W\},\quad W = (T-\delta)/\eta,\quad  \text{and}\quad \eta = \epsilon/2d, 
    \end{equation*}
    following from Eq.~\eqref{ineq:domi_fac_upb}, we have
    \begin{equation*}
        \eta\le \frac{\delta}{2} \Rightarrow \quad \frac{1-e^{-(T-t_{w-1})}}{1-e^{-(T-t_w)}}\le 2.
    \end{equation*}
    Combining with the following fact, i.e.,
    \begin{equation*}
        \begin{aligned}
            &\frac{(1-r_0)\cdot e^{-(T-t_{w-1})}}{1-e^{-(T-t_w)}} = \frac{1-r_0}{e^{(T-t_{w-1})}-e^{\eta}} = (1-r_0)\cdot e^{-\eta}\cdot \left(e^{T-t_{w-1}-\eta} - 1\right)^{-1}\\
            & \le (1-r_0)\cdot \delta^{-1} = C_0,
        \end{aligned}
    \end{equation*}
    Eq.~\eqref{eq:exp_iter_num_fin} demonstrates that
    \begin{equation*}
        \begin{aligned}
            \text{Term 1} \le (2+C_0)\cdot Kd\cdot\left(1-\frac{\epsilon^2}{4d}\right).
        \end{aligned}
    \end{equation*}
    On the other hand, we have
    \begin{equation*}
        \begin{aligned}
            &\text{Term 1} \le Kd\cdot \sum_{w=1}^W \frac{\eta}{e^{T-t_w}-1} \le Kd\sum_{w=1}^W \frac{\eta}{T-t_w} \le 1.5Kd\sum_{w=1}^W \frac{\eta}{T-t_{w-1}}\\
            & \lesssim 1.5Kd\cdot\int_{\delta}^1 t^{-1}\der t\le 1.5Kd\ln(1/\delta) = 1.5Kd\ln(d/\epsilon),
        \end{aligned}
    \end{equation*}
    where the forth inequality follows from the choice of $\eta$, i.e.,
    \begin{equation*}
        \eta\le \delta/2 \quad \Rightarrow\quad (T-t_{w-1}) - (T-t_w) =\eta \le \frac{\delta}{2}\le \frac{T-t_w}{2}.
    \end{equation*}
    Hence, the total complexity will be
    \begin{equation*}
        \min\left\{O(Kd\ln(d/\epsilon)), O\left(Kd\cdot \frac{\E[\numMask{\rvy^\to_0}]}{\epsilon}\right)\right\} + O(Kd\ln d).
    \end{equation*}
    Hence, the proof is completed.
\end{proof}

\begin{corollary}
    \label{cor:convergence_unif_reverse}
    Suppose Assumption~\ref{ass:score_approximation_error} and~\ref{ass:mask_init} hold, if 
    Alg.~\ref{alg:uni_inf} has 
    \begin{equation*}
        t_0 = 0,\quad t_W= T-\delta,\quad \text{and}\quad \epsilon_{\text{score}}\le T^{-1/2}\cdot \epsilon \quad \text{where}\quad T = \ln(4d/\epsilon^2) \quad \text{and}\quad \delta\le d^{-1}\epsilon,
    \end{equation*} 
    and draw initial $\overline{\rvy}_0\sim \delta_{[\idxK,\ldots, \idxK]}(\cdot)$,
    the TV distance between the target discrete distribution $q_*$ and the underlying distribution of the output particle $\overline{q}_{T-\delta}$ will satisfy $\TVD{q_*}{\overline{q}_{T-\delta}}\le 2.5\epsilon$.
\end{corollary}
\begin{proof}
    We consider a stochastic process $\{\overline{\rvy}_t\}_{t=0}^{T-\delta}$ which satisfies $\overline{\rvy}_t\sim \overline{q}_t$. The initial distribution is $\overline{\rvy}_0 \sim \overline{q}_0 = \delta_{[\idxK,\idxK,\ldots, \idxK]}(\vy)$.
    Suppose the joint and conditional distribution are
    \begin{equation*}
        (\overline{\rvy}_{t^\prime}, \overline{\rvy}_t) \sim \overline{q}_{t^\prime, t}\quad \text{and}\quad \overline{q}_{t|t^\prime}(\overline{\vy}_{t}||\overline{\vy}_{t^\prime}) = \overline{q}_{t,t^\prime}(\overline{\vy}_t, \overline{\vy}_{t^\prime}) / \overline{q}_{t^\prime}(\overline{\vy}_{t^\prime})\quad \text{where}\quad t>t^\prime.
    \end{equation*}
    Specifically, we suppose the random variables $\{\overline{\rvy}_t\}_{t=0}^{T-\delta}$ share the same transition as that in $\{\hat{\rvy}_t\}_{t=0}^{T-\delta}$ shown in Theorem~\ref{thm:convergence_unif_reverse}, which means $\overline{q}_{t|t^\prime} = \hat{q}_{t|t^\prime}$ for any $t>t^\prime$, which implies $\{\overline{\rvy}_t\}_{t=0}^{T-\delta}$ can be implemented by
    \begin{enumerate}
        \item Initialize the particles as $\overline{\rvy}_0 \sim \overline{q}_0 = \delta_{[\idxK,\idxK,\ldots, \idxK]}(\vy)$
        \item Update $\{\overline{\rvy}_t\}_{t>0}^{T-\delta}$ with Alg.~\ref{alg:uni_inf}
    \end{enumerate}
    Then, due to the chain rule of TV distance, i.e., Lemma~\ref{lem:tv_chain_rule}, we have
    \begin{equation}
        \label{ineq:tv_gap_hat_overline}
        \begin{aligned}
            & \TVD{\hat{q}_{T-\delta}}{\overline{q}_{T-\delta}} \le \TVD{\hat{q}_{T-\delta, 0}}{\overline{q}_{T-\delta, 0}}\\
            & \le \TVD{\hat{q}_0}{\overline{q}_0} + \E_{\hat{\rvy}_0\sim \hat{q}_0}\left[\TVD{\hat{q}_{T-\delta|0}}{\overline{q}_{T-\delta|0}}\right] = \TVD{\hat{q}_0}{\overline{q}_0}.
        \end{aligned}
    \end{equation}
    Since $\overline{q}_0$ is the mask token dirac measure, we have
    \begin{equation*}
        \TVD{\hat{q}_0}{\overline{q}_0} = 1-\hat{q}_0([\idxK, \idxK,\ldots, \idxK]).
    \end{equation*}
    According to the proof of Lemma~\ref{lem:fwd_convergence_0}, we can easily find that
    \begin{equation*}
        \hat{q}_0([\idxK,\ldots, \idxK]) = \frac{1}{(1+e^{-T})^d}.
    \end{equation*}
    By requiring $T\ge \ln(4d/\epsilon)$ and $\epsilon\le 1$, we have
    \begin{equation*}
        \begin{aligned}
            & T\ge \ln(4d/\epsilon) \quad \Rightarrow t\ge \ln(d/\ln(1+\epsilon/2))\quad \Leftrightarrow \quad d\cdot e^{-T}\le \ln(1+\epsilon/2)\\
            & \Rightarrow d\ln(1+e^{-T})\le \ln(1+\epsilon/2)\quad \Leftrightarrow \quad (1+e^{-T})^d-1 \le \epsilon/2.
        \end{aligned}
    \end{equation*}
    That means
    \begin{equation*}
        \TVD{\hat{q}_0}{\overline{q}_0} = 1- \frac{1}{(1+e^{-T})^d} \le 1- \frac{1}{1+\epsilon/2} \le \epsilon/2.
    \end{equation*}
    Plugging this inequality into Eq.~\ref{ineq:tv_gap_hat_overline}, we have
    $\TVD{\hat{q}_{T-\delta}}{\overline{q}_{T-\delta}}  \le \epsilon/2$.
    Then combining it with Theorem~\ref{thm:convergence_unif_reverse}, i.e., $\TVD{q_*, \hat{q}_{T-\delta}}\le 2\epsilon$, we have $\TVD{q_*, \overline{q}_{T-\delta}}\le 2.5\epsilon$.
    Hence, the proof is completed.
\end{proof}

\section{The Convergence under Time-Independent Score Parameterization}

In the following, we will prove that the distribution generated by first hitting sampling~\citep{zheng2024masked} approaches to the target data distribution $p_*$ in TV distance.
The core step is to introduce our AATU as the reference probability path. 

We starts from some additional notations. Specifically, suppose following two elements $\vy,\hat{\vy}\in \gY = \{1,2,\ldots, \idxK\}^d$ satisfying 
\begin{equation*}
    \vy = [\vy_1,\ldots, \vy_i,\ldots, \vy_d] \qquad  \hat{\vy} = [\vy_1, \ldots, \hat{\vy}_i, \ldots,  \vy_d],
\end{equation*}
which means the Hamming distance between $\vy$ and $\hat{\vy}$ is $1$ and they are only different at $i$--th coordinate. 
Suppose $\vy_i = \idxK$ and $\hat{\vy}_i\not=\idxK$, then we can define the conditional distribution at specific coordinate, e.g., $i$, given unmask tokens $\vy_{\gK^c(\vy)}$
\begin{equation*}
    q_{0,i}(\hat{\vy}_i|\vy_{\gK^c(\vy)}) = \frac{\sum_{\tilde{\vy}\in \gY^+, \tilde{\vy}_{\gK^c(\hat{\vy})} = \hat{\vy}_{\gK^c(\hat{\vy})}} q_0(\tilde{\vy})}{\sum_{\tilde{\vy}\in \gY^+, \tilde{\vy}_{\gK^c(\vy)} = \vy_{\gK^c(\vy)}} q_0(\tilde{\vy})}
\end{equation*}

\paragraph{Bridge the discrete score estimation error and the pretrained masked diffusion models in Alg~\ref{alg:dlm_imple}.} We need to note that the output of pretrained masked diffusion model satisfies
\begin{equation*}
    p_{\theta,i}(\cdot|\vy, \tilde{\tau}_{n+1}) =  p_{\theta,i}(\cdot|\vy) \approx q_{0,i}(\hat{\vy}_i|\vy_{\gK^c(\vy)}),
\end{equation*}
where first equation comes from the time-independent parameterization, and the second approximation comes from the training objective, i.e.,
\begin{equation}
    \label{eq:fhs_training}
    \gL_{\vw}^d(\vy_0) = \sum_{i=1}^d \vw_i \cdot \E_{\mathbb{P}_{\rvy}[\numMask{\rvy}=i|\vy_0]} \left[\sum_{\rvy_i=\idxK}-\log p_{\theta,i}(\vy_{0,i}|\rvy)\right].
\end{equation}
With proper settings on $\vw$ and the change of summation order, the above training loss of Alg.~\ref{alg:dlm_imple} will be equivalent to the $\lambda$--DCE loss shown in~\citet{ou2024your}, i.e.,
\begin{equation*}
    \gL_{\lambda-\mathrm{DCE}}(\vy_0) = \E_{\lambda\sim \mathrm{Uniform}(0,1)}\frac{1}{\lambda}\cdot \E_{\rvy_\lambda\sim q^\to_{\lambda|0}(\cdot|\vy_0)}\left[\sum_{\vy_{\lambda,i}=\idxK}-\log p_{\theta}(\vy_{0,i}|\rvy)\right].
\end{equation*}
Then, following from Appendix C.1 and Appendix C.2 in~\cite{ou2024your}, by choosing $\lambda(t) = 1-e^{-t}$, with change of variable, the $\lambda$--DCE loss will be equivalent to the denoising score entropy loss, i.e.,
\begin{equation*}
    \begin{aligned}
        \gL_{\mathrm{DSE}}(\vy_0) = & \int_0^T \E_{\rvy_t\sim q^\to_{t|0}(\cdot|\vy_0)}\left[\sum_{\vy^\prime\not= \rvy_t}R^\to(\rvy_t,\vy^\prime)\cdot \left(\frac{e^{-t}}{1-e^{-t}}\cdot p_\theta(\vy^\prime_{\DfId{\vy^\prime}{\rvy_t}}|\rvy_t) \right.\right.\\
        &\quad \left.\left. - \frac{e^{-t}}{1-e^{-t}}\cdot  \delta_{\vy_{0,\DfId{\rvy_t}{\vy^\prime}}}(\vy^\prime_{\DfId{\rvy_t}{\vy^\prime}})\cdot \log \left(\frac{e^{-t}}{1-e^{-t}}\cdot p_\theta(\vy^\prime_{\DfId{\vy^\prime}{\rvy_t}}|\rvy_t)\right)\right) \right] \der t.
    \end{aligned}
\end{equation*}
Following from Theorem 3.4 in~\citet{lou2024discrete}, we note that DSE and SE share the same minimum, i.e.,
\begin{equation}
    \label{eq:fms_training_equ}
    \begin{aligned}
        & \argmin_{\theta}\  \E_{\rvy_0\sim q_*}[\gL_{\mathrm{DSE}}(\rvy_0)]\\
        & = \argmin_{\theta}\ \int_0^T \E_{\rvy_t\sim q^\to_t}\left[\sum_{\vy^\prime\not= \rvy_t}R^\to(\rvy_t,\vy^\prime)\cdot \left(\frac{e^{-t}}{1-e^{-t}}\cdot p_\theta(\vy^\prime_{\DfId{\vy^\prime}{\rvy_t}}|\rvy_t) \right.\right.\\
        &\qquad \qquad \qquad \left.\left. \frac{q^\to_t(\vy^\prime)}{q_t^\to(\rvy_t)}\cdot \log \left(\frac{e^{-t}}{1-e^{-t}}\cdot p_\theta(\vy^\prime_{\DfId{\vy^\prime}{\rvy_t}}|\rvy_t)\right)  \right)\right] \coloneqq \argmin_\theta\ \gL_{\mathrm{SE}}(\theta)
    \end{aligned}
\end{equation}
By supposing 
\begin{equation*}
    \tilde{v}_{t,\vy_t}(\vy^\prime) \coloneqq  \frac{e^{-t}}{1-e^{-t}}\cdot p_\theta(\vy^\prime_{\DfId{\vy^\prime}{\vy_t}}|\vy_t) \quad \text{where}\quad \mathrm{Ham}(\vy^\prime,\vy_t)=1\quad \text{and}\quad \vy^\prime_{\DfId{\vy^\prime}{\vy_t}}\not=\idxK,
\end{equation*}
we know the Eq.~\ref{eq:fms_training_equ} exactly matches Eq.~\ref{eq:score_estimation_main}.

Therefore, optimizing Eq.~\ref{eq:fhs_training} in Alg.~\ref{alg:dlm_imple} is equivalent to parameterizing the discrete score as
\begin{equation*}
    \begin{aligned}
        &\frac{q^\to_t(\vy^\prime)}{q^\to_t(\vy_t)} = \frac{e^{-t}}{1-e^{-t}}\cdot q_{0,\DfId{\vy^\prime}{\vy_t}}(\vy^\prime_{\DfId{\vy^\prime}{\vy_t}}|\vy_{t,\gK(\vy_t)}) = v_{t,\vy_t}(\vy^\prime)\\
        & \approx \tilde{v}_{t,\vy_t}(\vy^\prime) \coloneqq  \frac{e^{-t}}{1-e^{-t}}\cdot p_\theta(\vy^\prime_{\DfId{\vy^\prime}{\vy_t}}|\vy_t),
    \end{aligned}
\end{equation*}
and optimize Eq.~\ref{eq:score_estimation_main}.
Following the analysis paradigm in this paper, we assume Assumption~\ref{ass:score_approximation_error} is also satisfies for this parametrization.

\subsection{The Proof of Theorem~\ref{thm:convergence_fhs_reverse}}
\label{app_sec: convergence_fhs_reverse}
\begin{proof}
    Under this time-independent parameterization, we suppose the trajectory of AATU as $\{\hat{\rvy}_t\}_{t=0}^T$ whose underlying distribution is denoted as $\hat{\rvy}_t \sim \hat{q}_t$.
For Alg.~\ref{alg:dlm_imple}, we consider a sequence of random variables $\{\overline{\rvy}_k\}_{k\in\{0,1,\ldots, d\}}$ where $\overline{\rvy}_k$ denotes the random variables after $(d-k)$-step update of Alg.~\ref{alg:dlm_imple}. We have $\numMask{\overline{\rvy}_k} = k$.
To investigate the TV distance between $\hat{\rvy}_{T-\delta}$ and $\overline{\rvy}_0$, we have
\begin{equation}
    \label{ineq:fhs_tv_init}
    \begin{aligned}
        &\TVD{\hat{q}_{T-\delta}}{\overline{q}_0} = \frac{1}{2}\cdot  \sum_{\vy, \numMask{\vy} = 0} \left| \overline{q}_0(\vy) - \hat{q}_{T-\delta}(\vy)  \right| + \frac{1}{2}\cdot \sum_{\vy, \numMask{\vy}\not=0} \hat{q}_{T-\delta}(\vy)\\
        & = \frac{1}{2}\cdot \sum_{\vy, \numMask{\vy} = 0} \left|\overline{q}_0(\vy) - \hat{q}_{T-\delta}(\vy)  \right| +  \overline{q}_0(\vy) - \hat{q}_{T-\delta}(\vy) \le \sum_{\vy,\numMask{\vy}=0} \left|\hat{q}_{T-\delta}(\vy) - \overline{q}_0(\vy)\right|
    \end{aligned}
\end{equation}
Currently, we define a distribution sequence
\begin{equation*}
    \{p_k\}_{k\in\{0,1,\ldots, d\}}\quad \text{where}\quad p_k(t) = \mathrm{Pr}\left[\text{the $k$-th transition happens at time $t$}\right].
\end{equation*}
Besides, suppose that at the transition time $t$ the particle is $\vy^\prime$, AATU implies the transition from $\vy^\prime$ to $\vy$ follows 
\begin{equation*}
    \mathrm{Pr}[\vy|\text{transition time}=t \text{ and particle is }\vy^\prime] = \hat{R}_t(\vy,\vy^\prime) / \hat{R}_t(\vy^\prime)
\end{equation*}

Under this setting, we have
\begin{equation}
    \label{ineq:matu_time_independent_para_calc}
    \begin{aligned}
        \hat{q}_{T-\delta}(\vy) & = \int_0^{T-\delta} p_{\numMask{\vy}}(t)\cdot \sum_{\vy^\prime, \numMask{\vy^\prime}=\numMask{\vy}-1} \hat{q}_{t}(\vy^\prime)\cdot \mathrm{Pr}[\vy|\text{transition time}=t \text{ and particle is }\vy^\prime] \der t\\
        & = \int_0^{T-\delta} p_{\numMask{\vy}}(t)\cdot \sum_{\vy^\prime, \numMask{\vy^\prime}=\numMask{\vy}-1} \hat{q}_{t}(\vy^\prime)\cdot \frac{\hat{R}_t(\vy,\vy^\prime)}{\hat{R}_t(\vy^\prime)}\der t\\
        & = \int_0^{T-\delta} p_{\numMask{\vy}}(t)\cdot \sum_{\vy^\prime, \numMask{\vy^\prime}=\numMask{\vy}-1} \hat{q}_{t}(\vy^\prime)\cdot \frac{\tilde{R}_t(\vy,\vy^\prime)}{\tilde{R}_t(\vy^\prime)}\der t
    \end{aligned}
\end{equation}
Due to the time-independent paramterization of the discrete score, we have
\begin{equation*}
    \tilde{R}_t(\vy,\vy^\prime) = R^\to(\vy^\prime,\vy) \cdot \tilde{v}_{t,\vy^\prime}(\vy) = R^\to(\vy^\prime,\vy)\cdot \frac{e^{-t}}{1-e^{-t}}\cdot p_\theta(\vy_{\DfId{\vy}{\vy^\prime}}|\vy^\prime),
\end{equation*}
which implies it has
\begin{equation*}
    \frac{\tilde{R}_t(\vy,\vy^\prime)}{\tilde{R}_t(\vy^\prime)} = \frac{p_\theta(\vy_{\DfId{\vy}{\vy^\prime}}|\vy^\prime)}{\sum_{\vy\not=\vy^\prime, \mathrm{Ham}(\vy,\vy^\prime)=1} p_\theta(\vy_{\DfId{\vy}{\vy^\prime}}|\vy^\prime)} = p_\theta(\vy_{\DfId{\vy}{\vy^\prime}}|\vy^\prime).
\end{equation*}
Plugging this equation into Eq.~\ref{ineq:matu_time_independent_para_calc}, we have
\begin{equation*}
    \begin{aligned}
        & \hat{q}_{T-\delta}(\vy) =  \sum_{\vy^\prime, \numMask{\vy^\prime}=\numMask{\vy}-1} p_\theta(\vy_{\DfId{\vy}{\vy^\prime}}|\vy^\prime)\int_0^{T-\delta} p_{\numMask{\vy}}(t)\cdot  \hat{q}_{t}(\vy^\prime) \der t\\
        = & \sum_{\vy^\prime, \numMask{\vy^\prime}=\numMask{\vy}-1} p_\theta(\vy_{\DfId{\vy}{\vy^\prime}}|\vy^\prime) \cdot  \sum_{\vy^{\prime\prime}, \numMask{\vy^{\prime\prime}} = \numMask{\vy^\prime} -1} p_\theta(\vy^\prime_{\DfId{\vy^{\prime}}{\vy^{\prime\prime}}}|\vy^{\prime\prime})\\
        &\cdot \ldots \cdot  \int_{t_{\numMask{\vy}},t_{\numMask{\vy}-1},\ldots, t_1} p_{\numMask{\vy}, \numMask{\vy},\ldots, 1}(t_{\numMask{\vy}-1},\ldots, t_1)\cdot \hat{q}_{t_1}([\idxK,\ldots,\idxK]) \der t_1.
    \end{aligned}
\end{equation*}
Since each absorbing state is denoised exactly once, a particle that undergoes $d$ denoising steps during the reverse process is guaranteed to result in a fully non-absorbing sample at the end of inference.
Therefore, we have
\begin{equation*}
    \begin{aligned}
        &\int_{t_{\numMask{\vy}},t_{\numMask{\vy}-1},\ldots, t_1} p_{\numMask{\vy}, \numMask{\vy},\ldots, 1}(t_{\numMask{\vy}-1},\ldots, t_1)\cdot \hat{q}_{t_1}([\idxK,\ldots,\idxK]) \der t_1\\
        &= \sum_{\vy^\prime, \numMask{\vy^\prime}=0} \hat{q}_{T-\delta}(\vy^\prime).
    \end{aligned}
\end{equation*}
Due to the TV convergence of $\hat{q}_{T-\delta}$ shown in Theorem~\ref{thm:convergence_unif_reverse}, we have $\TVD{q_*}{\hat{q}_{T-\delta}}\le 2\epsilon$, which implies
\begin{equation}
    \label{ineq:sum_non_absorbing_bound}
    \begin{aligned}
        & \TVD{q_*}{\hat{q}_{T-\delta}} = \frac{1}{2}\sum_{\vy^\prime, \numMask{\vy^\prime}=0} (q_*(\vy^\prime) - \hat{q}_{T-\delta}(\vy^\prime)) + \sum_{\vy^\prime, \numMask{\vy^\prime}\not=0} \hat{q}_{T-\delta}(\vy^\prime)\\
        & = \sum_{\vy^\prime, \numMask{\vy^\prime}=0} (q_*(\vy^\prime) - \hat{q}_{T-\delta}(\vy^\prime)) = 1- \sum_{\vy^\prime, \numMask{\vy^\prime}=0} \hat{q}_{T-\delta}(\vy^\prime) \le 2\epsilon.
    \end{aligned}
\end{equation}
All of the above three equations come from Assumption~\ref{ass:mask_init}.

According to the update of Alg.~\ref{alg:dlm_imple}, we can easily find that 
\begin{equation*}
    \begin{aligned}
        \overline{q}_0(\vy) = & \sum_{\vy^\prime, \numMask{\vy^\prime}=\numMask{\vy}-1} p_\theta(\vy_{\DfId{\vy}{\vy^\prime}}|\vy^\prime) \cdot  \sum_{\vy^{\prime\prime}, \numMask{\vy^{\prime\prime}} = \numMask{\vy^\prime} -1} p_\theta(\vy^\prime_{\DfId{\vy^{\prime}}{\vy^{\prime\prime}}}|\vy^{\prime\prime})\\
        & \cdot \ldots\cdot \sum_{\vy^{(1)}, \numMask{\vy^{(1)}} = 1} p_\theta(\vy^{(1)}_{\DfId{\vy^{(1)}}{[\idxK,\ldots,\idxK]}}|[\idxK,\ldots,\idxK])  \underbrace{\overline{q}_d([\idxK,\ldots,\idxK])}_{=1}.
    \end{aligned}
\end{equation*}
Suppose the conditional distribution as
\begin{equation*}
    \begin{aligned}
        \overline{p}_{\theta}(\vy|[\idxK,\ldots,\idxK]) = & \sum_{\vy^\prime, \numMask{\vy^\prime}=\numMask{\vy}-1} p_\theta(\vy_{\DfId{\vy}{\vy^\prime}}|\vy^\prime) \cdot  \sum_{\vy^{\prime\prime}, \numMask{\vy^{\prime\prime}} = \numMask{\vy^\prime} -1} p_\theta(\vy^\prime_{\DfId{\vy^{\prime}}{\vy^{\prime\prime}}}|\vy^{\prime\prime})\\
        & \cdot \ldots\cdot \sum_{\vy^{(1)}, \numMask{\vy^{(1)}} = 1} p_\theta(\vy^{(1)}_{\DfId{\vy^{(1)}}{[\idxK,\ldots,\idxK]}}|[\idxK,\ldots,\idxK]),
    \end{aligned}
\end{equation*}
then we have
\begin{equation}
    \label{ineq:fhs_from_end_to_start}
    \begin{aligned}
        & \overline{q}_0(\vy) - \hat{q}_{T-\delta}(\vy)  \le \overline{p}_{\theta}(\vy|[\idxK,\ldots,\idxK])\cdot \left(\overline{q}_d([\idxK,\ldots,\idxK]) - \hat{q}_0([\idxK,\ldots,\idxK])\right)\\
        & = \overline{p}_{\theta}(\vy|[\idxK,\ldots,\idxK])\cdot \left(1-\sum_{\vy^\prime, \numMask{\vy^\prime}=0} \hat{q}_{T-\delta}(\vy^\prime)\right)
    \end{aligned}
\end{equation}
Combining Eq.~\ref{ineq:fhs_tv_init}, Eq.~\ref{ineq:fhs_from_end_to_start} and Eq.~\ref{ineq:sum_non_absorbing_bound}, we have
\begin{equation*}
    \TVD{\hat{q}_{T-\delta}}{\overline{q}_0} \le 2\epsilon\quad \text{and}\quad \TVD{q_*}{\overline{q}_0} \le \TVD{\hat{q}_{T-\delta}}{\overline{q}_0} + \TVD{q_*}{\hat{q}_{T-\delta}}\le 4\epsilon
\end{equation*}
where last inequality follows from Theorem~\ref{thm:convergence_unif_reverse}.
Hence, the proof is completed.
\end{proof}

\section{Technical Lemmas}

\begin{lemma}[Basic Kronecker product]
    \label{lem:basic_prod_K_prod}
    Supppose the Kronecker product for $n$ matrices defined on $\R^{d\times d}$, i.e.,
    \begin{equation*}
        \overline{\mA}\coloneqq \mA_1 \otimes \mA_2 \otimes \ldots \otimes \mA_n,
    \end{equation*}
    then we have
    \begin{equation*}
        \overline{\mA}_{[a_{1,i}, a_{2,i},\ldots, a_{n,i}],[a_{1,j}, a_{2,j},\ldots, a_{n,j}]} \coloneqq \overline{\mA}_{\sum_{k=1}^n a_{k,i}\cdot d^{n-k}, \sum_{k=1}^n a_{k,j}\cdot d^{n-k}} = \prod_{k=1}^n [\mA_{k}]_{a_{k,i}, a_{k,j}}.
    \end{equation*}
\end{lemma}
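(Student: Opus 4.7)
The statement is the standard multi-index formula for an iterated Kronecker product, so the plan is a straightforward induction on $n$, with the bulk of the work being careful bookkeeping of the mixed-radix (base-$d$) index encoding $[a_{1},\dots,a_{n}] \mapsto \sum_{k=1}^{n} a_{k}\,d^{\,n-k}$.

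For the base case $n=1$, the claim is trivial: $\overline{\mA}=\mA_1$, and the index formula collapses to $\overline{\mA}_{a_{1,i},a_{1,j}}=[\mA_1]_{a_{1,i},a_{1,j}}$. For $n=2$, I would invoke the defining formula for the Kronecker product of two $d\times d$ matrices, namely $(\mA_1\otimes\mA_2)_{a_{1,i}\cdot d+a_{2,i},\,a_{1,j}\cdot d+a_{2,j}} = [\mA_1]_{a_{1,i},a_{1,j}}\,[\mA_2]_{a_{2,i},a_{2,j}}$, which is exactly the claim specialized to two factors. (Depending on whether the convention treats indices as $0$-based or $1$-based, one may need to replace $a$ by $a-1$ in the encoding, but the structural identity is unchanged.)

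For the inductive step, assuming the result for $n-1$ factors, I would use the associativity of the Kronecker product to write $\mA_1\otimes\mA_2\otimes\cdots\otimes\mA_n = \mA_1\otimes\mB$, where $\mB\coloneqq \mA_2\otimes\cdots\otimes\mA_n \in \mathbb{R}^{d^{n-1}\times d^{n-1}}$. Setting $\alpha_i\coloneqq \sum_{k=2}^{n} a_{k,i}\,d^{\,n-k}$ (and similarly $\alpha_j$), the two-factor definition gives
\[
(\mA_1\otimes\mB)_{a_{1,i}\cdot d^{n-1}+\alpha_i,\ a_{1,j}\cdot d^{n-1}+\alpha_j}
= [\mA_1]_{a_{1,i},a_{1,j}}\cdot \mB_{\alpha_i,\alpha_j}.
\]
By the inductive hypothesis applied to $\mB$ (whose block size is $d^{n-1}$), $\mB_{\alpha_i,\alpha_j}=\prod_{k=2}^{n}[\mA_k]_{a_{k,i},a_{k,j}}$. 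Finally, the index on the left equals $\sum_{k=1}^{n} a_{k,i}\,d^{\,n-k}$, because $a_{1,i}\cdot d^{n-1}+\alpha_i=a_{1,i}\cdot d^{n-1}+\sum_{k=2}^{n}a_{k,i}\,d^{\,n-k}$; combining the two factors yields $\prod_{k=1}^{n}[\mA_k]_{a_{k,i},a_{k,j}}$, completing the induction.

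There is no real obstacle here beyond keeping the mixed-radix encoding consistent; the only subtle point is to state the indexing convention (zero-based vs.\ one-based) explicitly and to justify the re-association $\mA_1\otimes(\mA_2\otimes\cdots\otimes\mA_n)$ via the bilinearity/associativity of $\otimes$. Since the paper already uses this identity in Lemma~\ref{lem:fwd_trans_ker} and in the proof of Lemma~\ref{lem:fwd_convergence_0} via $\exp(t\mA)^{\otimes d}$, a short inductive proof as above suffices.
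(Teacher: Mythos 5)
Your proof is correct and is essentially the paper's argument made explicit: the paper simply asserts the identity ``by the definition of the Kronecker product,'' and your induction via $\overline{\mA}=\mA_1\otimes(\mA_2\otimes\cdots\otimes\mA_n)$ together with the two-factor block formula is exactly the bookkeeping that assertion leaves implicit. Your remark about the zero-based versus one-based indexing convention is also apt, since the stated encoding $\sum_k a_{k}\,d^{\,n-k}$ is only exact for zero-based indices, a detail the paper glosses over.
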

\begin{proof}
    This lemma can easily be proved by the definition of Kronecker product.
\end{proof}

\begin{lemma}[Mixed-product property of Kronecker product]
    \label{lem:mix_prod_K_prod}
    Suppose the matrices $\mA, \mB, \mC,\mD \in \R^{d\times d}$, then, the products $\mA\mC$ and $\mB\mD$ are well-defined. We have
    \begin{equation*}
        (\mA\otimes \mB)(\mC\otimes \mD) = (\mA\mC)\otimes (\mB\mD).
    \end{equation*}
\end{lemma}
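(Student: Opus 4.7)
The plan is to verify the identity entrywise using the block-index representation of the Kronecker product. Recall that if we label the rows and columns of a $d^2 \times d^2$ matrix by pairs $(i_1,i_2) \in \{1,\dots,d\}^2$ via the bijection $(i_1,i_2) \mapsto (i_1-1)d + i_2$, then the defining property of $\otimes$ gives $(\mA \otimes \mB)_{(i_1,i_2),(j_1,j_2)} = A_{i_1,j_1} B_{i_2,j_2}$. This index convention is precisely the one already used in Lemma~\ref{lem:basic_prod_K_prod}, so I can freely invoke it here.

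With this in hand, I would compute the $((i_1,i_2),(k_1,k_2))$ entry of the left-hand side by expanding matrix multiplication:
\begin{equation*}
    \bigl[(\mA\otimes \mB)(\mC\otimes \mD)\bigr]_{(i_1,i_2),(k_1,k_2)}
    = \sum_{j_1,j_2} A_{i_1,j_1} B_{i_2,j_2}\, C_{j_1,k_1} D_{j_2,k_2}.
\end{equation*}
The key step is then to observe that the summand factorizes across the two index pairs, so the double sum splits as a product of two independent single sums:
\begin{equation*}
    \Bigl(\sum_{j_1} A_{i_1,j_1} C_{j_1,k_1}\Bigr)\Bigl(\sum_{j_2} B_{i_2,j_2} D_{j_2,k_2}\Bigr)
    = (\mA\mC)_{i_1,k_1}\,(\mB\mD)_{i_2,k_2},
\end{equation*}
which by the same indexing convention is exactly the $((i_1,i_2),(k_1,k_2))$ entry of $(\mA\mC)\otimes(\mB\mD)$. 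Since the two sides agree on every entry, the matrices are equal.

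There is essentially no serious obstacle here; the result is a classical identity whose proof reduces to a direct index calculation once the Kronecker indexing convention is fixed. The only care required is to make the factorization of the double sum explicit, and to confirm that the conformability hypotheses (all matrices $d \times d$) ensure that both $\mA\mC$ and $\mB\mD$, as well as the product $(\mA\otimes\mB)(\mC\otimes\mD)$ of two $d^2 \times d^2$ matrices, are well-defined. I would write the proof as a short three-line calculation, with a brief opening sentence recalling the indexing bijection so that the reader can line up the result with its invocation in Lemma~\ref{lem:exp_R_decomposition}.
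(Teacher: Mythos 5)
Your proof is correct and follows essentially the same route as the paper's: both expand the product directly and use the factorization of the resulting sum, $\sum_{j} A_{i_1,j}C_{j,k_1}$ separating from the $\mB,\mD$ part, to identify the result with $(\mA\mC)\otimes(\mB\mD)$. The only cosmetic difference is that you work entrywise via the pair-index convention of Lemma~\ref{lem:basic_prod_K_prod}, whereas the paper performs the identical computation block-by-block.
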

\begin{proof}
We prove this by examining the product on the left-hand side, \((\mA \otimes \mB)\,(\mC \otimes \mD)\), 
and showing it coincides block-by-block with \((\mA\mC)\otimes(\mB\mD)\).

We starts from the definition of Kronecker products in blocks.
By definition, the Kronecker product \(\mA \otimes \mB\) can be seen as an \((d \times d)\) block matrix
in which the \((i,j)\)-th block is \(a_{ij}\,\mB\). Hence,
\[
  \mA \otimes \mB
  \;=\;
  \begin{pmatrix}
    a_{11} \mB & a_{12} \mB & \cdots & a_{1n} \mB \\[4pt]
    a_{21} \mB & a_{22} \mB & \cdots & a_{2n} \mB \\[4pt]
    \vdots   & \vdots   & \ddots & \vdots   \\[4pt]
    a_{m1} \mB & a_{m2} \mB & \cdots & a_{mn} \mB
  \end{pmatrix}.
\]
Similarly,
\[
  \mC \otimes \mD
  \;=\;
  \begin{pmatrix}
    c_{11} \mD & c_{12} \mD & \cdots & c_{1r} \mD \\[4pt]
    c_{21} \mD & c_{22} \mD & \cdots & c_{2r} \mD \\[4pt]
    \vdots   & \vdots   & \ddots & \vdots   \\[4pt]
    c_{n1} \mD & c_{n2} \mD & \cdots & c_{nr} \mD
  \end{pmatrix}.
\]

Then, we form the Product \((\mA \otimes \mB)(\mC \otimes \mD)\).
When multiplying two block matrices, we sum over the matching inner block dimensions. Specifically,
the \((i,k)\)-block of \((\mA \otimes \mB)\,(\mC \otimes \mD)\) is given by
\begin{equation*}
  \sum_{j=1}^n \Bigl( \, (a_{ij} \mB) \,\bigl(c_{jk} \mD\bigr) \Bigr).
\end{equation*}
Inside each term, we treat \(a_{ij}\,\mB\) and \(c_{jk}\,\mD\) as scalar-matrix products. We can rewrite
the expression as:
\[
  \sum_{j=1}^n \,
    a_{ij}\,c_{jk}\;\Bigl( \mB \mD \Bigr)
  \;=\;
  \Bigl(\sum_{j=1}^n a_{ij}\,c_{jk}\Bigr)
  \; \mB \mD.
\]
Notice that the factor \(\sum_{j=1}^n a_{ij}\,c_{jk}\) is precisely \((\mA\mC)_{ik}\), the \((i,k)\)-th
entry of the matrix product \(\mA\mC\). Thus, each \((i,k)\)-block of \((\mA \otimes \mB)(\mC \otimes \mD)\)
simplifies to
\[
  (\mA\mC)_{ik}\;(\mB\mD).
\]

Now observe that the Kronecker product \((\mA\mC) \otimes (\mB\mD)\) can also be viewed as an
\((m \times r)\) block matrix whose \((i,k)\)-th block is
\[
  (\mA\mC)_{ik}\;(\mB\mD).
\]
Hence, the \((i,k)\)-th block of \((\mA\mC)\otimes (\mB\mD)\) matches exactly with the \((i,k)\)-th block
we computed for \((\mA \otimes \mB)(\mC \otimes \mD)\).
Since these two matrices agree in every block of a \( d^2\times d^2\) partition, we conclude
\[
  (\mA \otimes \mB)\,(\mC \otimes \mD)
  \;=\;
  (\mA\mC)\otimes (\mB\mD),
\]
as desired.
\end{proof}

\begin{lemma}[Kolmogorov backward theorem, adapted from Theorem 5.11 in ~\cite{sarkka2019applied}]
    \label{lem:bkw_kolmo}
    For a specific SDE, if we denote the transition density from $\rvx(s)$ to $\rvy(t)$ as $p(\vy,t|\vx, s)$
    , then it solves the backward Kolmogorov equation
    \begin{equation*}
        -\frac{\partial p(\vy,t|\vx, s)}{\partial s} = \gL p(\vy,t|\vx,s)
    \end{equation*}
    where $\gL$ denotes the infinitesimal operator of the SDE.
\end{lemma}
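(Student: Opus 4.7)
The plan is to exploit the Markov property of the SDE through the Chapman--Kolmogorov relation and then pass to the infinitesimal limit using the generator's defining identity. Fixing the terminal data $(\vy, t)$, I would view $u(\vx, s) \coloneqq p(\vy, t \mid \vx, s)$ as the unknown function of the initial pair $(\vx, s)$ with $s < t$, and decompose it by conditioning on the state at time $s + \Delta s$:
\begin{equation*}
u(\vx, s) \;=\; \int p(\vy, t \mid \vz, s + \Delta s)\, p(\vz, s + \Delta s \mid \vx, s)\, \der \vz \;=\; \E\!\lrb{ u\lrp{\rvx(s + \Delta s),\, s + \Delta s} \,\big|\, \rvx(s) = \vx },
\end{equation*}
which is valid for all sufficiently small $\Delta s > 0$ with $s + \Delta s \le t$.

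Next I would subtract $u(\vx, s + \Delta s)$ from both sides and divide by $\Delta s$, yielding
\begin{equation*}
\frac{u(\vx, s) - u(\vx, s + \Delta s)}{\Delta s} \;=\; \frac{\E\!\lrb{ u(\rvx(s + \Delta s),\, s + \Delta s) \mid \rvx(s) = \vx } - u(\vx, s + \Delta s)}{\Delta s}.
\end{equation*}
Letting $\Delta s \downarrow 0$, the left-hand side converges to $-\partial_s u(\vx, s)$ by definition of the partial derivative. The right-hand side is, by the very definition of the infinitesimal operator recorded in the excerpt (namely $\gL[f](\vx) = \lim_{\tau \to 0} (\gQ^\to_\tau [f](\vx) - f(\vx))/\tau$), precisely $\gL$ applied in the spatial variable to the frozen-in-time function $f(\cdot) \coloneqq u(\cdot, s)$ and evaluated at $\vx$. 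Identifying the two limits produces $-\partial_s u = \gL\, u$, which is the claimed backward equation.

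The main obstacle is the rigorous justification of the passage to the limit, since one must exchange the limit $\Delta s \to 0$ with an expectation and simultaneously replace $u(\cdot, s + \Delta s)$ by $u(\cdot, s)$ inside the generator. The standard sufficient hypothesis is that $u \in C^{2,1}$ on $\{s < t\}$ with moderate growth, in which case It\^o's formula applies to the process $M_r \coloneqq u(\rvx(r), r)$ and gives $\der M_r = (\partial_r u + \gL u)(\rvx(r), r)\, \der r + (\text{local martingale increment})$. Because the tower property shows $M_r$ is a martingale on $[s,t)$, the drift must vanish pointwise, which yields $\partial_s u + \gL u = 0$ cleanly without delicate limit-swaps. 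A minor subtlety worth flagging is that one must keep $s$ strictly less than $t$, as the transition density generically develops a $\delta$-singularity as $s \to t^-$ and the required regularity would otherwise fail at the diagonal.
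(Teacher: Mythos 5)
Your argument is essentially correct, but note that the paper does not prove this lemma at all: it is imported verbatim as a cited result (Theorem~5.11 of S\"arkk\"a and Solin) and used as a black box in the proofs of Lemma~\ref{lem:absorbing_reverse_de}, so there is no internal proof to compare against. On its own merits, your derivation is the standard one: the Chapman--Kolmogorov identity gives $u(\vx,s)=\E[u(\rvx(s+\Delta s),s+\Delta s)\mid \rvx(s)=\vx]$ for $u(\vx,s)=p(\vy,t\mid\vx,s)$, and dividing the incremental identity by $\Delta s$ and letting $\Delta s\downarrow 0$ identifies the left side with $-\partial_s u$ and the right side with $\gL$ acting in the spatial variable. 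You correctly flag the only genuine gap in the naive limit passage, namely that the function fed to the generator is $u(\cdot,s+\Delta s)$ rather than $u(\cdot,s)$, so one needs either joint regularity to justify the exchange or the cleaner route you sketch: since $M_r=u(\rvx(r),r)=p(\vy,t\mid\rvx(r),r)$ is a martingale on $[s,t)$ by the tower property, It\^o's formula under a $C^{2,1}$ hypothesis forces the drift $\partial_r u+\gL u$ to vanish. The caveat about staying strictly before the terminal time is also apt. One further remark worth making: in the setting where the paper actually invokes the lemma (the finite-state CTMC with generator $\gL^\to[f](\vy)=\langle f,R^\to(\cdot,\vy)\rangle_{\gY}$), the state space is finite, the transition kernel is smooth in time, and all sums are finite, so the limit exchange you worry about is immediate and your first, elementary argument already suffices without any It\^o machinery.
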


\begin{lemma}[The chain rule of KL divergence]
    \label{lem:chain_kl}
    Consider four random variables, $\rvx, \rvz, \tilde{\rvx}, \tilde{\rvz}$, whose underlying distributions are denoted as $p_x, p_z, q_x, q_z$.
    Suppose $p_{x,z}$ and $q_{x,z}$ denotes the densities of joint distributions of $(\rvx,\rvz)$ and $(\tilde{\rvx},\tilde{\rvz})$, which we write in terms of the conditionals and marginals as
    \begin{equation*}
        \begin{aligned}
        &p_{x,z}(\vx,\vz) = p_{x|z}(\vx|\vz)\cdot p_z(\vz)=p_{z|x}(\vz|\vx)\cdot p_{x}(\vx)\\
        &q_{x,z}(\vx,\vz)=q_{x|z}(\vx|\vz)\cdot q_z(\vz) = q_{z|x}(\vz|\vx)\cdot q_x(\vx).
        \end{aligned}
    \end{equation*}
    then we have
    \begin{equation*}
        \begin{aligned}
            \KL{p_{x,z}}{q_{x,z}} = & \KL{p_z}{q_z} + \E_{\rvz\sim p_z}\left[\KL{p_{x|z}(\cdot|\rvz)}{q_{x|z}(\cdot|\rvz)}\right]\\
            = & \KL{p_x}{q_x}+\E_{\rvx \sim p_x}\left[\KL{p_{z|x}(\cdot|\rvx)}{q_{z|x}(\cdot|\rvx)}\right]
        \end{aligned}
    \end{equation*}
    where the latter equation implies
    \begin{equation*}
        \KL{p_x}{q_x}\le \KL{p_{x,z}}{q_{x,z}}.
    \end{equation*}
\end{lemma}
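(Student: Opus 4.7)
The plan is to prove the two equalities by direct manipulation of the definition of KL divergence, exploiting the two factorizations of the joint densities supplied in the statement, and then deduce the inequality from non-negativity of the conditional KL. First I would expand
\[
\KL{p_{x,z}}{q_{x,z}} \;=\; \sum_{\vx,\vz}\, p_{x,z}(\vx,\vz)\,\ln\frac{p_{x,z}(\vx,\vz)}{q_{x,z}(\vx,\vz)},
\]
substitute $p_{x,z}(\vx,\vz)=p_z(\vz)\,p_{x\mid z}(\vx\mid \vz)$ and $q_{x,z}(\vx,\vz)=q_z(\vz)\,q_{x\mid z}(\vx\mid \vz)$, and split the logarithm into $\ln\!\bigl(p_z/q_z\bigr) + \ln\!\bigl(p_{x\mid z}/q_{x\mid z}\bigr)$.

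Next I would marginalize. In the first resulting piece, pulling the $\vx$-sum inside uses $\sum_{\vx} p_{x\mid z}(\vx\mid \vz)=1$, so the piece collapses to $\sum_{\vz} p_z(\vz)\ln(p_z(\vz)/q_z(\vz)) = \KL{p_z}{q_z}$. In the second piece, I would group the $\vx$-sum with $p_{x\mid z}(\cdot\mid \vz)$ to recognize it as $\KL{p_{x\mid z}(\cdot\mid \vz)}{q_{x\mid z}(\cdot\mid \vz)}$, leaving an outer expectation against $p_z(\vz)$, which is precisely $\E_{\rvz\sim p_z}\bigl[\KL{p_{x\mid z}(\cdot\mid \rvz)}{q_{x\mid z}(\cdot\mid \rvz)}\bigr]$. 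This establishes the first equality. The second equality is immediate by swapping the roles of $\rvx$ and $\rvz$, using the factorizations $p_{x,z}=p_{z\mid x}p_x$ and $q_{x,z}=q_{z\mid x}q_x$ and re-running the identical argument. Finally, the stated inequality $\KL{p_x}{q_x}\le \KL{p_{x,z}}{q_{x,z}}$ follows at once by dropping the non-negative term $\E_{\rvx\sim p_x}\bigl[\KL{p_{z\mid x}(\cdot\mid \rvx)}{q_{z\mid x}(\cdot\mid \rvx)}\bigr]\ge 0$, where non-negativity of KL is Gibbs' inequality.

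There is no substantive obstacle in this argument; the one bookkeeping point to handle carefully is the convention $0\log 0 = 0$ and the absolute-continuity caveat that $p_{x,z}\ll q_{x,z}$ whenever the KL on the left is finite, which automatically forces $p_z\ll q_z$ and $p_{x\mid z}(\cdot\mid \vz)\ll q_{x\mid z}(\cdot\mid \vz)$ for $p_z$-almost every $\vz$, so both terms on the right are well-defined. For the continuous setting the same proof goes through by replacing sums with integrals against the base measure and invoking Fubini to justify the interchange of integrals; in either case the derivation is an elementary rewriting rather than an analytical argument.
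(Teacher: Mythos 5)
Your proof is correct: expanding the joint KL, splitting the logarithm according to the two factorizations, marginalizing, and then dropping the non-negative conditional term is exactly the standard argument for this decomposition, and your remarks on the $0\ln 0$ convention and absolute continuity cover the only delicate points. The paper itself states this lemma as a standard technical fact without supplying a proof, so there is nothing to contrast with; your derivation is precisely the one that would be expected to fill that role.
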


\begin{lemma}[The chain rule of TV distance]
    \label{lem:tv_chain_rule}
    Consider four random variables, $\rvx, \rvz, \tilde{\rvx}, \tilde{\rvz}$, whose underlying distributions are denoted as $p_x, p_z, q_x, q_z$.
    Suppose $p_{x,z}$ and $q_{x,z}$ denotes the densities of joint distributions of $(\rvx,\rvz)$ and $(\tilde{\rvx},\tilde{\rvz})$, which we write in terms of the conditionals and marginals as
    \begin{equation*}
        \begin{aligned}
        &p_{x,z}(\vx,\vz) = p_{x|z}(\vx|\vz)\cdot p_z(\vz)=p_{z|x}(\vz|\vx)\cdot p_{x}(\vx)\\
        &q_{x,z}(\vx,\vz)=q_{x|z}(\vx|\vz)\cdot q_z(\vz) = q_{z|x}(\vz|\vx)\cdot q_x(\vx).
        \end{aligned}
    \end{equation*}
    then we have
    \begin{equation*}
        \begin{aligned}
            \TVD{p_{x,z}}{q_{x,z}} \le  \min & \left\{ \TVD{p_z}{q_z} + \E_{\rvz\sim p_z}\left[\TVD{p_{x|z}(\cdot|\rvz)}{q_{x|z}(\cdot|\rvz)}\right],\right.\\
            &\quad  \left.\TVD{p_x}{q_x}+\E_{\rvx \sim p_x}\left[\TVD{p_{z|x}(\cdot|\rvx)}{q_{z|x}(\cdot|\rvx)}\right]\right\}.
        \end{aligned}
    \end{equation*}
    Besides, we have
    \begin{equation*}
        \TVD{p_x}{q_x}\le \TVD{p_{x,z}}{q_{x,z}}.
    \end{equation*}
\end{lemma}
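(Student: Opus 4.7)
The plan is to obtain both parts of the lemma by direct manipulation of the TV definition, using the factorizations of the joint densities and the triangle inequality; no probabilistic machinery beyond that is needed. Throughout I write $\TVD{\mu}{\nu}=\tfrac12\sum_{\vx}|\mu(\vx)-\nu(\vx)|$ for discrete measures (and interpret sums as integrals in the continuous case identically).

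For the two-term upper bound, I introduce a carefully chosen intermediate density and telescope. The key is that since the lemma quantifies the conditional TV in expectation under $p_z$ (respectively $p_x$), the intermediate density to insert is $q_{x|z}(\vx|\vz)\,p_z(\vz)$, not $p_{x|z}(\vx|\vz)\,q_z(\vz)$. Concretely, I will write
\begin{equation*}
2\,\TVD{p_{x,z}}{q_{x,z}} \;=\; \sum_{\vx,\vz}\bigl|p_{x|z}(\vx|\vz)p_z(\vz)-q_{x|z}(\vx|\vz)q_z(\vz)\bigr|,
\end{equation*}
add and subtract $q_{x|z}(\vx|\vz)p_z(\vz)$ inside the absolute value, and apply the triangle inequality. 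The first resulting term is $\sum_{\vx,\vz} p_z(\vz)\,|p_{x|z}(\vx|\vz)-q_{x|z}(\vx|\vz)|$; summing first over $\vx$ identifies it as $2\,\E_{\rvz\sim p_z}[\TVD{p_{x|z}(\cdot|\rvz)}{q_{x|z}(\cdot|\rvz)}]$. The second term is $\sum_{\vx,\vz} q_{x|z}(\vx|\vz)\,|p_z(\vz)-q_z(\vz)|$, and because $\sum_{\vx}q_{x|z}(\vx|\vz)=1$ it collapses to $2\,\TVD{p_z}{q_z}$. Dividing by two yields the first branch of the min; the second branch follows verbatim by exchanging the roles of $\rvx$ and $\rvz$ and using the alternative factorization $p_{z|x}p_x$, $q_{z|x}q_x$.

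For the marginal contraction $\TVD{p_x}{q_x}\le \TVD{p_{x,z}}{q_{x,z}}$, I will simply write
\begin{equation*}
2\,\TVD{p_x}{q_x}=\sum_{\vx}\Bigl|\sum_{\vz}\bigl(p_{x,z}(\vx,\vz)-q_{x,z}(\vx,\vz)\bigr)\Bigr|\le \sum_{\vx,\vz}\bigl|p_{x,z}(\vx,\vz)-q_{x,z}(\vx,\vz)\bigr|,
\end{equation*}
where the inequality is the triangle inequality pulling the absolute value inside the sum over $\vz$. The right-hand side equals $2\,\TVD{p_{x,z}}{q_{x,z}}$, which gives the claim. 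This is the standard data-processing (contraction) property of TV under marginalization.

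The main obstacle is essentially bookkeeping: the statement as written requires the expectation with respect to the $p$-marginal, so one must be careful to choose the telescoping intermediate as $q_{x|z}p_z$ (so that the conditional difference is weighted by $p_z$) rather than the symmetric choice $p_{x|z}q_z$ (which would yield the same bound but with expectation under $q_z$). Once that choice is fixed, no nontrivial inequalities beyond the triangle inequality are required, and the argument extends without modification to the continuous setting by replacing sums with integrals against a common dominating measure.
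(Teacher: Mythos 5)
Your proof is correct. Note that the paper states Lemma~\ref{lem:tv_chain_rule} as a technical lemma without supplying any proof, so there is no argument of theirs to compare against; your telescoping argument—inserting the intermediate density $q_{x|z}(\vx|\vz)\,p_z(\vz)$ so that the conditional term is weighted by $p_z$ (matching the expectation under the $p$-marginal in the statement) and the marginal term collapses via $\sum_{\vx} q_{x|z}(\vx|\vz)=1$—is the standard proof, and your marginalization step correctly gives the contraction $\TVD{p_x}{q_x}\le \TVD{p_{x,z}}{q_{x,z}}$. Both parts go through identically under either normalization of TV (with or without the factor $\tfrac12$), since every term scales the same way, so the minor normalization inconsistency in the paper's usage does not affect your argument.
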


\end{document}